\documentclass[11pt]{article}
\RequirePackage[colorlinks,citecolor=blue,urlcolor=blue,linkcolor=blue]{hyperref}
\usepackage{url}
\usepackage{svg}
\setlength{\topmargin}{-.6in}
\setlength{\oddsidemargin}{-0cm}
\setlength{\evensidemargin}{-1cm}
\setlength{\textwidth}{16.5cm}
\setlength{\textheight}{23cm}

\DeclareSymbolFont{slenderlargesymbols}{OMX}{ccex}{m}{n}
\DeclareMathSymbol{\prod}{\mathop}{slenderlargesymbols}{"51}
\usepackage{latexsym,amssymb,amsmath,amsthm,graphics,graphicx,float,psfrag, epsfig, color, enumerate, array}

\usepackage{pgfplots}
\usepackage{url,amssymb,amsmath,amsthm,amscd,paralist,bbm,wrapfig}
\usepackage{bm}
\usepackage{algorithm}
\usepackage{algorithmic} 
\usepackage{booktabs}
\usepackage{siunitx}

\usepackage{mathtools}

\usepackage[T1]{fontenc}
\usepackage[numbers]{natbib}
\tikzstyle{vertex}=[circle,black, fill=black, draw, inner sep=0pt, minimum size=6pt]

\usepackage{tikz}
\usetikzlibrary{decorations.pathreplacing}
\usepackage{xcolor}
\usepackage{mathtools}
\definecolor{cof}{RGB}{219,144,71}
\definecolor{pur}{RGB}{186,146,162}
\definecolor{greeo}{RGB}{91,173,69}
\definecolor{greet}{RGB}{52,111,72}
\pgfplotsset{compat=1.14}

\newcommand*{\email}[1]{\texttt{#1}}

\providecommand{\customgenericname}{}
\newcommand{\newcustomtheorem}[2]{%
  \newenvironment{#1}[1]
  {%
   \renewcommand\customgenericname{#2}%
   \renewcommand\theinnercustomgeneric{##1}%
   \innercustomgeneric
  }
  {\endinnercustomgeneric}
}

\newcustomtheorem{customasmp}{Assumptions}

\usepackage{latexsym,amssymb,amsmath,amsthm,graphics,graphicx,float,psfrag, epsfig, color, enumerate}

\usepackage{pgfplots}
\usepackage{url,amssymb,amsmath,amsthm,amscd,paralist,bbm,wrapfig}
\usepackage{bm}
\usepackage{textgreek}
\usepackage{algorithm}
\usepackage{algorithmic} 

\usepackage{mathtools}

\usepackage[T1]{fontenc}
\usepackage[numbers]{natbib}
\usepackage{tikz}
\usetikzlibrary{decorations.pathreplacing}
\usepackage{xcolor}
\usepackage{mathtools}

\renewcommand{\epsilon}{\varepsilon}

\newcommand{\R}{\mathbb{R}}

\newcommand{\Cfrak}{\mathfrak{C}}

\newcommand{\E}{\operatorname{\mathbb{E}}}
\newcommand{\Sbb}{\operatorname{\mathbb{S}}}
\newcommand{\vol}{\operatorname{\mathrm{vol}}}

\newcommand{\diag}{\operatorname{diag}}

\newcommand{\Scal}{\mathcal{S}}
\newcommand{\Hcal}{\mathcal{H}}

\newcommand{\Dcal}{\mathcal{D}}

\newcommand{\Lip}{\mathrm{Lip}}

\newcommand{\Acal}{\mathcal{A}}
\newcommand{\Rcal}{\mathcal{R}}

\newcommand{\Ncal}{\mathcal{N}}

\DeclareMathOperator*{\argmin}{arg\,min}

\newtheorem{thm}{Theorem}

\newtheorem{remark}{Remark}

\newtheorem{prop}{Proposition}

\newtheorem{ex}[thm]{Example}


\newtheorem{theorem}{Theorem}
\newtheorem{lemma}{Lemma}

\newtheorem{proposition}{Proposition}

\newtheorem{definition}{Definition}


\begin{document}
\title{The Star Geometry of Critic-Based Regularizer Learning}

\author{Oscar Leong\thanks{Department of Statistics and Data Science, University of California, Los Angeles (email: \email{oleong@stat.ucla.edu})}
\and Eliza O'Reilly\thanks{Department of Applied Mathematics and Statistics, Johns Hopkins University (email: \email{eoreill2@jh.edu})} 
\and Yong Sheng Soh\thanks{Department of Mathematics, National University of Singapore (email: \email{matsys@nus.edu.sg})}
}

\maketitle

\begin{abstract}
  Variational regularization is a classical technique to solve statistical inference tasks and inverse problems, with modern data-driven approaches parameterizing regularizers via deep neural networks showcasing impressive empirical performance. Recent works along these lines learn task-dependent regularizers. This is done by integrating information about the measurements and ground-truth data in an unsupervised, critic-based loss function, where the regularizer attributes low values to likely data and high values to unlikely data. However, there is little theory about the structure of regularizers learned via this process and how it relates to the two data distributions. To make progress on this challenge, we initiate a study of optimizing critic-based loss functions to learn regularizers over a particular family of regularizers: gauges (or Minkowski functionals) of star-shaped bodies. This family contains regularizers that are commonly employed in practice and shares properties with regularizers parameterized by deep neural networks. We specifically investigate critic-based losses derived from variational representations of statistical distances between probability measures. By leveraging tools from star geometry and dual Brunn-Minkowski theory, we illustrate how these losses can be interpreted as dual mixed volumes that depend on the data distribution. This allows us to derive exact expressions for the optimal regularizer in certain cases. Finally, we identify which neural network architectures give rise to such star body gauges and when do such regularizers have favorable properties for optimization. More broadly, this work highlights how the tools of star geometry can aid in understanding the geometry of unsupervised regularizer learning.
\end{abstract}


\section{Introduction}

The choice and design of regularization functionals to promote structure in data has a long history in statistical inference and inverse problems, with roots dating back to classical Tikhonov regularization \cite{TikReg}. In such problems, one is tasked with solving the following: given measurements $y \in \R^m$ of the form $y = \Acal(x_0) + \eta$ for some forward model $\Acal : \R^d \rightarrow \R^m$ and noise $\eta \in \R^m$, recover an estimate of the ground-truth signal $x_0 \in \R^d$. Typically, the challenge in such problems is that they are ill-posed, meaning that either there are no solutions, infinitely many solutions, or the problem is discontinuous in the data $y$ \cite{Arridgeetal19}. A pervasive and now classical technique is to identify regularization functionals $\Rcal : \R^d \rightarrow \R$ such that, when minimized, promote structure that is present in the ground-truth signal. 
Well-known examples of hand-crafted regularizers to promote structure include the $\ell_1$-norm to promote sparsity \cite{Daubechiesetal04, Tao2006, Donoho2006, Tibshirani94}, total variation \cite{Rudinetal92}, the nuclear norm \cite{FazelThesis, CandesRecht09, Rechtetal10}, and more generally, atomic norms \cite{Chandrasekaranetal12a, BhaskarRecht13, Tangetal13, Shahetal12, OymakHassibi}.

With the development of modern machine learning techniques, we have seen a surge of data-driven methods to directly learn regularizers instead of designing regularizers in a hand-crafted fashion. Early works along these lines include the now-mature field of dictionary learning or sparse coding (see \cite{OlshausenField96, OlshausenField97, Spielmanetal12, Aharonetal2006, Aroraetal15, Agarwaletal16, Agarwaletal17, Schnass14, Schnass16, Baraketal15, SohChandrasekaran} and the surveys \cite{MairaletalSurvey14, EladSurvey10} for more). More recently, we have seen powerful nonconvex regularizers parameterized by deep neural networks. These come in a variety of flavors, including those based on plug-and-play \cite{venkatakrishnan2013pnp, romano2017RED}, generative models \cite{Boraetal17, Handetal2018, Handetal2024, Asimetal20, Chungetal23}, and regularization functionals directly parameterized by deep neural networks \cite{Lunzetal18, Mukherjeeetal2021, Shumaylovetal2024, Goujonetal23, Kobleretal20}. 

Despite the widespread success of such learning-based regularizers, several outstanding questions remain. In particular, it is unclear what type of structure is learned by such regularizers. For example, what is the relationship between the underlying data geometry and the type of regularizer found by data-driven approaches? Is the found regularizer ``optimal'' in some meaningful sense? If not, what properties of the data distribution are lost due to structural constraints placed on the regularizer? 

In this work, we aim to tackle the above questions and others to further understand what types of regularizers are found via learning-based methods. We focus on a class of unsupervised methods that are \textit{task-dependent}, i.e., those that learn a regularizer without paired training data, but still integrate information about the measurements in the learning process. This is done by identifying a loss inspired by variational representations of statistical distances, such as Integral Probability Metrics (IPMs) or divergences, where the regularizer plays the role of the ``critic'' or test function. 

As an example, the recent adversarial regularization framework used in \cite{Lunzetal18, Mukherjeeetal2021, Shumaylovetal2024} learns a regularizer that assigns high ``likelihood'' to clean data $\Dcal_r$ and low ``likelihood'' to noisy data $\Dcal_n$ via a loss derived from a dual formulation of the $1$-Wasserstein distance. 
This framework has showcased impressive empirical performance in learning data-driven regularizers, but there is still a lack of an overarching understanding of the structure of regularizers learned. Moreover, while this Wasserstein distance interpretation has shown to be useful, it is natural to consider if other losses can be derived from this ``critic-based'' perspective to learn regularizers.


\subsection{Our contributions}

 In order to make progress on these challenges, we first fix a family of regularization functionals to analyze. We aim for this family to be (i) expressive (can describe both convex and nonconvex regularizers), (ii) exhibit properties akin to regularizers used in practice, and (iii) tractable to analyze. A family of functionals that satisfy such criteria are \textit{gauges of star bodies}. In particular, we will consider regularization functionals of the form $$\|x\|_K := \inf\{t > 0 : x \in t K\}$$ where $K$ is a \textit{star body}, i.e., a compact subset of $\R^d$ with $0 \in \mathrm{int}(K)$ such that for each $x \in \R^d \setminus \{0\}$, the ray $\{t x : t>0\}$ intersects the boundary of $K$ exactly once. Such regularizers are nonconvex for general star bodies $K$, but note that $\|\cdot\|_K$ is convex if and only if $K$ is a convex body. Any norm is the gauge of a convex body, but this class also includes nonconvex quasinorms such as the $\ell_q$-quasinorm for $q \in (0,1)$. 

Focusing on the above class of functionals, we aim to understand the structure of a regularizer $\|\cdot\|_K$ found by solving an optimization problem of the form \begin{align}
    \inf_{\|\cdot\|_K \in \mathcal{F}} \mathcal{H}\left(\|\cdot\|_K ; \Dcal_r,\Dcal_n\right) \label{eq:general_loss}
\end{align} where $\mathcal{H}(\cdot ; \Dcal_r,\Dcal_n) : \mathcal{F} \rightarrow \R$ compares the values $\|\cdot\|_K$ assigns to $\Dcal_r$ and $\Dcal_n$ and $\mathcal{F}$ is a class of functions on $\R^d$. The adversarial regularization framework of \cite{Lunzetal18} is recovered by setting $\mathcal{H}(f ; \Dcal_r,\Dcal_n) = \E_{\Dcal_r}[f(x)] - \E_{\Dcal_n}[f(x)]$ and $\mathcal{F}  = \mathrm{Lip}(1) := \{f : \R^d \rightarrow \R: f\ \text{is}\ 1\text{-Lipschitz}\}.$

Our contributions are as follows:

\begin{enumerate}
    \item Using tools from star geometry \cite{Hansen-starset-survey, geom-tom-Gardner} and dual Brunn-Minkowski theory \cite{Lutwak1975}, we prove that under certain conditions, the solution to \eqref{eq:general_loss} under the adversarial regularization framework of \cite{Lunzetal18} can be exactly characterized. In particular, we show that the objective is equivalent to a \textit{dual mixed volume} between our star body $K$ and a data-dependent star body $L_{r,n}$. This allows us to exploit known bounds on the dual mixed volume via (dual) Brunn-Minkowski theory. 
    \item We investigate new critic-based loss functions $\Hcal(\cdot ; \Dcal_r,\Dcal_n)$ in \eqref{eq:general_loss} inspired by variational representations of divergences for probability measures. We specifically analyze $\alpha$-divergences and show how they can give rise to loss functionals with dual mixed volume interpretations. We also experimentally show that such losses can be competitive for learning regularizers in a simple denoising setting.
    \item We conclude with results showcasing when these star body regularizers exhibit useful optimization properties, such as weak convexity, as well as analyzing what types of neural network architectures give rise to star body gauges. 
\end{enumerate}

This paper is organized as follows: Section \ref{sec:main-results} analyzes optimal adversarial regularizers using star geometry. We establish a general existence result in Section \ref{sec:gen-existence} and use dual mixed volumes to precisely characterize the optimal star body regularizer under certain assumptions in Section \ref{sec:adv-reg-results}. Visual examples are provided in Section \ref{sec:visual-examples}. Section \ref{sec:alt-losses} introduces new critic-based losses for learning regularizers inspired by $\alpha$-divergences. An empirical comparison between neural network-based regularizers learned using these losses and the adversarial loss is presented in Section \ref{sec:mnist-exps}. Section \ref{sec:computation} examines computational properties of star body regularizers, such as beneficial properties for optimization and relevant neural network architectures. We conclude with limitations and future work in Section \ref{sec:discussion}.


\subsection{Related work}

We discuss the broader literature on learning-based regularization in Section \ref{appx:related-work} of the appendix and focus on optimal regularization here. To our knowledge, there are few works that analyze the optimality of a regularizer for a given dataset. The authors of the original adversarial regularization framework of \cite{Lunzetal18} analyzed the case when $\Dcal_r$ is supported on a manifold and $\Dcal_n$ is related to $\Dcal_r$ via the push-forward of the projection operator onto the manifold. They showed that in this case the distance function to the manifold is an optimal regularizer, but uniqueness does not hold (i.e., other regularizers could be optimal as well). Our work complements such results by analyzing the structure of the optimal regularizer for a variety of $\Dcal_r$ and $\Dcal_n$ which may not be related in this way. In \cite{Albertietal21} the authors analyze the optimal Tikhonov regularizer in the infinite-dimensional Hilbert space setting, and show that the optimal regularizer is independent of the forward model and only depends on the mean and covariance of the data distribution.

The two most closely related papers to ours are \cite{Traonmilinetal-21} and \cite{Leongetal2022}. In \cite{Traonmilinetal-21}, the authors aim to characterize the optimal convex regularizer for linear inverse problems. Several notions of optimality for convex regularizers are introduced (dubbed compliance measures) and the authors establish that canonical low-dimensional models, such as the $\ell_1$-norm for sparsity, are optimal for sparse recovery under such compliance measures. In \cite{Leongetal2022}, similarly to this work, the authors analyze the optimal regularizer amongst the family of star body gauges for a given dataset. They also leverage dual Brunn-Minkowski theory to show that the optimal regularizer is induced by a data-dependent star body, whose radial function depends on the density of the data distribution. Interestingly, these results also characterize data distributions for which convex regularization is optimal and they provide several examples. The present work is novel in relation to \cite{Leongetal2022} for several reasons. First, our work introduces novel theory and a new framework for understanding critic-based regularizer learning, addressing a significant gap in existing theoretical foundations for unsupervised regularizer learning. This new setting also brings novel technical challenges that were not present in \cite{Leongetal2022}. For example, the losses in Section \ref{sec:alt-losses} exhibit more complicated dependencies on the star body of interest, leading to the need for new analysis. Such losses are also experimentally analyzed in Section \ref{app:mnist-exps}. Finally, we present new results that are relevant to downstream applications of star body regularizers in Section \ref{sec:computation}. 
\subsection{Notation} \label{sec:preliminaries}

In this section, we provide some brief background on star geometry and define notation used in the main body of the paper. For further details, please see Section \ref{app:primer} in the appendix and the sources \cite{conv-bodies-Schneider, geom-tom-Gardner, Hansen-starset-survey} for more. We say that a closed set $K \subseteq \R^d$ is \textit{star-shaped} (with respect to the origin) if for all $x \in K$, we have $[0,x] \subseteq K$ where, for two points $x,y \in \R^d$, we define the line segment $[x,y] := \{(1-t)x + t y : t \in [0,1]\}.$ $K$ is a \textit{star body} if it is a compact star-shaped set such that for every $x \neq 0$, the ray $R_x := \{t x : t > 0\}$ intersects the boundary of $K$ exactly once. Equivalently, $K$ is a star body if its \textit{radial function} $\rho_K$ is positive and continuous over the unit sphere $\mathbb{S}^{d-1}$, where $\rho_K$ is defined as $
\rho_{K} (x) := \sup \{ t > 0 : t \cdot x \in K \}.$
It follows that the gauge function of $K$ satisfies $\|x\|_K = 1/ \rho_{K} (x)$ for all $x \in \mathbb{R}^{d}$ such that $x \neq 0$. Let $\Scal^d$ to be the space of all star bodies on $\R^d$. For $K, L \in \mathcal{S}^d$, 
it is easy to see that $K \subseteq L$ if and only if $\rho_K \leqslant \rho_L$. The \emph{kernel} of a star body $K$ is the set of all points for which $K$ is star-shaped with respect to, i.e., $\ker(K) := \{x \in K : [x,y] \subseteq K,\ \forall y \in K\}$. Note that $\ker(K)$ is a convex subset of $K$ and $\ker(K) = K$ if and only if $K$ is convex. For a parameter $\gamma > 0$, we define the following subset of $\mathcal{S}^d$ consisting of \emph{well-conditioned} star bodies with nondegenerate kernels:
$\mathcal{S}^d(\gamma) := \{K \in \mathcal{S}^d : \gamma B^d \subseteq \ker(K)\}$ where $B^d := \{x \in \R^d : \|x\|_{\ell_2} \leqslant 1\}$. For two distributions $P$ and $Q$, let $P \circledast Q$ denote their convolution, i.e., $P \circledast Q$ is the distribution of $X + Y$ where $X \sim P$ and $Y \sim Q$. Finally, for a function $f$ and a measure $P$, let $f_{\#}(P)$ denote the push-forward measure of $P$ under $f$, i.e., for all measurable subsets $A$, we have $f_{\#}(P)(A)  := P(f^{-1}(A)).$

\section{Adversarial Star Body Regularization} \label{sec:main-results}
To illustrate our tools and results, we first consider regularizer learning under the adversarial regularization framework of \cite{Lunzetal18}. Given two distributions $\Dcal_r$ and $\Dcal_n$ on $\R^d$ and setting $\mathcal{H}(f ; \Dcal_r,\Dcal_n) = \E_{\Dcal_r}[f(x)] - \E_{\Dcal_n}[f(x)]$ and $\mathcal{F} := \mathrm{Lip}(1)$ in \eqref{eq:general_loss}, we aim to understand minimization of the functional $F(K;\Dcal_r,\Dcal_n) := \E_{\Dcal_r}[\|x\|_K] - \E_{\Dcal_n}[\|x\|_K]$ over all star bodies $K$ such that $x \mapsto \|x\|_K$ is $1$-Lipschitz. A result due to \cite{Rubinov2000} shows that $\|\cdot\|_K$ is $1$-Lipschitz if and only if the unit ball $B^d \subseteq \mathrm{ker}(K)$. Here and throughout this paper, it will be useful to think of $\Dcal_r$ as the distribution of ground-truth, clean data while $\Dcal_n$ is a user-defined distribution describing noisy, undesired data. Using the notation from Section \ref{sec:preliminaries}, we are interested in analyzing \begin{align}
    \inf_{K \in \Scal^d(1)} F(K; \Dcal_r,\Dcal_n). \label{eq:general-star-body-prob}
\end{align} 

\subsection{Existence of minimizers} \label{sec:gen-existence}

As the problem \eqref{eq:general-star-body-prob} requires minimizing a functional over a structured subset of the infinite-dimensional space of star bodies, even basic questions such as the existence of minimizers are unclear. Despite this, one can exploit tools in the star geometry literature to obtain guarantees regarding this problem. The proof of this result exploits Lipschitz continuity of the objective functional and local compactness properties of $\Scal^d(\gamma)$, proved in \cite{Leongetal2022}, akin to the celebrated Blaschke's Selection Theorem from convex geometry \cite{conv-bodies-Schneider}. We defer the proof to the appendix in Section \ref{sec:appx-existence-proof}.

\begin{theorem} \label{thm:existence-of-minimizers}
    For any two distributions $\Dcal_r$ and $\Dcal_n$ on $\R^d$, we have that $$F(K ; \Dcal_r,\Dcal_n) \geqslant W_1(\Dcal_r,\Dcal_n)\ \text{for all}\ K \in \Scal^d(1)$$ where $W_1(\cdot,\cdot)$ is the $1$-Wasserstein distance between two distributions. Moreover, if $\E_{\Dcal_i}[\|x\|_{\ell_2}] < \infty$ for each $i = r,n$, then we always have that minimizers exist: $$\argmin_{K \in \Scal^d(1)} F(K ; \Dcal_r,\Dcal_n) \neq \emptyset.$$
\end{theorem}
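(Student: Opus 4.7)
The plan is to handle the two assertions in turn, with the common observation that $\|\cdot\|_K$ is 1-Lipschitz whenever $K \in \Scal^d(1)$ by the cited result of Rubinov.

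For the $W_1$ inequality, the key tool is the Kantorovich--Rubinstein dual formulation
\[
W_1(\mu,\nu) \;=\; \sup_{f \in \mathrm{Lip}(1)} \bigl(\E_\mu[f] - \E_\nu[f]\bigr).
\]
Since $\|\cdot\|_K$ is itself 1-Lipschitz, instantiating this variational formula at $f = \|\cdot\|_K$ (and, noting that $-\|\cdot\|_K$ is likewise in $\mathrm{Lip}(1)$, at its negation as well) immediately produces the stated relation between $F(K;\Dcal_r,\Dcal_n)$ and $W_1(\Dcal_r,\Dcal_n)$. This step is essentially a one-line application of duality and requires no structure on $K$ beyond what is already guaranteed by $B^d \subseteq \ker(K)$.

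For the existence claim, the moment hypothesis $\E_{\Dcal_i}[\|x\|_{\ell_2}] < \infty$ plays two roles: it makes $F(K;\Dcal_r,\Dcal_n)$ finite for every $K \in \Scal^d(1)$ --- since $B^d \subseteq K$ forces the pointwise bound $\|x\|_K \leqslant \|x\|_{\ell_2}$ --- and it furnishes an integrable dominating function for exchanging limits with integrals. My plan is then to pick a minimizing sequence $(K_n) \subset \Scal^d(1)$ with $F(K_n; \Dcal_r, \Dcal_n) \to \inf_{K \in \Scal^d(1)} F(K; \Dcal_r, \Dcal_n)$, extract a convergent subsequence using the local compactness result for $\Scal^d(\gamma)$ from \cite{Leongetal2022} (the star-body analogue of Blaschke's Selection Theorem), and then pass to the limit in the functional via dominated convergence with majorant $\|x\|_{\ell_2}$ to conclude $F(K_{n_k}) \to F(K_*)$ for the subsequential limit $K_* \in \Scal^d(1)$.

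The main obstacle is that $\Scal^d(1)$ itself is not globally compact: radial functions $\rho_{K_n}$ can blow up in some directions, which is precisely what the Blaschke-type selection principle forbids. The natural remedy is to observe that if $\rho_{K_n}(u) \to \infty$ along some direction $u$, then $\|x\|_{K_n}$ decays to zero for $x$ aligned with $u$, so both expectations $\E_{\Dcal_r}[\|x\|_{K_n}]$ and $\E_{\Dcal_n}[\|x\|_{K_n}]$ are pulled toward zero and $F(K_n)$ loses the room it had to undercut the infimum. Making this rigorous --- for example by truncating to $K_n \cap R\,B^d$ for a sufficiently large $R$ chosen so as not to raise $F$ appreciably, or by arguing a priori that a minimizing sequence can be taken with uniformly bounded radii so that the selection principle applies verbatim --- is the technical core of the existence proof.
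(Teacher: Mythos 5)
Your first half is fine and is essentially the paper's argument: both reduce to the Kantorovich--Rubinstein dual formula together with the fact (via Rubinov) that $B^d \subseteq \ker(K)$ makes $\|\cdot\|_K$ $1$-Lipschitz. One caution: instantiating the dual formula at $f = \pm\|\cdot\|_K$ yields $|F(K;\Dcal_r,\Dcal_n)| \leqslant W_1(\Dcal_r,\Dcal_n)$, i.e.\ the lower bound $F \geqslant -W_1$; that is what the duality actually produces (and what the paper's own chain of inequalities amounts to, modulo a sign), so you should state explicitly which inequality you are claiming rather than saying it ``immediately produces the stated relation.''

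The existence half has a genuine gap exactly where you place ``the technical core,'' and neither of your proposed remedies closes it. The heuristic is wrong as stated: if $\rho_{K_n}$ blows up only along some directions, then $\|x\|_{K_n}$ is driven to zero only for $x$ on those rays; elsewhere you retain nothing better than $\|x\|_{K_n} \leqslant \|x\|_{\ell_2}$, so neither expectation is ``pulled toward zero'' and $F(K_n)$ need not lose any room. (Think of $K_n$ equal to the star hull of $B^d \cup \{\pm n e_1\}$: the bodies are unbounded as a family, yet $F(K_n)$ converges to a nontrivial limit.) The truncation remedy also fails at a fixed radius: since $\|x\|_{K \cap R B^d} = \max\{\|x\|_K, \|x\|_{\ell_2}/R\}$, truncation changes each expectation by at most $\frac{1}{R}\E_{\Dcal_i}[\|x\|_{\ell_2}]$, a fixed nonzero amount, so $(K_n \cap R B^d)$ is a minimizing sequence only for an $O(1/R)$-perturbed problem, and letting $R = R_n \to \infty$ reintroduces the unboundedness you were trying to remove. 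The paper's proof handles this step by asserting that an unbounded minimizing sequence admits a subsequence with $K_{i_\ell} \supseteq \ell B^d$, whence $F(K_{i_\ell}) \to 0$, contradicting the strict positivity of the Wasserstein lower bound; whatever one makes of that dichotomy, your write-up needs \emph{some} mechanism producing a uniform a priori radius bound on a minimizing sequence before the Blaschke-type selection theorem of \cite{Leongetal2022} can be applied, and the proposal as written does not supply one.
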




\subsection{Minimization via dual Brunn-Minkowski theory} \label{sec:adv-reg-results}

We now aim to understand the structure of minimizers to the above problem. To do this, we first show that the objective in \eqref{eq:general-star-body-prob} can be interpreted as the dual mixed volume between $K$ and a data-dependent star body. To begin, we start with a definition.
\begin{definition}[Definition 2* in \cite{Lutwak1975}]
    Given two star bodies $K,L \in \Scal^d$, the $i$-th dual mixed volume between $L$ and $K$ for $i \in \R$ is given by $$\tilde{V}_{i}(L,K) := \frac{1}{d}\int_{\Sbb^{d-1}}\rho_L(u)^{d-i} \rho_K(u)^{i}\mathrm{d}u.$$
\end{definition} One can think of dual mixed volumes as functionals that measure the size of a star body $K$ relative to another star body $L$. Note that for all $i$,  $\tilde{V}_i(K, K) = \frac{1}{d}\int_{\mathbb{S}^{d-1}} \rho_K(u)^{d} \mathrm{d}u = \mathrm{vol}_d(K)$ is the usual $d$-dimensional volume of $K$. Of particular interest to us will be the case $i = -1$.

\begin{wrapfigure}{r}{0.4\textwidth}
    \centering
    \includegraphics[width=0.4\textwidth]{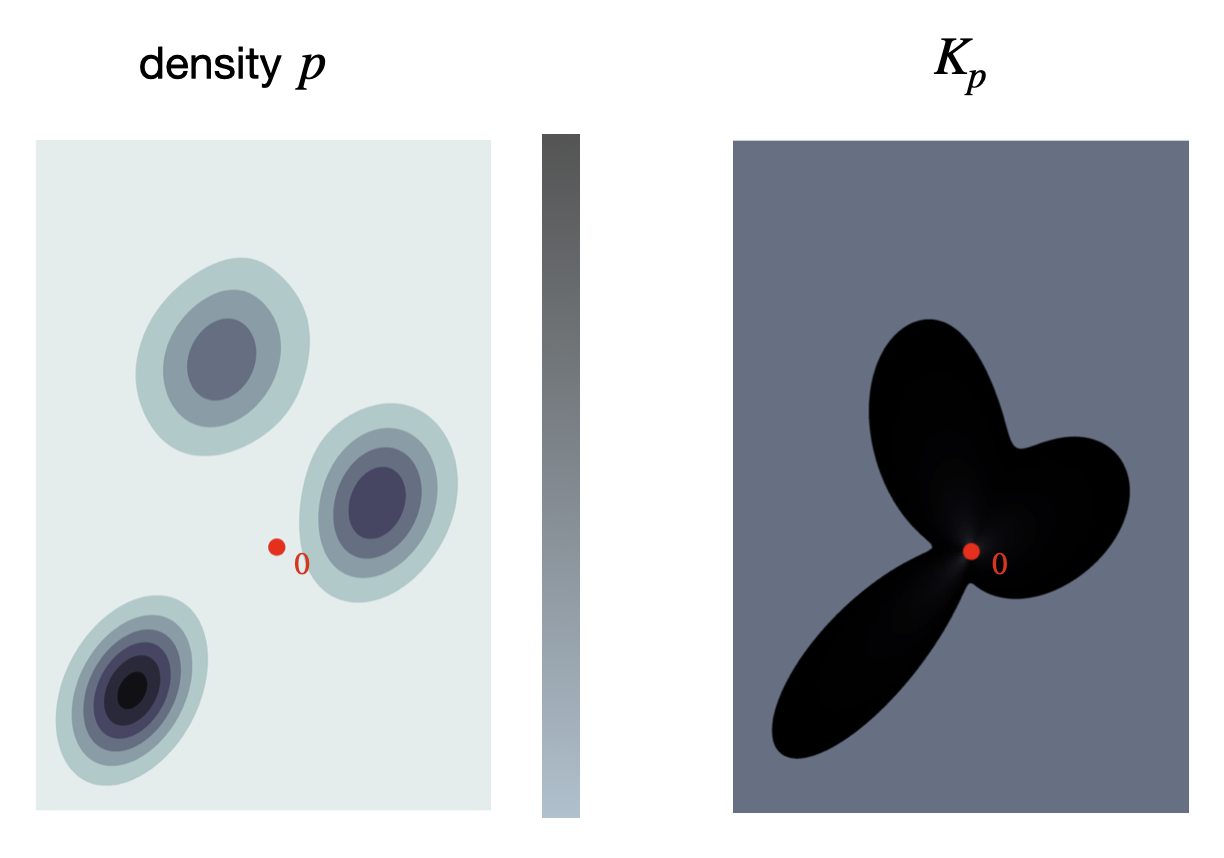}
    \caption{
    (Left) Contours of the Gaussian mixture model density $p$. (Right) The star body $K_p$ induced by the radial function \eqref{eq:rho_P}.}
    \label{fig:radial-example}
\end{wrapfigure}
To see how our main objective can be interpreted as a dual mixed volume, we need to be able to summarize our distributions $\Dcal_r$ and $\Dcal_n$ in such a way that can naturally be related to the gauge $\|\cdot\|_K$. Since the gauge is positively homogenous, it is characterized by its behavior on the sphere $\mathbb{S}^{d-1}$. Hence, given a distribution, we require a ``summary statistic'' that describes the distribution in each unit direction $u \in \mathbb{S}^{d-1}.$ Since star bodies are precisely defined by the distance of the origin to their boundary in each unit direction, we ideally would like to summarize the distribution in a similar fashion. For a distribution $\Dcal$ with density $p$ on $\R^d$, define the map \begin{align}
    \rho_p(u) := \left(\int_{0}^{\infty} t^d p(tu) \mathrm{d}t\right)^{1/(d+1)},\ u \in \Sbb^{d-1}. \label{eq:rho_P}
\end{align} This function measures the average mass the distribution accrues in each unit direction and how far this mass lies from the origin. More precisely, one can show \cite{Leongetal2022} that $\rho_p^{d+1}$ is the density of the measure $\mu_{\Dcal}(\cdot) := \mathbb{E}_{\Dcal}\left[\|x\|_{\ell_2} \mathbf{1}_{\{x/\|x\|_{\ell_2} \in \cdot\}}\right].$ If the map \eqref{eq:rho_P} is positive and continuous over the unit sphere, it defines a unique data-dependent star body. In Figure \ref{fig:radial-example}, we give an illustrative example showing how the geometry of the data distribution relates to the data-dependent star body. Here, the data distribution is given by a Gaussian mixture model with $3$ Gaussians.



We now aim to understand the structure of minimizers of the functional $K \mapsto F(K; \Dcal_r,\Dcal_n).$ As we will show in our main result, the map \eqref{eq:rho_P} aids in interpreting the functional via a dual mixed volume. Extrema of such dual mixed volumes have been the object of study in convex and star geometry for some time. We cite a seminal result due to Lutwak.

\begin{theorem}[Special case of Theorem 2 in \cite{Lutwak1975}] \label{thm:lutwak-dmv} For star bodies $K,L \in \Scal^d$, we have
\[\tilde{V}_{-1}(L, K)^d \geqslant \vol_d(L)^{d + 1}\vol_d(K)^{-1},\]
and equality holds if and only if $K$ and $L$ are dilates, i.e., there exists a $\lambda > 0$ such that $K = \lambda L$.
\end{theorem}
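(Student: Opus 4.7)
The plan is to prove the inequality by a direct application of H\"older's inequality applied pointwise to the radial functions on the unit sphere, exploiting the fact that both $\tilde V_{-1}(L,K)$ and the volumes $\vol_d(K), \vol_d(L)$ are expressible as integrals of products of $\rho_K$ and $\rho_L$ to various powers. Concretely, writing $\vol_d(L)=\tilde V_0(L,L) = \tfrac{1}{d}\int_{\mathbb{S}^{d-1}} \rho_L(u)^d\,du$, the starting identity I would use is the trivial factorization
\[
\rho_L(u)^d \;=\; \left(\frac{\rho_L(u)^{d+1}}{\rho_K(u)}\right)^{\!d/(d+1)} \cdot \, \rho_K(u)^{\,d/(d+1)},
\]
which is valid because $\rho_K, \rho_L$ are positive and continuous on $\mathbb{S}^{d-1}$ by the star body hypothesis, so no measurability or integrability issues arise.

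Next I would integrate both sides over $\mathbb{S}^{d-1}$ and apply H\"older's inequality with the conjugate exponents $p = (d+1)/d$ and $q = d+1$, which satisfy $1/p+1/q=1$ and are chosen precisely so that raising the two factors above to powers $p$ and $q$ respectively produces the integrands of $\tilde V_{-1}(L,K)$ and $\vol_d(K)$. This yields
\[
d \,\vol_d(L) \;\leqslant\; \bigl(d\,\tilde V_{-1}(L,K)\bigr)^{d/(d+1)}\bigl(d\,\vol_d(K)\bigr)^{1/(d+1)}.
\]
Raising both sides to the $(d+1)$-th power, the factors of $d$ cancel cleanly and one obtains exactly $\vol_d(L)^{d+1} \leqslant \tilde V_{-1}(L,K)^d \,\vol_d(K)$, which rearranges to the claimed inequality.

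Finally, for the equality case, I would invoke the standard equality condition for H\"older's inequality: equality holds if and only if the two integrand factors are proportional almost everywhere, i.e.\ there exists a constant $c > 0$ with $\rho_L(u)^{d+1}/\rho_K(u) = c\,\rho_K(u)^d$ for almost every $u \in \mathbb{S}^{d-1}$, equivalently $\rho_L = c^{1/(d+1)}\rho_K$. Since both radial functions are continuous on $\mathbb{S}^{d-1}$, the equality promotes from almost-everywhere to everywhere, and translating back via the one-to-one correspondence between star bodies and their radial functions gives $L = \lambda K$ with $\lambda = c^{1/(d+1)}$, which is exactly the dilation condition.

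I do not foresee a serious obstacle: the main ``choice'' is the exponent pair $(p,q) = ((d+1)/d,\, d+1)$, and once that is identified the rest is bookkeeping. The only subtle point is verifying that the equality analysis survives passing from almost-everywhere to pointwise equality of the radial functions, but this follows immediately from continuity and positivity of $\rho_K, \rho_L$ on the sphere, built into the definition of a star body.
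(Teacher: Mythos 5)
Your proof is correct, and it is essentially the standard argument: the paper itself does not prove this inequality but cites it directly from Lutwak's 1975 paper, where (as for the more general dual mixed volume inequalities recalled in the paper's appendix) the proof is precisely a pointwise H\"older argument with conjugate exponents $p=(d+1)/d$ and $q=d+1$, exactly as you carried out. Your treatment of the equality case is also correct; the only small point worth making explicit is that the H\"older equality condition gives linear dependence of $\rho_L^{d+1}/\rho_K$ and $\rho_K^d$, and since both are strictly positive (non-vanishing) the proportionality constant is necessarily positive, after which continuity of $\rho_K,\rho_L$ upgrades a.e.\ equality to identity on $\mathbb{S}^{d-1}$ and the radial-function--star-body bijection yields the dilation.
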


Armed with this inequality, we show that under certain conditions, we can guarantee the existence and uniqueness of minimizers of the above map. Note that optimal solutions are always defined up to scaling. To remove this degree of freedom, we impose an additional constraint on the volume of the solution. We chose a unit-volume constraint for simplicity, but changing the constraint would simply scale the optimal solution. 

\begin{theorem} \label{thm:main-adv-star-body-thm}
    Suppose $\Dcal_r$ and $\Dcal_n$ are distributions on $\R^d$ that are absolutely continuous with respect to the Lebesgue measure with densities $p_r$ and $p_n$, respectively. Suppose $\rho_{p_r}$ and $\rho_{p_n}$ as defined in \eqref{eq:rho_P} are continuous over the unit sphere and that $\rho_{p_r}(u) > \rho_{p_n}(u) \geqslant 0$ for all $u \in \Sbb^{d-1}$. Then, there exists a star body $L_{r,n} \in \Scal^d$ such that the unique solution to the problem $$\min_{K \in \Scal^d : \vol_d(K) = 1} \E_{\Dcal_r}[\|x\|_K] - \E_{\Dcal_n}[\|x\|_K]$$ is given by $K_* := \vol_d(L_{r,n})^{-1/d} L_{r,n}.$ If $B^d \subseteq \mathrm{ker}(K_*)$, then $K_*$ is, in fact, the unique solution to \eqref{eq:general-star-body-prob} with the additional constraint $\vol_d(K) = 1$. 
\end{theorem}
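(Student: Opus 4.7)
The plan is to rewrite the objective in polar coordinates, identify it as a dual mixed volume of $K$ against a suitably constructed data-dependent star body $L_{r,n}$, and then apply Lutwak's inequality (the cited Theorem of \cite{Lutwak1975}) to pin down the minimizer up to scaling, which the unit-volume constraint removes.

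First, I would use the polar decomposition $x = tu$ with $t>0$, $u \in \Sbb^{d-1}$ and the identity $\|tu\|_K = t/\rho_K(u)$ to compute, for either $i = r$ or $i = n$,
\begin{equation*}
    \E_{\Dcal_i}[\|x\|_K] \;=\; \int_{\Sbb^{d-1}} \rho_K(u)^{-1} \int_0^\infty t^d p_i(tu)\,\mathrm{d}t\,\mathrm{d}u \;=\; \int_{\Sbb^{d-1}} \rho_K(u)^{-1} \rho_{p_i}(u)^{d+1}\,\mathrm{d}u,
\end{equation*}
by the definition of $\rho_{p_i}$ in \eqref{eq:rho_P}. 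Subtracting the two expressions and using the strict pointwise inequality $\rho_{p_r}(u) > \rho_{p_n}(u) \geqslant 0$, I would define
\begin{equation*}
    \rho_{L_{r,n}}(u) \;:=\; \bigl(\rho_{p_r}(u)^{d+1} - \rho_{p_n}(u)^{d+1}\bigr)^{1/(d+1)}, \qquad u \in \Sbb^{d-1}.
\end{equation*}
Positivity is immediate from the hypothesis, and continuity follows from the assumed continuity of $\rho_{p_r}$ and $\rho_{p_n}$, so $\rho_{L_{r,n}}$ is the radial function of a genuine star body $L_{r,n} \in \Scal^d$. Substituting back yields
\begin{equation*}
    F(K;\Dcal_r,\Dcal_n) \;=\; \int_{\Sbb^{d-1}} \rho_{L_{r,n}}(u)^{d+1} \rho_K(u)^{-1}\,\mathrm{d}u \;=\; d\cdot \tilde{V}_{-1}(L_{r,n},K),
\end{equation*}
exhibiting the objective as a $(-1)$st dual mixed volume.

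Next I would invoke Theorem of Lutwak (the cited Theorem 2 in \cite{Lutwak1975}) to obtain
\begin{equation*}
    \tilde{V}_{-1}(L_{r,n},K)^{d} \;\geqslant\; \vol_d(L_{r,n})^{d+1}\,\vol_d(K)^{-1},
\end{equation*}
with equality iff $K$ is a dilate of $L_{r,n}$. Imposing the constraint $\vol_d(K) = 1$ eliminates the scaling ambiguity and forces $K_* = \vol_d(L_{r,n})^{-1/d} L_{r,n}$, proving both existence and uniqueness of the minimizer over $\{K \in \Scal^d : \vol_d(K) = 1\}$.

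For the final claim, I would observe that $\Scal^d(1) \cap \{\vol_d(K) = 1\}$ is a subset of $\Scal^d \cap \{\vol_d(K) = 1\}$, so if the unconstrained-conditioning minimizer $K_*$ happens to satisfy $B^d \subseteq \ker(K_*)$, it is automatically feasible for the Lipschitz-restricted problem \eqref{eq:general-star-body-prob} under the unit-volume constraint, and feasibility of a global minimizer of a superset immediately implies optimality in the subset. The main technical care lies in step one, namely in carefully justifying the polar integration (Fubini applies since $\rho_{p_i}^{d+1}$ is integrable on the sphere by absolute continuity and the finiteness assumption implicit in the dual mixed volume expression) and in verifying that $L_{r,n}$ is truly a star body rather than merely a star-shaped set, which is the subtle place where the strict inequality $\rho_{p_r} > \rho_{p_n}$ is used essentially; once this reformulation is in place, Lutwak's inequality does the rest of the work.
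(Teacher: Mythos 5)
Your proposal is correct and follows essentially the same route as the paper's proof: polar decomposition of the expectations to express each $\E_{\Dcal_i}[\|x\|_K]$ as an integral of $\rho_{p_i}^{d+1}\rho_K^{-1}$ over the sphere, definition of $\rho_{L_{r,n}} := (\rho_{p_r}^{d+1}-\rho_{p_n}^{d+1})^{1/(d+1)}$ as the radial function of the data-dependent star body $L_{r,n}$, identification of the objective with $d\,\tilde V_{-1}(L_{r,n},K)$, and application of Lutwak's dual mixed volume inequality with its equality condition. The concluding observation, that the second claim follows because a minimizer over the larger constraint set remains optimal over any subset to which it belongs, is also the intended argument.
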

\begin{proof}[Proof Sketch]
    One can prove that for each $i =r,n$, $\E_{\Dcal_i}[\|x\|_K] = \int_{\mathbb{S}^{d-1}} \rho_{p_i}(u)^{d+1}\rho_K(u)\mathrm{d}u$. Then since $\rho_{p_r} > \rho_{p_n} \geqslant 0$ on the unit sphere, we have that the map $\rho_{r,n}(u):= (\rho_{p_r}(u)^{d+1} - \rho_{p_n}(u)^{d+1})^{1/(d+1)}$ is positive and continuous on $\mathbb{S}^{d-1}$. We can then prove that it is the radial function of a new data-dependent star body $L_{r,n}$. Combining the previous two observations yields \begin{align*}
        \E_{\Dcal_r}[\|x\|_K] - \E_{\Dcal_n}[\|x\|_K] &  = d \tilde{V}_{-1}(L_{r,n}, K).
    \end{align*} Applying Theorem \ref{thm:lutwak-dmv} obtains the final result. We defer the full proof to Section \ref{sec:appx-existence-proof}.
\end{proof}

\begin{remark}[Uniqueness guarantees]
    We highlight that in our result we are able to obtain uniqueness of the solution everywhere. This is notably different than previous guarantees in the Optimal Transport literature, where the optimal transport potential for the $1$-Wasserstein loss is not unique, but can be shown to be unique $\Dcal_n$-almost everywhere \cite{Milneetal22}. A significant reason for this is that our optimization problem is over the family of star body gauges, which have additional structure over the general class of $1$-Lipschitz functions. In particular, star bodies are uniquely defined by their radial functions and, hence, dual mixed volumes can exactly specify them (see \cite{Lutwak1975} for more details).
\end{remark}

\begin{remark}[Distributional assumptions]
    Note that the conditions of this theorem require that the density-induced map $\rho_p(\cdot)$ must be a valid radial function (i.e., positive and continuous over the unit sphere). Many distributions satisfy these assumptions. For example, if the distribution's density is of the form $p(x) = \psi(\|x\|_L^q)$ where $\int_0^{\infty} t^d \psi(t^q) \mathrm{d}t < \infty$, $q > 0$, and $L \in \Scal^d$, then the map $\rho_p(\cdot)$ is a valid radial function. See \cite{Leongetal2022} for more examples of distributions that satisfy these assumptions. 
\end{remark}
\begin{remark}[Finite-data regime]
    While the conditions of this theorem would not apply to the case when $\Dcal_r$ and $\Dcal_n$ are empirical distributions, we prove in the appendix (Section \ref{app:stability}) that these results are stable in the sense that as the amount of available data goes to infinity, the solutions in the finite-data regime converge (up to a subsequence) to a population minimizer. The proof exploits tools from variational analysis, such as $\Gamma$-convergence \cite{Braides-Gamma-Handbook}.
\end{remark}

\begin{remark}[Implications for inverse problems]
    In the original framework of \cite{Lunzetal18}, the authors considered $\Dcal_n = \Acal^{\dagger}_{\#}(\Dcal_y)$ where $\Acal$ is a linear forward operator in an inverse problem. Since inverse problems are typically under-constrained, such a distribution would be singular and not satisfy the assumptions of the theorem. One can solve this, however, by convolving $\Dcal_n$ with a Gaussian to ensure it has full measure.
\end{remark}

\begin{remark}[Reweighting the objective]
    One can always reweight the objective so that the assumptions of the theorem are satisfied. This is due to positive homogeneity of the gauge, which guarantees $\E_{\Dcal}[\|\lambda x\|_K] = \lambda \E_{\Dcal}[\|x\|_K]$ for any distribution $\Dcal$. In particular, if $\rho_{p_r} > \rho_{p_n}$ is not satisfied, one can choose a scaling $\lambda$ so that $\rho_{p_r} > \lambda^{1/(d+1)}\rho_{p_n}$. Inspecting the proof of Theorem \ref{thm:main-adv-star-body-thm}, one can see that the result goes through when applied to the objective $\E_{\Dcal_r}[\|x\|_K] - \lambda \E_{\Dcal_n}[\|x\|_K]$. 
\end{remark}

\subsection{Examples} \label{sec:visual-examples}

We now provide examples of optimal regularizers via different choices of $\Dcal_r$ and $\Dcal_n$. Further examples can be found in the appendix in Section \ref{appx:more-examples}. Throughout these examples, the star body induced by $\Dcal_r$ and $\Dcal_n$ is denoted by $L_r$ and $L_n$, respectively.

\begin{ex}[Gibbs densities with $\ell_1$- and $\ell_2$-norm energies] \label{ex:ell1-ell2-ball-ex}

Suppose the distributions $\Dcal_r$ and $\Dcal_n^{\alpha}$ for some $\alpha \geqslant 1$ are given by Gibbs densities with $\ell_1$- and $\ell_2$-norm energies, respectively: $$p_r(x) = e^{-\|x\|_{\ell_1}}/(c_d \vol_d(B_{\ell_1}))\ \text{and}\ p_n^{\alpha}(x) = e^{-\alpha \sqrt{d}\|x\|_{\ell_2}} / (c_d \vol_d(B^d /(\alpha \sqrt{d})).$$ Here $c_d := \Gamma(d+1)$ where $\Gamma(\cdot)$ is the usual Gamma function. The star bodies induced by $p_r$ and $p_n^{\alpha}$ are dilations of the $\ell_1$-ball and $\ell_2$-ball, respectively. Denote these star bodies by $L_r$ and $L_n^{\alpha}$, respectively. Then, $L_{r,n}^{\alpha}$ is defined by the difference of radial functions via $$\rho_{L_{r,n}^{\alpha}}(u) := \left(c_r\|x\|_{\ell_1}^{-(d+1)} - c_n(\alpha \sqrt{d}\|x\|_{\ell_2})^{-(d+1)}\right)^{1/(d+1)}$$ where $c_r := \int_0^{\infty} t^d \exp(-t) \mathrm{d}t/(c_d\vol_d(B_{\ell_1}))$ and $c_n := \int_0^{\infty} t^d \exp(-t)\mathrm{d}t/(c_d\vol_d(B^d/(\alpha \sqrt{d}))).$ We visualize the geometry of $L_{r,n}^{\alpha}$ for different values of $\alpha \geqslant 1$ in Figure \ref{fig:all-alphas-ell1-ell2-ball-example} for $d = 2$. We see that for directions such that the boundaries of $L_r$ and $L_n^{\alpha}$ are far apart (namely, the $\pm e_i$ directions), the optimal regularizer assigns small values. This is because such directions are considered highly likely under the distribution $\Dcal_r$ and less likely under $\Dcal_n^{\alpha}$. However, for directions in which the boundaries are close (e.g., in the $[0.5,0.5]$-direction), the regularizer assigns high values, since such directions are likely under the noise distribution $\Dcal_n^{\alpha}$. This aligns with the aim of the objective function -- namely, that a regularizer should assign low values to likely directions and high values to unlikely ones.
\end{ex}

 \begin{figure}
    \centering
    \includegraphics[width=\textwidth]{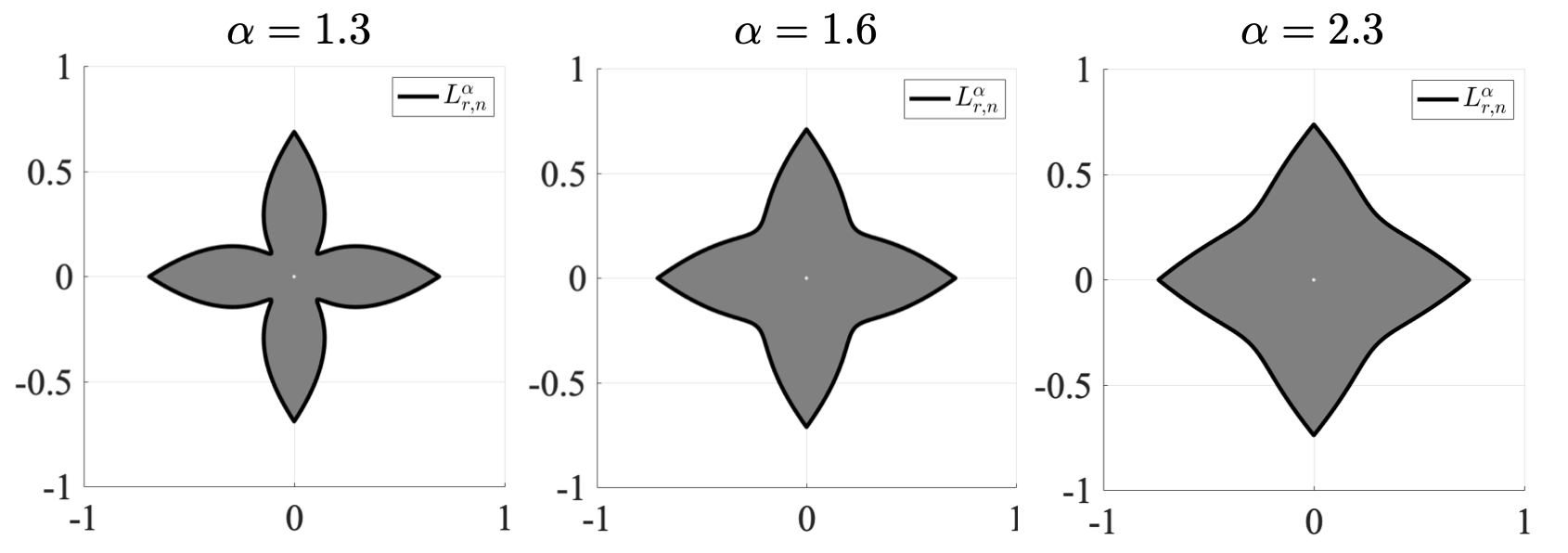}
    \caption{
   We plot $L_{r,n}^{\alpha}$ from Example \ref{ex:ell1-ell2-ball-ex} for different values of $\alpha$: (Left) $\alpha = 1.3$, (Middle) $\alpha = 1.6$, and (Right) $\alpha = 2.3$. A full figure with $L_r$ and $L_n^{\alpha}$ can be found in Section \ref{appx:more-examples}.}
    \label{fig:all-alphas-ell1-ell2-ball-example}
\end{figure}

\begin{ex}[Toy inverse problem] \label{ex:toy-inv-prob-ex}

Consider the following example where $x \sim \Dcal_r := \Ncal(0,\Sigma)$ where $\Sigma \in \R^{d\times d}$ and we have measurements $y = Ax$ where $A \in \R^{m \times d}$ with rank $m \leqslant d$. We consider the case when $\Dcal_n := A^{\dagger}_{\#}(\Dcal_y) \circledast \Ncal(0,\sigma^2 I_d)$ where $y \sim \Dcal_y$ if and only if $y = Ax$ where $x \sim \Dcal_r.$ Let $d = 2$, $\Sigma = UU^T$, and $A = [1, 0] = e_1^T \in \R^{1\times 2}$. Then $A^{\dagger} = e_1$. Note that $y \sim \Dcal_y$ if and only if $y = e_1^TUz = u_1^Tz$ where $u_1^T$ is the first row of $U$ and $z \sim \Ncal(0,I_2)$. By standard properties of Gaussians, $\Dcal_y = \Ncal(0,\|u_1\|_{\ell_2}^2)$. Hence $\Dcal_n = \Ncal(0,D_{\sigma})$ where $D_{\sigma} := \diag(\|u_1\|_{\ell_2}^2 + \sigma^2, \sigma^2) \in \R^{2 \times 2}.$ We visualize this example in Figure \ref{fig:toy-inv-prob-ex}. We see that the regularizer induced by $L_{r,n}$ penalizes directions in the row span of $A$, but does not for directions in the kernel of $A$. 
\end{ex}

 \begin{figure}
    \centering
    \includegraphics[width=\textwidth]{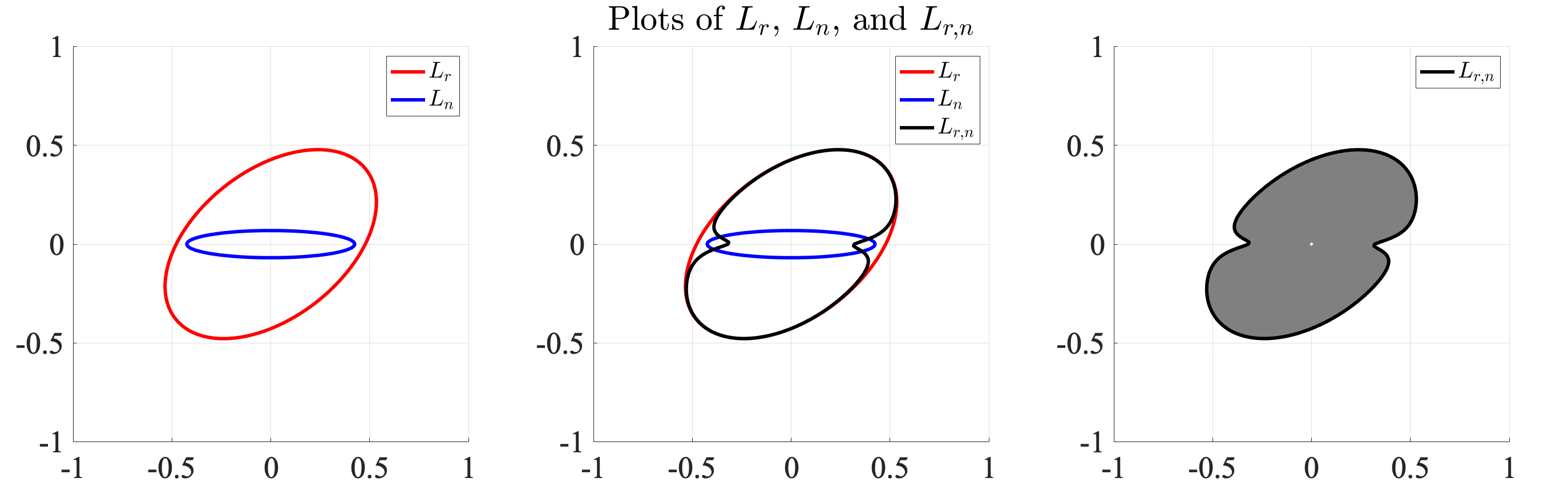}
    \caption{
   We visualize the distributions from Example \ref{ex:toy-inv-prob-ex} when $\Sigma = [0.5477, 0.2739; 0,    0.5477] \in \R^{2\times 2}$, $\sigma^2 = 0.01$, and we set $D_{\sigma} := 0.01\diag(\|u_1\|_{\ell_2}^2 + \sigma^2, \sigma^2)$: (Left) the boundaries of $L_r$ and $L_n$, (Middle) the boundaries of $L_r$, $L_n$, and $L_{r,n}$ and (Right) $L_{r,n}$.}
    \label{fig:toy-inv-prob-ex}
\end{figure}

\section{Critic-based loss functions via $f$-divergences} \label{sec:alt-losses}

Inspired by the use of the variational representation of the Wasserstein distance to define loss functions to learn regularizers, we consider whether other divergences can give rise to valid loss functions and if they can be interpreted as dual mixed volumes. A general class of divergences of interest will be $f$-divergences: for a convex function $f : (0,\infty) \rightarrow \R$ with $f(1) = 0$, if $P \ll Q$, $$D_f(P || Q) := \int_{\R^d}f\left(\frac{\mathrm{d}P}{\mathrm{d}Q}\right) \mathrm{d}Q.$$

\paragraph{$\alpha$-Divergences:} For $\alpha \in (-\infty, 0) \cup(0,1)$, there is a variational representation of the $\alpha$-divergence where $f = f_{\alpha}$ with $f_{\alpha}(x) = \frac{x^{\alpha} - \alpha x - (1-\alpha)}{\alpha(\alpha-1)}$ \cite{polyanskiy_wu_info_theory}: \begin{align}
  D_{f_{\alpha}}(P ||Q ) :=D_{\alpha}(P ||Q ) := \frac{1}{\alpha(1-\alpha)} - \inf_{h : \R^d \rightarrow (0,\infty)} \left(\E_Q\left[\frac{h(x)^{\alpha}}{\alpha}\right] + \E_P\left[\frac{h(x)^{\alpha-1}}{1-\alpha}\right]\right). \label{eq:alpha-div}
\end{align} Note that the domain of functions in this variational representation consists of positive functions, which aligns with the class of gauges $\|\cdot\|_K$. Many well-known divergences are $\alpha$-divergences for specific $\alpha$, such as the $\chi^2$-divergence ($\alpha=-1$) and the (squared) Hellinger distance ($\alpha = 1/2$) \cite{polyanskiy_wu_info_theory}. We discuss in the appendix (Section \ref{appx:general-alpha-result}) how one can derive a general loss based on $\alpha$-divergences, but focus on a particular case here to illustrate ideas.

\paragraph{The Hellinger Distance:}

Setting $\alpha = 1/2$ and performing an additional change of variables in \eqref{eq:alpha-div}, we have the following useful representation of the (squared) Hellinger distance: $$H^2(P || Q) := 2 - \inf_{h > 0} \E_P[h(x)] + \E_Q[h(x)^{-1}].$$ 
This motivates minimizing the following functional, which carries a similar intuition to \eqref{eq:general-star-body-prob} that the regularizer should be small on real data and large on unlikely data: \begin{align}
    K \mapsto \E_{\Dcal_r}[\|x\|_K] + \E_{\Dcal_n}[\|x\|_K^{-1}]. \label{eq:hellinger-functional}
    \end{align} The following theorem, proved in Section \ref{appx:alt-losses}, shows that this loss functional is equal to single dual mixed volume between a data-dependent star body $L_r$ and a star body that depends on $\Dcal_r,\Dcal_n$, and $K$. We can show that the equality cases of the lower bound only hold for dilations within a specific interval, with only two specific star bodies achieving equality.
\begin{theorem} \label{thm:hell-loss-characterization}
    Suppose $\Dcal_r$ and $\Dcal_n$ admit densities $p_r$ and $p_n$, respectively, such that the following maps are positive and continuous over the unit sphere: $$u \mapsto \rho_{\Dcal_r}(u)^{d+1}:= \int_0^{\infty} t^d p_r(tu)\mathrm{d}t\ \text{and}\ u \mapsto \tilde{\rho}_{\Dcal_n}(u)^{d-1}:=\int_0^{\infty}t^{d-2}p_n(tu) \mathrm{d}t.$$ Let $L_r$ and $\tilde{L}_n$ denote the star body with radial function $\rho_{\Dcal_r}$ and $\tilde{\rho}_{\Dcal_n}$, respectively. Then, for any $K \in \Scal^d$, there exists a star body $K_{r,n}$ that depends on $\Dcal_r, \Dcal_n,$ and $K$ such that the functional \eqref{eq:hellinger-functional} is equal to $d\tilde{V}_{-1}(L_{r}, K_{r,n})$. We also have the inequality $\tilde{V}_{-1}(L_{r}, K_{r,n}) \geqslant \vol_d(K_{r,n})^{-1/d}\vol_d(L_r)^{(d+1)/d}$ with equality if and only if $K_{r,n}$ is a dilate of $L_{r}$. Moreover, there exists a $\lambda_* := \lambda_*(\Dcal_r,\Dcal_n) > 0$ such that for every $\lambda \in (0,\lambda_*]$, there are only two star bodies $K_{+,\lambda}$ and $K_{-,\lambda}$ that satisfy $K_{r,n} = \lambda L_r$: \begin{align*}
    \rho_{K_{\pm,\lambda}}(u) & : = \frac{\rho_{L_r}(u)^{d}}{2\lambda \rho_{\tilde{L}_n}(u)^{d-1}}\left(1 \pm \sqrt{1 - 4 \lambda^2 \left(\frac{\rho_{\tilde{L}_n}(u)}{\rho_{L_r}(u)}\right)^{d-1}}\right)
\end{align*}
\end{theorem}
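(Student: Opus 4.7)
The plan is to push the functional through polar coordinates, recognize the resulting integral as a single dual mixed volume after cleverly defining $K_{r,n}$, invoke Lutwak's inequality (Theorem \ref{thm:lutwak-dmv}), and finally solve a quadratic in $\rho_K$ to recover the two solutions.

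First I would expand both expectations in spherical coordinates. Using positive homogeneity, $\|tu\|_K = t/\rho_K(u)$ for $t>0$, so
\[
\E_{\Dcal_r}[\|x\|_K]=\int_{\Sbb^{d-1}} \frac{1}{\rho_K(u)}\int_0^\infty t^{d}p_r(tu)\,\mathrm{d}t\,\mathrm{d}u=\int_{\Sbb^{d-1}}\frac{\rho_{L_r}(u)^{d+1}}{\rho_K(u)}\,\mathrm{d}u,
\]
and similarly $\E_{\Dcal_n}[\|x\|_K^{-1}]=\int_{\Sbb^{d-1}} \rho_K(u)\,\rho_{\tilde{L}_n}(u)^{d-1}\,\mathrm{d}u$. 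Adding these and demanding equality with $d\tilde V_{-1}(L_r,K_{r,n})=\int_{\Sbb^{d-1}}\rho_{L_r}(u)^{d+1}\rho_{K_{r,n}}(u)^{-1}\mathrm{d}u$ forces
\[
\rho_{K_{r,n}}(u):=\frac{\rho_{L_r}(u)^{d+1}\,\rho_K(u)}{\rho_{L_r}(u)^{d+1}+\rho_K(u)^2\,\rho_{\tilde{L}_n}(u)^{d-1}}.
\]
Since the three radial functions on the right are positive and continuous on $\Sbb^{d-1}$ by hypothesis, this is a valid radial function, so $K_{r,n}\in\Scal^d$ exists. The lower bound then follows directly by plugging $L=L_r$ and $K=K_{r,n}$ into Theorem \ref{thm:lutwak-dmv} and taking $d$-th roots, with the equality characterization inherited verbatim.

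For the last part, imposing $K_{r,n}=\lambda L_r$ (equivalently $\rho_{K_{r,n}}=\lambda \rho_{L_r}$) and clearing denominators in the formula above yields, pointwise on the sphere, the quadratic
\[
\lambda\,\rho_{\tilde{L}_n}(u)^{d-1}\,\rho_K(u)^2-\rho_{L_r}(u)^{d}\,\rho_K(u)+\lambda\,\rho_{L_r}(u)^{d+1}=0.
\]
Applying the quadratic formula and simplifying gives exactly the two candidate expressions $\rho_{K_{\pm,\lambda}}(u)$ stated in the theorem, and any star body solution must coincide with one of these pointwise, giving the ``only two'' conclusion. The only genuine obstacle is the well-posedness of these candidates: we need the discriminant $1-4\lambda^2(\rho_{\tilde{L}_n}(u)/\rho_{L_r}(u))^{d-1}$ to be nonnegative for every $u$, and we need the resulting $\rho_{K_{\pm,\lambda}}$ to be positive and continuous so that each defines a bona fide star body.

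I would resolve this by setting
\[
\lambda_*:=\tfrac{1}{2}\,\inf_{u\in\Sbb^{d-1}}\bigl(\rho_{L_r}(u)/\rho_{\tilde{L}_n}(u)\bigr)^{(d-1)/2},
\]
which is strictly positive because $\rho_{L_r},\rho_{\tilde{L}_n}$ are positive and continuous on the compact sphere, hence bounded away from $0$ and $\infty$. For any $\lambda\in(0,\lambda_*]$ the discriminant is nonnegative everywhere, so both branches are real, continuous, and positive (the $+$ branch is bounded below by $\rho_{L_r}^d/(2\lambda\rho_{\tilde{L}_n}^{d-1})>0$, while the $-$ branch equals $\lambda\rho_{L_r}^{d+1}/(\rho_{L_r}^d\cdot\tfrac{1}{2}(1+\sqrt{\cdots}))$ by rationalizing the numerator, and hence is also strictly positive and continuous). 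This yields exactly two star bodies $K_{+,\lambda},K_{-,\lambda}$ and completes the proof.
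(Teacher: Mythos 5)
Your proof is correct and essentially identical to the paper's: the same spherical-coordinates expansion, the same definition of $K_{r,n}$ (you write it as a single fraction while the paper writes it as the inverse of a sum, but these are algebraically the same), the same appeal to Lutwak's inequality (Theorem \ref{thm:lutwak-dmv}), the same quadratic in $\rho_K(u)$, and the same $\lambda_*$. The only cosmetic difference is that you explicitly verify positivity of the $-$ branch by rationalizing; otherwise the arguments coincide step by step.
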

\begin{remark}
    Among the two star bodies $K_{+,\lambda},K_{-,\lambda}$ that achieve $K_{r,n} = \lambda L_r$, we argue that $K_{+,\lambda}$ induces a better regularizer than $K_{-,\lambda}$. As seen in previous examples, we would like our regularizer $\|\cdot\|_K$ to assign low values to likely data and high values to unlikely data. Equivalently, we would like $\rho_K(\cdot)$ to be large on likely data and small on unlikely data. This can be stated in terms of the geometry of $L_r$ and $\tilde{L}_n$: we would like $\rho_K$ to be large when $\rho_{L_r}$ is large and $\rho_{\tilde{L}_n}$ is small. Likewise, $\rho_K$ should be small when $\rho_{\tilde{L}_n}$ is large and $\rho_{L_r}$ is small. Due to the small sign difference between $K_{+,\lambda}$ and $K_{-,\lambda}$, $K_{+,\lambda}$ better captures this intuition. We show a visual example of this in Figure \ref{fig:hellinger_ex}, where $\Dcal_r$ and $\Dcal_n$ are the same distributions as in Example \ref{ex:ell1-ell2-ball-ex}.
\end{remark}

\begin{figure}[h!]
    \centering
    \includegraphics[width=\textwidth]{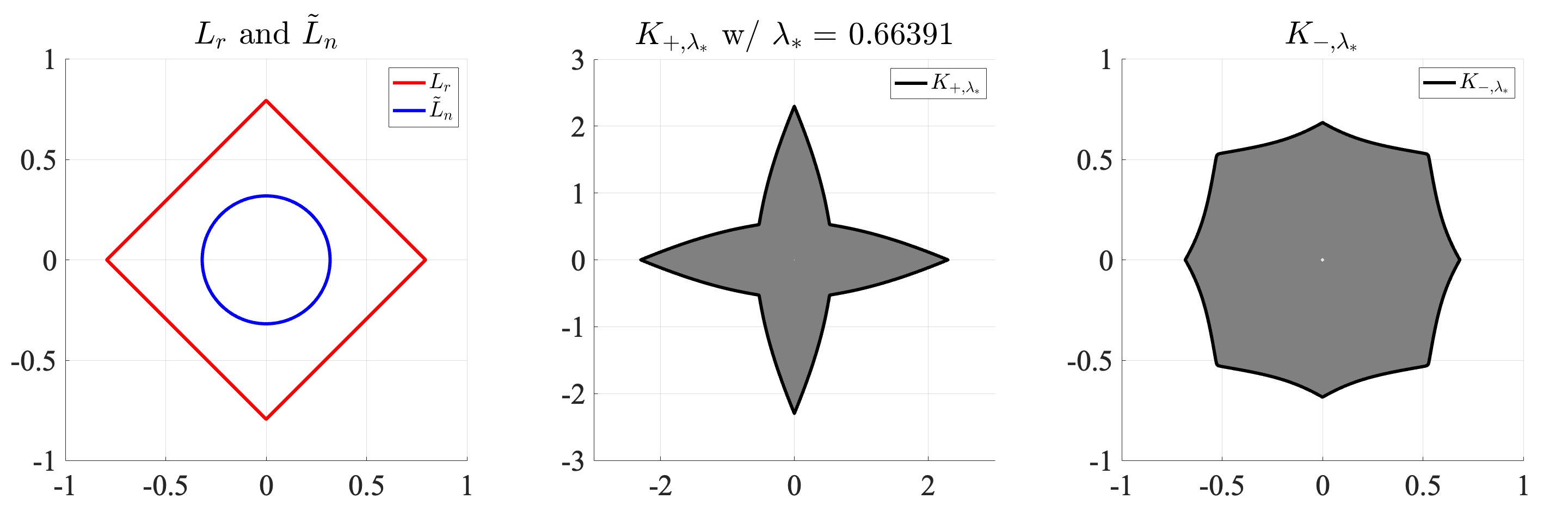}
    \caption{(Left) The star bodies $L_r$ and $\tilde{L}_n$ induced by the distributions $\Dcal_r$ and $\Dcal_n$ from from Theorem \ref{thm:hell-loss-characterization} and Example \ref{ex:ell1-ell2-ball-ex} with $\alpha = 0.5$. Then we have (Middle) $K_{+,\lambda_*}$ and (Right) $K_{-,\lambda_*}$ as defined in Theorem \ref{thm:hell-loss-characterization}. Note that $K_{+,\lambda_*}$ better captures the geometry of a regularizer that assigns higher likelihood to likely data and lower likelihood to unlikely data, while $K_{-,\lambda_*}$ does not.}
    \label{fig:hellinger_ex}
\end{figure}

\subsection{Empirical comparison with adversarial regularization} \label{sec:mnist-exps}

We aimed to understand whether the Hellinger-based loss \eqref{eq:hellinger-functional} can provide a practical loss function to learn regularizers. To do this, we consider denoising on the MNIST dataset \cite{MNIST}. We take $10000$ random samples from the MNIST training set (constituting our $\Dcal_r$ distribution) and add Gaussian noise with variance $\sigma^2 = 0.05$ (constituting our $\mathcal{D}_n$ distribution). We then aim to reconstruct test samples from the MNIST dataset corrupted with Gaussian noise of the same variance seen during training. The regularizers were parameterized via a deep convolutional neural network with positively homogenous Leaky ReLU activation functions, giving rise to a star-shaped regularizer. They were trained using the adversarial loss and Hellinger-based loss \eqref{eq:hellinger-functional}. We also used the gradient penalty term from \cite{Lunzetal18} for both losses. Once each regularizer $\mathcal{R}_{\theta}$ has been trained, we then denoise a new noisy MNIST digit $y$ by solving the following via gradient descent: $$\min_{x \in \R^d} \|x - y\|_{\ell_2}^2 + \lambda \mathcal{R}_{\theta}(x).$$

As a baseline, we compare both methods to Total Variation (TV) regularization \cite{TV-ROF}. We note that in \cite{Lunzetal18}, it was recommended to set $\lambda := 2\tilde{\lambda}$ where $\tilde{\lambda} := \E_{\mathcal{N}(0,\sigma^2I)}[\|z\|_{\ell_2}]$, which was motivated by the fact that the regularizer will be $1$-Lipschitz. We found our Hellinger-based regularizer performed best with $\lambda := 5.1\tilde{\lambda}^2$. We thus also compare to the adversarial regularizer with a further tuned parameter $\lambda$. Please see Section \ref{app:mnist-exps} of the appendix for implementation details and a further discussion of regularization parameter choices.

We show the average Peak Signal-to-Noise Ratio (PSNR) and mean squared error (MSE) over $100$ test images in Table \ref{tab:hellinger-adv-comp}. We see that the Hellinger-based loss gives competitive performance relative to the adversarial regularizer and TV regularization, while the tuned adversarial regularizer slightly outperforms all methods. These promising results suggests that these new $\alpha$-divergence based losses are potentially worth exploring from a practical perspective as well.

\begin{table}[h!]
\begin{center}
\begin{tabular}{lcc}
  \toprule
   & PSNR & MSE \\ 
  \midrule
  Hellinger& 23.06 & 0.0052 \\ 
  \midrule
  Adversarial & 22.32 & 0.0064 \\ 
  \midrule
  Adversarial (tuned) & 24.14 & 0.0041 \\
  \midrule
  TV & 20.52 & 0.0091 \\ 
  \bottomrule
\end{tabular}
\caption{We show the average PSNR and MSE in recovering $100$ test MNIST digits using regularizers trained via the Hellinger-based loss of equation \eqref{eq:hellinger-functional} and the adversarial loss, both with the recommended and tuned regularization strength. We additionally present results using TV regularization as a model-based baseline. The regularizer learned via the Hellinger-based loss showcases competitive performance in both metrics.}
\label{tab:hellinger-adv-comp}
\end{center}
\end{table}

\section{Computational considerations}

\label{sec:computation}

We now discuss various issues relating to employing such star body regularizers. In particular, we discuss optimization-based concerns due to the (potential) nonconvexity of $x \mapsto \|x\|_K$ and how one can use neural networks to learn star body regularizers in high-dimensions.

\paragraph{Weak convexity:} \label{sec:weak-convexity}
When such a regularizer would be employed in inverse problems, one may require minimizing a cost of the form $x \mapsto \|y - \mathcal{A}(x)\|_{\ell_2}^2 + \lambda \phi(\|x\|_K)$. Here $\phi(\cdot)$ is a continuous monotonically increasing function (e.g., $\phi(t) = t^q$ for $q > 0$). While the first term is convex for linear $\mathcal{A}(\cdot)$, the second term can be highly nonconvex for general star bodies. One could enforce searching for a convex body as opposed to a star body, but convexity is strictly more limited in terms of expressibility. There has been a large body of recent works \cite{DavisDrusvyatskiy19, DavisDrusvyatskiy22, Charisopoulosetal21, Goujonetal23, Shumaylovetal2024} that have analyzed weak convexity in optimization, showing that while nonconvex, weakly convex functions can be provably optimized using simple first-order methods in some cases. Based on this, it is important to understand under what conditions is $\phi(\|\cdot\|_K)$ a weakly convex function. Recall that a function $f$ is $\rho$-weakly convex if $f(\cdot) + \frac{\rho}{2}\|\cdot\|_{\ell_2}^2$ is convex. For certain functions $\phi(\cdot)$, there is a natural way to determine if $\phi(\|\cdot\|_K)$ is weakly convex.

In particular, in considering the quantity $\|\cdot\|_K^2 + \frac{\rho}{2}\|\cdot\|_{\ell_2}^2$, there is a natural way in which one can describe this functional as the gauge of a new star body. 
\begin{definition}[\cite{Lutwak96}]
    For two star bodies $K,L \in \Scal^d$ and scalars $\alpha, \beta \geqslant 0$ (both not zero), the harmonic $q$-combination for $q \geqslant 1$ is the star body $\alpha \diamond K \hat{+}_q \beta \diamond L$ whose radial function is defined by $$\rho_{\alpha \diamond K \hat{+}_q \beta \diamond L}(u)^{-q} := \alpha \rho_K(u)^{-q} + \beta \rho_L(u)^{-q}.$$
\end{definition} \noindent Observe that the scaling operation $\alpha \diamond K$ differs from the usual $\alpha K := \{\alpha x : x \in K\}$ since $\alpha \rho_K^{-q} = \rho_{\alpha^{-q} K}^{-q}$ so that $\alpha \diamond K = \alpha^{-q} K.$

 Notice that in our context, we have a similar combination given by the harmonic $2$-combination of $K$ and the unit ball $B^d$: $M_{2,\rho} :=M_{2,\rho}(K) := K \hat{+}_2 \frac{\rho}{2} B^d$. By the definition of the harmonic $2$-combination, the gauge of $M_{2,\rho}$ satisfies $\|x\|_{M_{2,\rho}}^2 = \|x\|_K^2 + \frac{\rho}{2}\|x\|_{\ell_2}^2.$ We thus have the following Proposition, proven in Section \ref{appx:weak-conv-neural-net}, showing the equivalence between the weak convexity of $\|\cdot\|_K^2$ and the convexity of $M_{2,\rho}$. We also show a visual example in Section \ref{sec:gauss-mix-weak-conv} of a data-dependent star body whose gauge squared is weakly convex. \begin{proposition} \label{prop:weak-convexity}
    Let $K \in \Scal^d$. Then $x \mapsto \|x\|_K^2$ is $\rho$-weakly convex if and only if $M_{2,\rho}$ is convex.
\end{proposition}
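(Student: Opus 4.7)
The plan is to reduce the proposition to the following elementary fact about star bodies and their squared gauges: for any $L \in \Scal^d$, the map $x \mapsto \|x\|_L^2$ is convex if and only if $L$ is itself convex. Granting this fact, the proposition falls out immediately. By the definition of the harmonic $2$-combination and the identity recorded in the excerpt, we have $\|x\|_{M_{2,\rho}}^2 = \|x\|_K^2 + \frac{\rho}{2}\|x\|_{\ell_2}^2$ for every $x$. By definition, $\|\cdot\|_K^2$ is $\rho$-weakly convex precisely when the right-hand side is a convex function of $x$, so the elementary fact applied to $L = M_{2,\rho}$ yields the stated equivalence.

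To justify the elementary fact, I would prove the two directions separately. For the direction ``$L$ convex $\Rightarrow$ $\|\cdot\|_L^2$ convex,'' I would use that when $L$ is convex its gauge $\|\cdot\|_L$ is a nonnegative, positively $1$-homogeneous convex function; composing with the nondecreasing convex map $s \mapsto s^2$ on $[0,\infty)$ preserves convexity, so $\|\cdot\|_L^2$ is convex. For the direction ``$\|\cdot\|_L^2$ convex $\Rightarrow$ $L$ convex,'' I would use that every sublevel set of a convex function is convex; since $0 \in \interior(L)$ forces $\|\cdot\|_L \geqslant 0$, the set $L$ is exactly the unit sublevel set $\{x \in \R^d : \|x\|_L^2 \leqslant 1\}$, which is convex.

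I do not expect a substantive obstacle here; the proposition is essentially a chain of equivalences once the harmonic $2$-combination identity is in place. The one point requiring care is bookkeeping around the definition of $\alpha \diamond K$: since the scaling operation sends $\alpha \diamond K$ to $\alpha^{-q} K$, I want to verify that inserting $\alpha = \rho/2$ and $L = B^d$ with $q = 2$ yields the correct prefactor $\frac{\rho}{2}\|\cdot\|_{\ell_2}^2$ on the Euclidean norm squared. Beyond this sanity check, the argument requires no tools beyond elementary convex analysis and the radial-function definition of the harmonic combination.
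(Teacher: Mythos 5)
Your proof is correct and takes essentially the same approach as the paper: both use the identity $\|x\|_{M_{2,\rho}}^2 = \|x\|_K^2 + \tfrac{\rho}{2}\|x\|_{\ell_2}^2$, then argue one direction by composing the convex gauge with the nondecreasing convex map $t \mapsto t^2$, and the other by observing that $M_{2,\rho}$ is the unit sublevel set of the convex function $\|\cdot\|_{M_{2,\rho}}^2$. The only difference is that you package the two directions as a standalone lemma (``$\|\cdot\|_L^2$ convex iff $L$ convex'') before applying it to $L = M_{2,\rho}$, which is a purely organizational change.
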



\paragraph{Deep neural network-based parameterizations:}

\label{sec:neural-net}

Since deep neural networks allow for efficient learning in high-dimensions, it is natural to ask under what conditions does the regularizer $\mathcal{R}_{\theta}(x) := f_L^{\theta_L} \circ f_{L-1}^{\theta_{L-1}} \circ \dots \circ f_1^{\theta_1}(x)$ define a star body regularizer, i.e., when is the set $K_{\theta} := \{x \in \R^d : \mathcal{R}_{\theta}(x) \leqslant 1\}$ a star body? As star bodies are in one-to-one correspondence with radial functions, one can answer this question by studying conditions under which the map $u \mapsto 1/\mathcal{R}_{\theta}(u)$ defines a radial function. This leads to the following Proposition, which is proven in Section \ref{appx:weak-conv-neural-net}.

\begin{proposition} \label{prop:neural-net-star}
    For $L \in \mathbb{N}$, consider a regularizer $\mathcal{R} : \R^d \rightarrow \R$ of the form $\mathcal{R}(x):= \mathcal{G} \circ f_L \circ \dots \circ f_1(x)$ where $\mathcal{G} : \R^{d_L} \rightarrow \R$ and each $f_i : \R^{d_{i-1}} \rightarrow \R^{d_i}$ for $i \in [L]$ satisfy the following conditions:
    \begin{itemize}
        \item[(i)] $d = d_0 \leqslant d_1 \leqslant d_2 \leqslant \dots \leqslant d_L$,
        \item[(ii)] $\mathcal{G}(\cdot)$ is non-negative, positively homogenous, continuous, and only vanishes at the origin,
        \item[(iii)] $f_i(\cdot)$ is injective, continuous, and positively homogenous.
    \end{itemize} Then the set $K := \{x \in \R^d : \mathcal{R}(x) \leqslant 1\}$ is a star body in $\R^d$ with radial function $\rho_K(u) := 1/\mathcal{R}(u)$.
\end{proposition}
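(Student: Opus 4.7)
The plan is to first establish that the scalar function $\mathcal{R} = \mathcal{G} \circ f_L \circ \dots \circ f_1$ inherits four key properties from the hypotheses on its building blocks: (a) continuity, (b) positive homogeneity, (c) non-negativity, and (d) vanishing only at the origin. Once these four properties are in hand, the characterization of star bodies via their radial functions (as recalled in Section \ref{sec:preliminaries}) will let us conclude the result essentially by unwinding definitions.

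First, I would observe that positive homogeneity is preserved under composition: if $g$ and $h$ are positively homogenous, then $g \circ h$ is too. Applied inductively, $f_L \circ \dots \circ f_1$ is positively homogenous, and composing with $\mathcal{G}$ preserves this, giving (b). Continuity (a) follows similarly, and non-negativity (c) is immediate from the assumption on $\mathcal{G}$. The main content is (d). For this I would argue that each $f_i$ must satisfy $f_i(0) = 0$: by positive homogeneity, $f_i(0) = f_i(t\cdot 0) = t f_i(0)$ for all $t>0$, forcing $f_i(0) = 0$. Combined with injectivity, we get $f_i(y) = 0 \iff y = 0$. Composing these equivalences up the network shows $f_L \circ \dots \circ f_1(x) = 0 \iff x = 0$. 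Finally, since $\mathcal{G}$ vanishes only at the origin, $\mathcal{R}(x) = 0 \iff x = 0$. (The dimension condition (i) is compatible with, and implied by, the existence of injective continuous maps $f_i$.)

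With (a)--(d) established, define $\rho(u) := 1/\mathcal{R}(u)$ for $u \in \mathbb{S}^{d-1}$. By (a) and (d), $\mathcal{R}$ is continuous and strictly positive on the compact set $\mathbb{S}^{d-1}$, so $\rho$ is positive and continuous on the sphere; in particular $\mathcal{R}$ attains a positive minimum $c$ and a finite maximum $M$ on $\mathbb{S}^{d-1}$. I would then show that $K = \{x : \mathcal{R}(x) \leqslant 1\}$ is a star body with radial function $\rho$ by checking the three defining properties:
\begin{itemize}
    \item[(I)] \textbf{Compactness.} $K$ is closed by continuity of $\mathcal{R}$. For boundedness, any $x \neq 0$ can be written as $x = \|x\|_{\ell_2} u$ with $u \in \mathbb{S}^{d-1}$, so by positive homogeneity $\mathcal{R}(x) = \|x\|_{\ell_2}\mathcal{R}(u) \geqslant c\|x\|_{\ell_2}$, whence $x \in K$ forces $\|x\|_{\ell_2} \leqslant 1/c$.
    \item[(II)] \textbf{$0 \in \mathrm{int}(K)$.} By the same identity, $\mathcal{R}(x) \leqslant M\|x\|_{\ell_2}$, so the open ball of radius $1/M$ around $0$ lies in $K$.
    \item[(III)] \textbf{Ray intersects $\partial K$ exactly once.} For fixed $u \in \mathbb{S}^{d-1}$, the map $t \mapsto \mathcal{R}(tu) = t\mathcal{R}(u)$ is strictly increasing on $(0,\infty)$, so $tu \in K \iff t \leqslant 1/\mathcal{R}(u) = \rho(u)$, and the unique boundary crossing occurs at $t = \rho(u)$.
\end{itemize}
Combining (I)--(III), $K$ is a star body, and reading off (III) gives $\rho_K(u) = 1/\mathcal{R}(u)$ as claimed.

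The main subtlety I anticipate is step (d), specifically checking that injectivity plus positive homogeneity really do force $f_i(0) = 0$ and hence that the entire composition vanishes only at the origin; but once this observation is made, the rest of the argument is a routine translation between the level-set description of $K$ and the radial-function description of star bodies. Everything else is a direct use of compactness of $\mathbb{S}^{d-1}$ together with the positive homogeneity identity $\mathcal{R}(tu) = t\mathcal{R}(u)$.
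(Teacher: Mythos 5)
Your proof is correct and follows essentially the same route as the paper: establish that $\mathcal{R}$ is positively homogeneous and continuous by composition, use injectivity plus $f_i(0)=0$ (forced by homogeneity) to show $\mathcal{R}$ vanishes only at the origin, and then conclude via compactness of $\mathbb{S}^{d-1}$ that $u \mapsto 1/\mathcal{R}(u)$ is a positive continuous radial function. The only cosmetic difference is that you explicitly unpack the final step (compactness of $K$, origin in the interior, unique ray crossing), whereas the paper invokes the equivalence from its preliminaries that a set is a star body iff its radial function is positive and continuous on the sphere; the paper also tacks on a coercivity remark that your bound $\mathcal{R}(x) \geqslant c\|x\|_{\ell_2}$ already gives.
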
 Below we discuss the different types of commonly employed layers and activation functions that satisfy the conditions of the above Proposition.
\begin{ex}[Feed-forward layers] Any intermediate layer $f_i : \R^{d_{i-1}} \rightarrow \R^{d_i}$ can be parameterized by a single-layer feedforward MLP with no biases: $f_i(x) := \sigma_i(W_i x)$ where $W_i \in \R^{d_{i}} \times \R^{d_{i-1}}$ has rank $d_{i-1}$. For the activation function $\sigma_i(\cdot)$, one can ensure injectivity and positive homogeneity by choosing the Leaky ReLU activation $\mathrm{LeakyReLU}_{\alpha}(t) := \mathrm{ReLU}(t) + \alpha \mathrm{ReLU}(-t)$ where $\alpha \in (0,1)$. Observe that this map is positively homogenous, continuous, and bijective. The layer $f_i$ thus satisfies condition (iii). For the final layer $\mathcal{G}(\cdot)$, note that any norm on $\R^d$ satisfies condition (ii). 
    
\end{ex}
\begin{ex}[Residual layers]
    If $d_i = d_{i-1}$, one could use residual layers of the form $f_i(x) := x + g_i(x)$ where $g_i : \R^{d_i} \rightarrow \R^{d_i}$ is a Lipschitz continuous, positively homogenous function with $\mathrm{Lip}(g_i) < 1$. This can be achieved by having $g_i$ be a neural network with $1$-Lipschitz positively homogenous activations (such as $\mathrm{ReLU}$), no biases, and weight matrices with norm strictly less than $1$. Then each $f_i$ is positively homogenous, continuous, and invertible \cite{Behrmannetal18}, satisfying condition (iii).
\end{ex}
\begin{remark}[Star-shaped regularizers]
    Note that if one does not require the set $K := \{x \in \R^d : \mathcal{R}(x)\leqslant 1\}$ to be a star body, but simply star-shaped, then there is more flexibility in the neural network architecture used. In particular, one simply needs the layers to be positively homogenous and continuous, and the output layer be non-negative, positively homogenous, and continuous. Invertibility would not be necessary, so that the dimensions $d_i$ need not be increasing. This can easily be achieved via linear convolutional layers and positively homogenous activation functions. In fact, in the original work \cite{Lunzetal18}, experiments were done with a network precisely of this form.
\end{remark}

\begin{remark}[Positive homogeneity of activations] \label{rem:pos-hom-exps}
    While our theory for neural networks mainly applies to positively homogenous activation functions such as ReLU and LeakyReLU, we show in Section \ref{app:pos-hom-exps} of the appendix that focusing on such activations does not limit performance. Specifically, we compare networks with non-positively homogenous activation functions (such as Tanh or the Gaussian Error Linear Unit (GELU)) with a network exclusively using Leaky ReLU activations in denoising. We show that the Leaky ReLU-based regularizer outperforms regularizers with non-positively homogenous activations. This potentially highlights empirical benefits of positive homogeneity, which warrants further analysis.
\end{remark}

\section{Discussion} \label{sec:discussion}

In this work, we studied optimal task-dependent regularization over the class of regularizers given by star body gauges. Our analysis focused on learning approaches that optimize a critic-based loss function derived from variational representations of probability measures. Utilizing tools from dual Brunn-Minkowski theory and star geometry, we precisely characterized the optimal regularizer in specific cases and provided visual examples to illustrate our findings. Overall, our work underscores the utility of star geometry in enhancing our understanding of data-driven regularization.

\paragraph{Limitations and future work:} There are numerous exciting directions for future research. While our analysis concentrated on formulations based on the Wasserstein distance and $\alpha$-divergence, it would be valuable to extend the analysis to other critic-based losses for learning regularizers. Additionally, it would be interesting to give a precise characterization of global minimizers for the Hellinger-inspired loss 
\eqref{eq:hellinger-functional}. Relaxing the assumptions in our results is another significant challenge. For instance, Theorem \ref{thm:main-adv-star-body-thm} hinges on a ``containment'' property of the distributions, and finding ways to relax this assumption could broaden the applicability of our theory. Moreover, many of our results assume a certain regularity in the densities of our distributions. Addressing this would involve extending dual mixed volume inequalities to operate on star-shaped sets rather than star bodies. Finally, it is crucial to assess the performance of these regularizers in solving inverse problems. 
This includes studying questions regarding sample complexity, algorithmic properties, robustness to noise, variations in the forward model, and scenarios with limited clean data \cite{Darestanietal21, Jalaletal21, Gaoetal23, Leongetal23, Tachellaetal22, Tachellaetal23, Darasetal23, Gaoetal21}.

\section*{Acknowledgements}

We thank anonymous reviewers for their feedback and suggestions, which helped improved the quality of the manuscript. We also thank Venkat Chandrasekaran for helpful discussions. YS acknowledges support from the Ministry of Education (Singapore) Academic Research Funds A-0004595-00-00 and A-8002497-00-00.

\bibliographystyle{plain}

\bibliography{star_reg_main.bib}

\newpage
\tableofcontents

\appendix

\section{Further discussion on related work} \label{appx:related-work}

We survey topics relating to data-driven regularization and convex geometry for inverse problems. For a more in-depth discussion on learning-based regularization methods, we refer the reader to the following survey articles \cite{Arridgeetal19, Ongieetal20}. 

\paragraph{Regularization via generative modeling:} Generative networks have demonstrated their effectiveness as signal priors in various inverse problems, including compressed sensing \cite{Jalaletal21, Boraetal17, Mardani2017}, denoising \cite{Heckeletal2018}, phase retrieval \cite{Handetal2018, Handetal2024, ShahDPR2019, ShamshadAhmed21}, and blind deconvolution \cite{Asim2018}. Early approaches in this area leveraged ``push-forward'' generators such as Generative Adversarial Networks (GANs) \cite{Goodfellow2014} and Variational Autoencoders (VAEs) \cite{Kingma2014}. In some cases, generative models can serve as traditional regularizers by providing access to likelihoods, which can be incorporated as penalty functions into the objective \cite{Asimetal20}. Alternatively, they can constrain reconstructions to lie on the natural image manifold, ensuring that solutions are realistic and consistent with the underlying data distribution. More recently, there has been a surge of interest in techniques based on diffusion models \cite{Chungetal23, Fengetal23, Darasetal23, Routetal23}. 

\paragraph{Explicit regularization:} There have also been recent works that have directly learned regularizers as penalty functions in a data-driven fashion. Early works in this area include the now-mature field of dictionary learning (see the surveys \cite{MairaletalSurvey14, EladSurvey10} for more). Regularization by Denoising \cite{romano2017RED} constructs an explicit regularization functional using an off-the-shelf deep neural network-based denoiser. The works \cite{Lunzetal18, Mukherjeeetal2021, Shumaylovetal2024} learn a regularizer via the critic-based adversarial loss inspired by the Wasserstein distance. The difference in these works lie in the way the neural network is parameterized to enforce certain properties, such as convexity or weak convexity. Recently, \cite{Fanetal24} developed an approach that can approximate the proximal operator of a potentially nonconvex data-driven regularizer. Other works along these lines include \cite{Goujonetal23, Albertietal21, Goujonetal24}.

\paragraph{Convex geometry:} Our work is also a part of a line of research showcasing the tools of convex and star geometry in machine learning. In statistical inference tasks and inverse problems, the utility of convex geometry cannot be understated. In particular, complexity measures of convex bodies, such their Gaussian width \cite{Chandrasekaranetal12a, ChandrasekaranJordan13} or statistical dimension \cite{Amelunxenetal14}, have proven useful in developing sample complexity guarantees for a variety of inverse problems \cite{OymakHassibi, PlanVershynin16, PlanVershyninYudovina17, Koehleretal21, Oymaketal18, Soltanolkotabi19}. Additionally, convex geometry has been fruitful in problems such as function estimation from noisy data \cite{Guntuboyina-opt-supp-functions, Gardneretal06, OReillyTran22, SohChandrasekaran21, Kuretal20}.

\section{Proofs}

\subsection{Preliminaries on star and convex geometry} \label{app:primer}

In this section, we provide relevant background concerning convex bodies and star bodies. For a more detailed treatment of each subject, we recommend \cite{conv-bodies-Schneider} for convex geometry and \cite{geom-tom-Gardner, Hansen-starset-survey} for star geometry. 

We say that a closed set $K \subseteq \R^d$ is \textit{star-shaped} (with respect to the origin) if for all $x \in K$, we have $[0,x] \subseteq K$ where, for two points $x,y \in \R^d$, we define the line segment $[x,y] := \{(1-t)x + t y : t \in [0,1]\}.$ In order for $K$ to be considered a \textit{star body}, we require $K$ to be a compact star-shaped set such that for every $x \neq 0$, the ray $R_x := \{t x : t > 0\}$ intersects the boundary of $K$ exactly once. Note that every convex body (a compact, convex set with the origin its interior) is a star body.

In an analogous fashion to the correspondence between convex bodies and their support functions, star bodies are in one-to-one correspondence with radial functions. For a star body $K$, its \textit{radial function} is defined by 
\begin{align*}
\rho_{K} (x) := \sup \{ t > 0 : t \cdot x \in K \}. 
\end{align*} It follows that the gauge function of $K$ satisfies $\|x\|_K = 1/ \rho_{K} (x)$ for all $x \in \mathbb{R}^{d}$ such that $x \neq 0$. The radial function can also help capture important geometric information about $K$. For example, one can show that $\vol_d(K) = \frac{1}{d} \int_{\Sbb^{d-1}} \rho_K^d(u) \mathrm{d} u.$

For $K, L \in \mathcal{S}^d$, 
it is easy to see that $K \subseteq L$ if and only if $\rho_K \leqslant \rho_L$.  We say that $L$ is a \emph{dilate} of $K$ if there exists $\lambda > 0$ such that $L = \lambda K$; that is, $\rho_{L} = \lambda \rho_K$.  In addition, given a linear transformation $\phi \in GL(\R^d)$, one has $\rho_{\phi K}(x) = \rho_{K}(\phi^{-1}x)$ for all $x \in \R^d \backslash \{0\}$.

An additional important aspect of star bodies are their kernels. Specifically, we define the \emph{kernel} of a star body $K$ as the set of all points for which $K$ is star-shaped with respect to, i.e., $\ker(K) := \{x \in K : [x,y] \subseteq K,\ \forall y \in K\}$. Note that $\ker(K)$ is a convex subset of $K$ and $\ker(K) = K$ if and only if $K$ is convex. For a parameter $\gamma > 0$, we define the following subset of $\mathcal{S}^d$ consisting of \emph{well-conditioned} star bodies with nondegenerate kernels:
$$\mathcal{S}^d(\gamma) := \{K \in \mathcal{S}^d : \gamma B^d \subseteq \ker(K)\}.$$ 
Here $B^d := \{x \in \R^d : \|x\|_{\ell_2} \leqslant 1\}$ is the unit (Euclidean) ball in $\R^d$.

We finally discuss metric aspects of the space of star bodies. While the space of convex bodies enjoys the natural Hausdorff metric, there is a different, yet more natural metric over the space of star bodies dubbed the radial metric. Concretely, let $K, L \subset \mathbb{R}^{d}$ be star bodies. We define the \emph{radial sum} $\tilde{+}$ between $K$ and $L$ as $K \tilde{+} L := \{ x + y : x \in K, y \in L, x = \lambda y \}$; that is, unlike the Minkowski sum, we restrict the pair of vectors to be parallel. The radial sum obeys the relationship $\rho_{K \tilde{+} L}(u) := \rho_{K}(u) + \rho_{L}(u)$.  
We denote the \emph{radial} metric between two star bodies $K,L$ as
$$
\delta(K,L) := \inf\{ \varepsilon \geqslant 0 : K \subseteq L \,\tilde{+}\, \varepsilon B^d, L \subseteq K\, \tilde{+}\, \varepsilon B^d\}.
$$
In a similar fashion to how the support function induces the Haussdorf metric for convex bodies, the radial metric satisfies $\delta(K,L) := \|\rho_K - \rho_L\|_{\infty}$. 

A functional $F : \Scal^d \rightarrow \R$ on the space of star bodies is continuous if it is continuous with respect to the radial metric, i.e., $F(\cdot)$ is continuous if for every $K \in \Scal^d$ and $(K_i)$ such that $\delta(K_i,K) \rightarrow 0$ as $i \rightarrow \infty$, then $F(K_i) \rightarrow F(K)$ as $i \rightarrow \infty$. Moreover, a subset $\Cfrak \subset \Scal^d$ is closed if it is closed with respect to the topology induced by the radial metric. Finally, a subset $\Cfrak \subset \Scal^d$ is bounded if there exists an $0 < R < \infty$ such that $K \subseteq RB^d$ for every $K \in \Cfrak.$

\subsection{Proofs for Section \ref{sec:main-results}} \label{sec:appx-existence-proof}

We require some preliminary results for our proofs. The first a simple extension of a result in \cite{Leongetal2022} to show continuity of our objective functional.

\begin{lemma}\label{lem:cont-of-F}
    Let $\Dcal_r$ and $\Dcal_n$ be distributions over $\R^d$ such that $\E_{\Dcal_i}[\|x\|_{\ell_2}] < \infty$ for $i = r,n$. Then for any $K,L \in \Scal^d(\gamma)$, 
    we have \begin{align*}
        |F(K; \Dcal_r,\Dcal_n) - F(L; \Dcal_r,\Dcal_n)| \leqslant \frac{\E_{\Dcal_r}[\|x\|_{\ell_2}] + \E_{\Dcal_r}[\|x\|_{\ell_2}]}{\gamma^2} \delta(K,L).
    \end{align*}
\end{lemma}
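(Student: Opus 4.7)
The plan is to leverage the standard identity $\|x\|_K = \|x\|_{\ell_2}/\rho_K(x/\|x\|_{\ell_2})$ (valid for $x \neq 0$) to convert a bound on the radial functions into a pointwise bound on the gauges, then integrate.

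First I would write, for nonzero $x \in \R^d$ with $u := x/\|x\|_{\ell_2} \in \Sbb^{d-1}$,
\[
\bigl| \|x\|_K - \|x\|_L \bigr| \;=\; \|x\|_{\ell_2} \left| \frac{1}{\rho_K(u)} - \frac{1}{\rho_L(u)}\right| \;=\; \|x\|_{\ell_2}\, \frac{|\rho_L(u) - \rho_K(u)|}{\rho_K(u)\rho_L(u)}.
\]
Next I would use the well-conditioning hypothesis: since $K,L \in \Scal^d(\gamma)$, we have $\gamma B^d \subseteq \ker(K) \subseteq K$ and likewise for $L$, so $\rho_K(u), \rho_L(u) \geqslant \gamma$ on $\Sbb^{d-1}$. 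Combining this with the identification $\delta(K,L) = \|\rho_K - \rho_L\|_\infty$ gives the pointwise estimate
\[
\bigl| \|x\|_K - \|x\|_L \bigr| \;\leqslant\; \frac{\|x\|_{\ell_2}}{\gamma^2}\, \delta(K,L).
\]

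Then I would apply this pointwise bound inside the expectations defining $F$. By linearity and the triangle inequality,
\[
\bigl| F(K;\Dcal_r,\Dcal_n) - F(L;\Dcal_r,\Dcal_n) \bigr| \;\leqslant\; \E_{\Dcal_r}\bigl[\bigl|\|x\|_K - \|x\|_L\bigr|\bigr] + \E_{\Dcal_n}\bigl[\bigl|\|x\|_K - \|x\|_L\bigr|\bigr],
\]
and inserting the pointwise bound together with the finiteness assumption $\E_{\Dcal_i}[\|x\|_{\ell_2}] < \infty$ produces the claimed inequality.

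I do not expect a significant obstacle here: the only minor subtlety is handling $x=0$, which contributes zero to both sides and is anyway a measure-zero concern if $\Dcal_r,\Dcal_n$ have atoms at the origin (the identity $\|0\|_K = \|0\|_L = 0$ handles that case directly). The essential work is the elementary reciprocal estimate and the uniform lower bound $\rho_K, \rho_L \geqslant \gamma$ furnished by the kernel condition; the rest is Fubini/linearity.
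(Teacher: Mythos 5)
Your argument is correct and reaches the stated bound (up to a typo in the paper's statement, where the numerator should read $\E_{\Dcal_r}[\|x\|_{\ell_2}] + \E_{\Dcal_n}[\|x\|_{\ell_2}]$, not $\E_{\Dcal_r}$ twice; your derivation produces the right version). The paper's own proof is much shorter on the page: it applies the triangle inequality to split $F(K)-F(L)$ into the two one-measure differences, and then simply cites Corollary~2 of Leong et al.\ (2022), which provides the single-measure Lipschitz estimate $|\E_{\Dcal}[\|x\|_K] - \E_{\Dcal}[\|x\|_L]| \leqslant \gamma^{-2}\,\E_{\Dcal}[\|x\|_{\ell_2}]\,\delta(K,L)$. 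What you have done is unpack that cited corollary: the chain $\|x\|_K = \|x\|_{\ell_2}/\rho_K(x/\|x\|_{\ell_2})$, the reciprocal difference bound, the uniform lower bound $\rho_K,\rho_L \geqslant \gamma$ on $\Sbb^{d-1}$ coming from $\gamma B^d \subseteq \ker$, and the identification $\delta(K,L) = \|\rho_K - \rho_L\|_\infty$ are exactly the ingredients behind that result. So the underlying idea is the same; yours is simply more self-contained and makes the dependence on the well-conditioning parameter $\gamma$ transparent, whereas the paper defers to a black-box citation. Your remark about $x=0$ is also handled correctly: both gauges vanish there, so the pointwise inequality holds trivially.
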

\begin{proof}[Proof of Lemma \ref{lem:cont-of-F}]
    We first note via the triangle inequality that \begin{align*}
        |F(K; \Dcal_r,\Dcal_n) - F(L; \Dcal_r,\Dcal_n)| & \leqslant |\E_{\Dcal_r}[\|x\|_K] - \E_{\Dcal_r}[\|x\|_L]| + |\E_{\Dcal_n}[\|x\|_K] - \E_{\Dcal_n}[\|x\|_L]|.
    \end{align*} Then the result follows from Corollary 2 in \cite{Leongetal2022}.
\end{proof}

We additionally need the following local compactness result over the space of star bodies. 
\begin{theorem}[Theorem 5 in \cite{Leongetal2022}]
    \label{thm:BST-radial}
Fix $0 < \gamma < \infty$ and let $\Cfrak$ be a bounded and closed subset of $\Scal^d(\gamma)$. Let $(K_i)$ be a sequence of star bodies in $\Cfrak$. Then $(K_i)$ has a subsequence that converges in the radial and Hausdorff metric to a star body $K \in \Cfrak$.
\end{theorem}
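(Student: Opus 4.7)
The plan is to recast Theorem \ref{thm:BST-radial} as an Arzel\`a--Ascoli compactness argument applied to the radial functions. Since star bodies in $\Scal^d(\gamma)$ are in bijection with their radial functions via $\rho_K \in C(\Sbb^{d-1})$, and the radial metric equals $\delta(K,L) = \|\rho_K - \rho_L\|_\infty$, convergence in the radial metric is precisely uniform convergence of radial functions on $\Sbb^{d-1}$. So it suffices to show that the family $\{\rho_K : K \in \Cfrak\}$ is precompact in $C(\Sbb^{d-1})$.

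First I would establish uniform pointwise bounds. Boundedness of $\Cfrak$ yields $R > 0$ with $K \subseteq R B^d$ for every $K \in \Cfrak$, so $\rho_K \leqslant R$; the kernel condition $\gamma B^d \subseteq \ker(K)$ implies $\rho_K \geqslant \gamma$. Next I would establish equicontinuity by leveraging the fact (cf.\ \cite{Rubinov2000}) that $\gamma B^d \subseteq \ker(K)$ makes the gauge $\|\cdot\|_K$ Lipschitz with constant $1/\gamma$. Combined with the pointwise bounds $\|u\|_K \in [1/R,\, 1/\gamma]$ on $\Sbb^{d-1}$, the identity $\rho_K(u) = 1/\|u\|_K$ gives
\[
|\rho_K(u) - \rho_K(v)| \;=\; \frac{\bigl|\|v\|_K - \|u\|_K\bigr|}{\|u\|_K\, \|v\|_K} \;\leqslant\; \frac{R^2}{\gamma}\, \|u - v\|_{\ell_2}
\]
uniformly over $K \in \Cfrak$. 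Applying Arzel\`a--Ascoli on the compact space $\Sbb^{d-1}$ to this uniformly bounded and equicontinuous family produces a subsequence $\rho_{K_{i_j}}$ that converges uniformly to some continuous $\rho : \Sbb^{d-1} \to [\gamma, R]$. Since $\rho$ is strictly positive and continuous, it is the radial function of a unique star body $K$ with $\gamma B^d \subseteq K \subseteq R B^d$, and uniform convergence of radial functions is by definition $\delta(K_{i_j}, K) \to 0$. Closedness of $\Cfrak$ in the radial topology then forces $K \in \Cfrak$, and in particular $K \in \Scal^d(\gamma)$.

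To obtain Hausdorff convergence along the same subsequence, I would show that radial convergence forces Hausdorff convergence for star bodies: given $x \in K_{i_j}$, write $x = t u$ with $u \in \Sbb^{d-1}$ and $0 \leqslant t \leqslant \rho_{K_{i_j}}(u)$. Either $t \leqslant \rho_K(u)$, in which case $x \in K$, or else $\rho_K(u) \, u \in K$ lies at Euclidean distance $t - \rho_K(u) \leqslant \delta(K_{i_j}, K)$ from $x$. Swapping the roles of $K$ and $K_{i_j}$ yields the analogous bound, so the Hausdorff distance is dominated by the radial distance and also vanishes.

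The main obstacle is the equicontinuity step, which essentially depends on the well-conditioning built into $\Scal^d(\gamma)$. Without the kernel containment $\gamma B^d \subseteq \ker(K)$, radial functions of star bodies may exhibit arbitrarily sharp local oscillations (the boundary may ``crinkle'' between nearby directions) and Arzel\`a--Ascoli simply fails. The kernel hypothesis rescues the argument by providing a uniform convex base from which every boundary point of $K$ is visible, and it is precisely this structure that upgrades the pointwise bounds on $\rho_K$ to a Lipschitz modulus depending only on $\gamma$ and $R$.
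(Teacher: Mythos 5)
The paper does not actually prove this statement---it is imported verbatim as Theorem 5 of \cite{Leongetal2022}---but your argument is correct and follows the standard route used there: Arzel\`a--Ascoli applied to the radial functions, with uniform bounds $\gamma \leqslant \rho_K \leqslant R$ from the kernel containment and boundedness, equicontinuity from the $1/\gamma$-Lipschitz bound on the gauges, and the observation that the Hausdorff metric is dominated by the radial metric on star bodies. The only point worth making explicit is why the limit body lies in $\Scal^d(\gamma)$ before invoking closedness of $\Cfrak$: the $(1/\gamma)$-Lipschitz property passes to the pointwise limit of the gauges, so the Rubinov equivalence gives $\gamma B^d \subseteq \ker(K)$ for the limit, and closedness then yields $K \in \Cfrak$.
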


\begin{proof}[Proof of Theorem \ref{thm:existence-of-minimizers}]

The condition $\E_{\Dcal_r}[\|x\|_{\ell_2}],\E_{\Dcal_n}[\|x\|_{\ell_2}] < \infty$ guarantees that the expectation of $\|x\|_K$ exists for any $K \in \Scal^d(1)$. To see this, note that since $K \in \Scal^d(1)$, we have that $B^d \subseteq \ker(K)$ so that $\|x\|_K \leqslant \|x\|_{\ell_2}$ for all $x \in \R^d$. Thus $$\E_{\Dcal_i}[\|x\|_K] \leqslant \E_{\Dcal_i}[\|x\|_{\ell_2}] < \infty\ \text{for}\ i = r,n.$$

To prove the existence of minimizers, we first make the following useful connection regarding the minimization problem and the Wasserstein distance. Use the shorthand notation $F(K) := F(K; \Dcal_r,\Dcal_n)$. Recall the dual Kantorovich formulation of the $1$-Wasserstein distance  \cite{Villani-OT}:
$$W_1(\Dcal_r,\Dcal_n) = \sup_{f \in \Lip(1)} \E_{\Dcal_n}[f(x)] - \E_{\Dcal_r}[f(x)].$$ Since $\{\|\cdot\|_K : K \in \Scal^d(1)\} \subset \Lip(1)$, we see that \begin{align*}
    \inf_{K \in \Scal^d(1)} F(K) & = \inf_{\|\cdot\|_K \in \Lip(1)} \E_{\Dcal_r}[\|x\|_K] - \E_{\Dcal_n}[\|x\|_K] \\
    & \geqslant \inf_{f\in \Lip(1)} \E_{\Dcal_r}[f(x)] - \E_{\Dcal_n}[f(x)] \\
    & = \sup_{f\in \Lip(1)} \E_{\Dcal_n}[f(x)] - \E_{\Dcal_r}[f(x)] \\
    & = W_1(\Dcal_r,\Dcal_n).
\end{align*} This will prove to be useful in the existence proof.

To show the existence of a minimizer, we only consider the case when $W_1(\Dcal_r,\Dcal_n) > 0$ since if $W_1(\Dcal_r,\Dcal_n) =0$, then any $K \in \Scal^d(1)$ is a minimizer of the objective since $F(K) = 0$ for all $K \in \Scal^d(1)$. Consider a minimizing sequence $(K_i) \subset \Scal^d(1)$ such that $$F(K_i) \longrightarrow \inf_{K \in \Scal^d(1)} F(K)\ \text{as}\ i \rightarrow \infty.$$ We claim that this sequence must be bounded, i.e., there exists an $R > 0$ such that $K_i \subseteq RB^d$ for all $i \geqslant 1$. Suppose this is not true. Then, for any $\ell = 1,2,\dots$, there exists a subsequence $(K_{i_{\ell}}) \subseteq (K_i)$ such that $K_{i_{\ell}} \supseteq \ell B^d$ for all $\ell$. That is, $K_{i_{\ell}}$ grows arbitrarily large as $\ell \rightarrow \infty$.

By Lemma \ref{lem:cont-of-F}, we know that $F(\cdot)$ is continuous. Thus, we also must have that $F(K_{i_{\ell}}) \rightarrow \inf_{K \in \Scal^d(1)} F(K)$ as $\ell \rightarrow \infty$. But since $\ell B^d \subseteq K_{i_{\ell}}$, we must have that $\|x\|_{K_{i_{\ell}}} \leqslant \frac{1}{\ell}\|x\|_{\ell_2}$ for any $x \in \R^d$. This gives \begin{align*}
    \frac{1}{\ell}\E_{\Dcal_r}[\|x\|_{\ell_2}] \geqslant \E_{\Dcal_r}[\|x\|_{K_{i_{\ell}}}] \geqslant  F(K_{i_{\ell}})  \geqslant \E_{\Dcal_r}[\|x\|_{K_{i_{\ell}}}] - \frac{1}{\ell}\E_{\Dcal_n}[\|x\|_{\ell_2}].
\end{align*} Taking the limit as $\ell \rightarrow \infty$ shows $$\inf_{K \in \Scal^d(1)} F(K) = 0.$$ But this is a contradiction since $$0 = \inf_{K \in \Scal^d(1)} F(K) \geqslant W_1(\Dcal_r,\Dcal_n) > 0$$ by assumption. Thus, we must have that the sequence $(K_i)$ is bounded, i.e., there exists an $R > 0$ such that $K_i \subseteq RB^d$ for all $i \geqslant 1$.

To complete the proof, observe that this minimizing sequence $(K_i) \subseteq \Cfrak:=\{K \in \Scal^d(1) : K_i \subseteq R B^d\}$, which is a closed and bounded subset of $\Scal^d(1)$. By Theorem \ref{thm:BST-radial}, this sequence must have a convergent subsequence $(K_{i_n})$ with limit $K_* \in \Cfrak \subseteq \Scal^d(1)$. By continuity of $F(\cdot)$, this means $$\lim_{i_n \rightarrow \infty} F(K_{i_n}) = F(K_*) = \inf_{K \in \Scal^d(1)} F(K)$$ as desired.
    
\end{proof}



\begin{proof}[Proof of Theorem \ref{thm:main-adv-star-body-thm}]
 We first focus on proving the identity $$\E_{\Dcal_i}[\|x\|_K] = \int_{\mathbb{S}^{d-1}} \rho_{p_i}(u)^{d+1} \rho_K(u)^{-1}\mathrm{d}u\ \text{for}\ i = r,n.$$  We prove this for $\Dcal_r$, since the proof is identical for $\Dcal_n$. Observe that for any $K \in \Scal^d$, since $\|u\|_K = \rho_K(u)^{-1}$, integrating in spherical coordinates gives \begin{align*}
        \E_{\Dcal_r}[\|x\|_K] = \int_{\R^d} \|x\|_K p_r(x) \mathrm{d}x & = \int_{\mathbb{S}^{d-1}}\int_0^{\infty} t^d \|u\|_{K} p_r(tu) \mathrm{d}t \mathrm{d}u \\
        & = \int_{\mathbb{S}^{d-1}}\left(\int_0^{\infty} t^d  p_r(tu) \mathrm{d}t \right) \|u\|_K\mathrm{d}u \\
        & = \int_{\mathbb{S}^{d-1}}\rho_{p_r}(u)^{d+1} \rho_K(u)^{-1} \mathrm{d}u.
    \end{align*}
    
Now, applying this identity to our objective yields \begin{align*}
        \E_{\Dcal_r}[\|x\|_K] - \E_{\Dcal_n}[\|x\|_K]
        & = \int_{\Sbb^{d-1}}(\rho_{p_r}(u)^{d+1} - \rho_{p_n}(u)^{d+1})\rho_K(u)^{-1}\mathrm{d}u.
    \end{align*} Since $\rho_{p_r} > \rho_{p_n}$ on the unit sphere, we have that the map $\rho_{r,n}(u):= (\rho_{p_r}(u)^{d+1} - \rho_{p_n}(u)^{d+1})^{1/(d+1)}$ is positive and continuous over the unit sphere. It is also positively homogenous of degree $-1$, as both  $\rho_{p_r}$ and $\rho_{p_n}$ are. Then, we claim that $\rho_{r,n}$ is the radial function of the unique star body $$L_{r,n}:=\{x \in \R^d : \rho_{r,n}(x)^{-1} \leqslant 1\}.$$ We first show that $L_{r,n}$ is star-shaped with gauge $\|x\|_{L_{r,n}} = \rho_{r,n}(x)^{-1}$. Note that the set is star-shaped since for each $x \in L_{r,n}$, we have for any $t \in [0,1]$, $$\rho_{r,n}(tx)^{-1} = (\rho_{r,n}(x)/t)^{-1} = t \rho_{r,n}(x)^{-1} \leqslant t \leqslant 1$$ since $\rho_{r,n}$ is positively homogenous of degree $-1$. Hence $[0,x] \subseteq L_{r,n}$ for any $x \in L_{r,n}$. Then, we have that the gauge of $L_{r,n}$ satisfies \begin{align*}
     \|x\|_{L_{r,n}} & = \inf\{t : x/t \in  L_{r,n}\} = \inf\{t : \rho_{r,n}(x/t)^{-1} \leqslant 1\} \\
     & = \inf\{t : \rho_{r,n}(x)^{-1} \leqslant t\} = \rho_{r,n}(x)^{-1}.
 \end{align*} Hence the gauge of $L_{r,n}$ is precisely $\rho_{r,n}(x)^{-1}$. Moreover, by assumption, $u \mapsto \rho_{r,n}(u)$ is positive and continuous over the unit sphere so $L_{r,n}$ is a star body and is uniquely defined by $\rho_{r,n}$.
 
Combining the above results, we have the dual mixed volume interpretation \begin{align*}
        \E_{\Dcal_r}[\|x\|_K] - \E_{\Dcal_n}[\|x\|_K] &  = \int_{\Sbb^{d-1}}\rho_{r,n}(u)^{d+1}\rho_K(u)^{-1}\mathrm{d}u = d \tilde{V}_{-1}(L_{r,n}, K).
    \end{align*} Armed with this identity, we can apply Theorem \ref{thm:lutwak-dmv} to obtain \begin{align*}
        \E_{\Dcal_r}[\|x\|_K] - \E_{\Dcal_n}[\|x\|_K] \geqslant d\vol_{d}(L_{r,n})^{(d+1)/d}\vol_d(K)^{-1/d}
    \end{align*} with equality if and only if $L_{r,n}$ and $K$ are dilates. Hence the objective is minimized over the collection of unit-volume star bodies when $K_* := \vol_d(L_{r,n})^{-1/d} L_{r,n}.$
    
\end{proof}

\subsection{Proofs for Section \ref{sec:alt-losses}} \label{appx:alt-losses}

\begin{proof}[Proof of Theorem \ref{thm:hell-loss-characterization}]
Let $\rho_{\Dcal_r}(u)$ denote the function induced by $\Dcal_r$ via equation \eqref{eq:rho_P}. We first understand $\E_{\Dcal_n}[\|x\|_K^{-1}]:$ \begin{align*}
    \E_{\Dcal_n}[\|x\|_K^{-1}] & = \int_{\R^d}\|x\|_K^{-1} p_n(x) \mathrm{d}x \\
    & = \int_{\mathbb{S}^{d-1}}\int_0^{\infty}t^{d-2} p_n(tu) \mathrm{d}t\|u\|_K^{-1} \mathrm{d}u \\
    & = \int_{\mathbb{S}^{d-1}}\tilde{\rho}_{\Dcal_n}(u)^{d-1}\|u\|_K^{-1} \mathrm{d}u \\
    & = \int_{\mathbb{S}^{d-1}}\tilde{\rho}_{\Dcal_n}(u)^{d-1}\rho_K(u)\mathrm{d}u
\end{align*} where here we have defined $\tilde{\rho}_{\Dcal_n}(u) := \left(\int_0^{\infty}t^{d-2} p_n(tu) \mathrm{d}t\right)^{1/(d-1)}.$ Then, we can further write \begin{align*}
    \E_{\Dcal_r}[\|x\|_K] + \E_{\Dcal_n}[\|x\|_K^{-1}] & = \int_{\mathbb{S}^{d-1}}\rho_K(u)^{-1}\rho_{\Dcal_r}(u)^{d+1}\mathrm{d}u + \int_{\mathbb{S}^{d-1}}\rho_K(u)\tilde{\rho}_{\Dcal_n}(u)^{d-1} \mathrm{d}u \\
    & = \int_{\mathbb{S}^{d-1}}\rho_K(u)^{-1}\rho_{\Dcal_r}(u)^{d+1}\mathrm{d}u + \int_{\mathbb{S}^{d-1}}\rho_K(u)\frac{\tilde{\rho}_{\Dcal_n}(u)^{d-1}}{\rho_{\Dcal_r}(u)^{d+1}}\rho_{\Dcal_r}(u)^{d+1} \mathrm{d}u \\
    & = \int_{\mathbb{S}^{d-1}} \left(\rho_K(u)^{-1}+ \rho_K(u)\frac{\tilde{\rho}_{\Dcal_n}(u)^{d-1}}{\rho_{\Dcal_r}(u)^{d+1}}\right)\rho_{\Dcal_r}(u)^{d+1} \mathrm{d}u \\
    & =: d\tilde{V}_{-1}(L_{r}, K_{r,n})
\end{align*} where the star body $K_{r,n}$ is defined by its radial function $$u \mapsto \left( \rho_K(u)^{-1} +  \rho_K(u)\frac{\tilde{\rho}_{\Dcal_n}(u)^{d-1}}{\rho_{\Dcal_r}(u)^{d+1}}\right)^{-1}.$$ 
Note that $K_{r,n}$ is the \emph{$L_1$ harmonic radial combination} of $K$ and the star body with radial function $u \mapsto \frac{\rho_{\Dcal_r}(u)^{d+1}}{\rho_K(u)\tilde{\rho}_{\Dcal_n}(u)^{d-1}}$. Let $L_r$ denote the star body with $\rho_{\Dcal_r}$ as its radial function and $\tilde{L}_n$ denote the star body with radial function $\tilde{\rho}_{\Dcal_n}$.
The equality in the lower bound on $\tilde{V}_{-1}(K_{r,n}, L_{r})$ is achieved by a star body $K$ such that for some $\lambda > 0$,
\begin{align*}
 \rho_K(u)^{-1} + \rho_K(u)\frac{\rho_{\tilde{L}_n}(u)^{d-1}}{\rho_{L_r}(u)^{d+1}} = \frac{1}{\lambda \rho_{L_r}(u)}.   
\end{align*}
This is equivalent to
\begin{align*}
  \lambda \frac{\rho_{\tilde{L}_n}(u)^{d-1}}{\rho_{L_r}(u)^{d}}\rho_K(u)^2 - \rho_K(u) + \lambda \rho_{L_r}(u) = 0.
\end{align*} For notational simplicity, for $u \in \mathbb{S}^{d-1}$, let $a(u) = \rho_{L_r}(u)$ and $b(u) = \rho_{\tilde{L}_n}(u)$. Then for $\lambda > 0$, we have $$K_{r,n} = \lambda L_r \Longleftrightarrow \lambda\frac{b(u)^{d-1}}{a(u)^{d}} \rho_K(u)^2 - \rho_K(u) + \lambda a(u) = 0,\ \forall u \in \mathbb{S}^{d-1}.$$ To solve for $\rho_K(u)$, we must analyze the roots of the polynomial $f(x) := \lambda \frac{b(u)^{d-1}}{a(u)^{d}}x^2 - x + \lambda a(u)$. But its roots are given by $$x_* := x_*(u) := \frac{1 \pm \sqrt{1 - 4\lambda^2\left(\frac{b(u)}{a(u)}\right)^{d-1}} }{\frac{2\lambda b(u)^{d-1}}{a(u)^{d}}}.$$ Recall the requirement that $\rho_K$ must be positive and continuous over the unit sphere. For each $u \in \mathbb{S}^{d-1}$, $f$ has 2 positive roots if the discriminant is positive and 1 positive root if it is zero. Otherwise, both roots are complex. Hence, in order to have positive solutions for each $u$, we must have that $$1 - 4\lambda^2\left(\frac{b(u)}{a(u)}\right)^{d-1} \geqslant 0, \forall u \in \mathbb{S}^{d-1} \Longleftrightarrow 0 < \lambda \leqslant \sqrt{\frac{1}{4}\left(\frac{a(u)}{b(u)}\right)^{d-1}} \forall u \in \mathbb{S}^{d-1}.$$ Set $$\lambda_* :=\lambda_*(\Dcal_r,\Dcal_n) := \min_{u \in \mathbb{S}^{d-1}}\sqrt{\frac{1}{4}\left(\frac{a(u)}{b(u)}\right)^{d-1}} > 0.$$ Observe that $\lambda_*$ is positive and finite since $\rho_{L_r}$ and $\rho_{\tilde{L}_n}$ are positive and continuous over the unit sphere. Then for any $\lambda \in \left(0, \lambda_*\right]$, there are two possible star bodies $K_{+,\lambda},K_{-,\lambda}$ that satisfy the above equation:\begin{align*}
    \rho_{K_{+,\lambda}}(u) & : = \frac{\rho_{L_r}(u)^{d}}{2\lambda \rho_{\tilde{L}_n}(u)^{d-1}}\left(1 + \sqrt{1 - 4 \lambda^2 \left(\frac{\rho_{\tilde{L}_n}(u)}{\rho_{L_r}(u)}\right)^{d-1}}\right) \\
    \rho_{K_{-,\lambda}}(u) & : = \frac{\rho_{L_r}(u)^{d}}{2\lambda \rho_{\tilde{L}_n}(u)^{d-1}}\left(1 - \sqrt{1 - 4 \lambda^2 \left(\frac{\rho_{\tilde{L}_n}(u)}{\rho_{L_r}(u)}\right)^{d-1}}\right).
\end{align*} Notice that for any $u_*$ that achieves the minimum in the definition of $\lambda_*$, we have that $\rho_{K_{+,\lambda_*}}(u_*) = \rho_{K_{-,\lambda_*}}(u_*).$
    
\end{proof}

\subsubsection{A general result for $\alpha$-divergences} \label{appx:general-alpha-result}

As discussed in the main body of the paper, we are interested in deriving critic-based loss functions inspired by variational representations of divergences between probability measures. For general convex functions $f$, there is a well-known variational representation of the $f$-divergence $D_f$ defined via its convex conjugate $f^*(y) := \sup_{x \in \R} xy - f(x)$. 

\begin{prop}[Theorem 7.24 in \cite{polyanskiy_wu_info_theory}] For $f$ convex, let $f^*$ denote its convex conjugate and $\Omega_*$ denote the effective domain of $f^*$. Then any $f$-divergence has the following variational representation
$$D_f(P || Q) = \sup_{g : \R^d \rightarrow \Omega_*}\E_P[g] - \E_Q[f^* \circ g].$$
\end{prop}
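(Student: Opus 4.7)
The plan is to reduce the variational identity to pointwise Fenchel--Moreau duality applied to the Radon--Nikodym density $p := \mathrm{d}P/\mathrm{d}Q$, and then interchange the supremum and the integral via a measurable selection argument. Assume for simplicity that $P \ll Q$, so that $D_f(P\|Q) = \int f(p(x))\,\mathrm{d}Q(x)$. Since $f$ is convex and may be taken to be proper and lower semi-continuous (the usual convention under which $f$-divergences are defined), the Fenchel--Moreau theorem yields $f = f^{**}$; equivalently $f(t) = \sup_{y \in \Omega_*}[\, t y - f^*(y)\,]$ for every $t$ in the domain of $f$. This pointwise identity is the bridge between the integrand $f(p(x))$ and expressions of the form $y\, p(x) - f^*(y)$.

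The upper bound $\E_P[g] - \E_Q[f^* \circ g] \leq D_f(P\|Q)$ for every measurable $g : \R^d \to \Omega_*$ is the easy direction: apply the pointwise Fenchel--Young inequality $g(x)\, p(x) - f^*(g(x)) \leq f(p(x))$, integrate against $Q$, and take the supremum over $g$.

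For the matching lower bound, the key step is to construct a nearly optimal measurable selector. For each $\varepsilon > 0$, define the set-valued map $x \mapsto S_\varepsilon(x) := \{y \in \Omega_* : y\, p(x) - f^*(y) \geq f(p(x)) - \varepsilon\}$, which is nonempty by the Fenchel--Moreau identity above. A standard measurable selection theorem (equivalently, Rockafellar's theorem on the interchange of supremum and integral for normal integrands) produces a measurable $g_\varepsilon : \R^d \to \Omega_*$ with $g_\varepsilon(x) \in S_\varepsilon(x)$ almost everywhere. Integrating against $Q$, letting $\varepsilon \to 0$, and then removing an initial truncation of $p$ to $\{p \leq M\}$ as $M \to \infty$ via monotone convergence gives the reverse inequality and completes the proof.

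The main obstacle is the interchange of supremum and integral, which requires both the measurable selection (to realize the pointwise supremum as a measurable function) and a truncation argument to handle superlinear growth of $f$ or unboundedness of $p$; this is the only genuinely nontrivial step. A secondary subtlety, if one wants the identity in full generality, is the case $P \not\ll Q$, which is handled by extending $D_f$ via the recession function $f'_\infty$, noting that the asymptotic slope of $f^*$ on $\Omega_*$ exactly encodes $f'_\infty(1)$ and thus recovers the singular contribution to $D_f(P\|Q)$.
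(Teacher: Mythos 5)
The paper does not prove this proposition; it is stated verbatim as a citation of Theorem 7.24 in Polyanskiy and Wu's \emph{Information Theory} text, and is used as a black box to motivate the $\alpha$-divergence losses that follow. So there is no ``paper's own proof'' to compare against. That said, your outline is correct and is in fact the canonical route: the upper bound is pointwise Fenchel--Young integrated against $Q$, and the matching lower bound is Fenchel--Moreau plus a measurable selection (or Rockafellar's interchange theorem for normal integrands) to realize an $\varepsilon$-near-optimal critic as a measurable $g_\varepsilon$, with a truncation of the density to $\{p \le M\}$ and monotone convergence to handle the case $D_f(P\|Q) = \infty$ and unbounded densities. Two small points worth tightening if you wrote this out fully: (i) one should be explicit that the splitting of $\int\bigl(g_\varepsilon\,p - f^*\circ g_\varepsilon\bigr)\,\mathrm{d}Q$ into $\E_P[g_\varepsilon] - \E_Q[f^*\circ g_\varepsilon]$ is legitimate, i.e.\ that both pieces are individually (at least one-sidedly) integrable after truncation; and (ii) when $f$ is differentiable on $(0,\infty)$ one can skip the selection machinery entirely and take $g(x) = f'(p(x))$, which is measurable by composition and achieves equality in Fenchel--Young pointwise --- a useful shortcut that already covers the $\alpha$-divergence case $f_\alpha$ used in the paper. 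Your remark on the $P \not\ll Q$ case via the recession function $f'_\infty$ matching the asymptotic slope encoded in $\Omega_*$ is accurate, though the paper restricts to $P \ll Q$ anyway, so it is not needed here.
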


When applying this result to $\alpha$-divergences with $f(x) = f_{\alpha}(x) : =  \frac{x^{\alpha} - \alpha x - (1-\alpha)}{\alpha(\alpha-1)}$, we saw that it admits the representation \eqref{eq:alpha-div}. In particular, for $\alpha \in (-\infty,0) \cup (0,1)$, we can consider analyzing the following loss: $$\alpha^{-1}\E_{\Dcal_r}[\|x\|_K^{\alpha}] + (1-\alpha)^{-1}\E_{\Dcal_n}[\|x\|_K^{\alpha-1}].$$ One can show that this loss can be written in terms of dual mixed volumes. When $\alpha \in (0,1)$, the loss can be written as a single dual mixed volume. We will define the following notation, which will be useful throughout the proof: for a distribution $\Dcal$ with density $p_{\Dcal}$ and $\beta \in \R$, we set 
$$\rho_{\beta,\Dcal}(u) := \left(\int_0^{\infty} t^{d+\beta-1}p_{\Dcal}(tu)\mathrm{d}t\right)^{\frac{1}{d+\beta}}.$$

We prove the following:

\begin{thm} \label{prop:alpha-div-prop}
    Fix $\alpha \in (-\infty,0) \cup (0,1)$. For two distributions $\Dcal_r$ and $\Dcal_n$ with densities $p_r$ and $p_n$, suppose that $\rho_{\alpha,\Dcal_r}$ and $\rho_{\alpha-1,\Dcal_n}$ and are positive and continuous over the unit sphere. Let $L_r^{\alpha}$ and $L_n^{\alpha}$ denote the star bodies with radial functions $\rho_{\alpha,\Dcal_r}$ and $\rho_{\alpha-1,\Dcal_n}$, respectively. Then there exists star bodies $\tilde{L}_r^{\alpha}$ and $\tilde{L}_n^{\alpha}$ that depend on $\Dcal_r$, $\Dcal_n$, and $K$ such that $$\alpha^{-1}\E_{\Dcal_r}[\|x\|_K^{\alpha}] + (1-\alpha)^{-1}\E_{\Dcal_n}[\|x\|_K^{\alpha-1}] = \alpha^{-1} d \tilde{V}_{-1}(\tilde{L}^{\alpha}_r, K) + (1-\alpha)^{-1} d \tilde{V}_{-1}(\tilde{L}^{\alpha}_n, K).$$ If $\alpha \in (0,1)$, then there exists a star body $K^{\alpha}_{r,n}$ that depends on $\Dcal_r$, $\Dcal_n$, and $K$ such that $$\alpha^{-1}\E_{\Dcal_r}[\|x\|_K^{\alpha}] + (1-\alpha)^{-1}\E_{\Dcal_n}[\|x\|_K^{\alpha-1}] 
 = d\tilde{V}_{-\alpha}(K^{\alpha}_{r,n}, L_r^{\alpha})$$ where $\tilde{V}_{-\alpha}(L,K)$ is the $-\alpha$-dual mixed volume between $L$ and $K$. Moreover, we have the inequality $$\tilde{V}_{-\alpha}(L_r^{\alpha}, K_{r,n}^{\alpha}) \geqslant \vol_d(L_r^{\alpha})^{(d+\alpha)/d}\vol_d(K_{r,n}^{\alpha})^{-\alpha/d}$$ with equality if and only if $K_{r,n}^{\alpha}$ is a dilate of $L_r^{\alpha}$. Any star body $K$ such that $K_{r,n}^{\alpha}$ is a dilate of $L_r^{\alpha}$ must satisfy the following: there exists a $\lambda > 0$ such that $$\alpha^{-1}\rho_K(u)^{-\alpha} + (1-\alpha)^{-1}\frac{\rho_{L_n^{\alpha}}(u)^{d+\alpha-1}}{\rho_{L_r^{\alpha}}(u)^{d+\alpha}}\rho_K(u)^{1-\alpha} = \lambda^{-1/\alpha} \rho_{L_r}(u)^{-1/\alpha}\ \text{for all}\ u \in \mathbb{S}^{d-1}.$$
\end{thm}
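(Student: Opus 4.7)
The plan is to mirror the proof strategy of Theorem~\ref{thm:hell-loss-characterization}: reduce both expectations to spherical integrals, recognize them as (weighted) dual mixed volumes of suitably defined star bodies, and then invoke a Lutwak-type inequality to extract the lower bound together with the equality characterization. First, I would rewrite both expectations in spherical coordinates. Parameterizing $x = tu$ with $t > 0$ and $u \in \mathbb{S}^{d-1}$, positive homogeneity of the gauge gives $\|tu\|_K^{\beta} = t^{\beta}\rho_K(u)^{-\beta}$, so a direct calculation with $dx = t^{d-1}\,dt\,du$ yields
\begin{align*}
\E_{\Dcal_r}[\|x\|_K^{\alpha}] &= \int_{\mathbb{S}^{d-1}} \rho_K(u)^{-\alpha}\,\rho_{\alpha,\Dcal_r}(u)^{d+\alpha}\,du, \\
\E_{\Dcal_n}[\|x\|_K^{\alpha-1}] &= \int_{\mathbb{S}^{d-1}} \rho_K(u)^{1-\alpha}\,\rho_{\alpha-1,\Dcal_n}(u)^{d+\alpha-1}\,du.
\end{align*}
The hypotheses on $\rho_{\alpha,\Dcal_r}$ and $\rho_{\alpha-1,\Dcal_n}$ ensure both integrals are finite and well-defined.

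For the first (general) identity, I would match each integrand to the defining formula of the $(-1)$-dual mixed volume, $\tilde{V}_{-1}(L,K) = \frac{1}{d}\int \rho_L^{d+1}\rho_K^{-1}\,du$. Setting
\[
\rho_{\tilde{L}_r^{\alpha}}(u)^{d+1} := \rho_K(u)^{1-\alpha}\rho_{\alpha,\Dcal_r}(u)^{d+\alpha}, \qquad \rho_{\tilde{L}_n^{\alpha}}(u)^{d+1} := \rho_K(u)^{2-\alpha}\rho_{\alpha-1,\Dcal_n}(u)^{d+\alpha-1},
\]
the right-hand sides are positive and continuous on $\mathbb{S}^{d-1}$ (since $\rho_K$ is positive and continuous there and by the hypotheses on $\Dcal_r,\Dcal_n$), so these define legitimate star bodies $\tilde{L}_r^{\alpha},\tilde{L}_n^{\alpha}$. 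Substituting recovers the first claimed identity.

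For the second identity (with $\alpha \in (0,1)$), I would aim for a single $-\alpha$-dual mixed volume $\tilde{V}_{-\alpha}(L,K) = \frac{1}{d}\int \rho_L^{d+\alpha}\rho_K^{-\alpha}\,du$. Peeling off a factor of $\rho_{L_r^{\alpha}}^{-\alpha}$ from the combined integrand motivates the definition
\[
\rho_{K_{r,n}^{\alpha}}(u)^{d+\alpha} := \rho_{L_r^{\alpha}}(u)^{d+2\alpha}\left[\alpha^{-1}\rho_K(u)^{-\alpha} + (1-\alpha)^{-1}\rho_K(u)^{1-\alpha}\frac{\rho_{L_n^{\alpha}}(u)^{d+\alpha-1}}{\rho_{L_r^{\alpha}}(u)^{d+\alpha}}\right],
\]
which is positive and continuous on $\mathbb{S}^{d-1}$ by the same reasoning and hence defines a valid star body. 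A direct check then produces $d\tilde{V}_{-\alpha}(K_{r,n}^{\alpha}, L_r^{\alpha})$ as claimed. Applying the Lutwak dual mixed volume inequality for the index $i = -\alpha \notin [0,d]$ (the extension of Theorem~\ref{thm:lutwak-dmv} to all real indices outside $[0,d]$) yields the stated lower bound, with equality iff $K_{r,n}^{\alpha}$ and $L_r^{\alpha}$ are dilates. The equality characterization then comes by substituting $\rho_{K_{r,n}^{\alpha}} = \lambda\rho_{L_r^{\alpha}}$ in the definition above and dividing through by $\rho_{L_r^{\alpha}}^{d+2\alpha}$, which leaves the displayed pointwise equation (up to a harmless reparameterization of the free constant on the right-hand side).

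The main obstacle is the Lutwak dual mixed volume inequality for the fractional index $i = -\alpha \in (-1,0)$, which is not literally the $i = -1$ case stated in Theorem~\ref{thm:lutwak-dmv}. Nevertheless, Lutwak's original 1975 Theorem~2 in fact covers all real $i \notin [0,d]$ and follows from a reverse-H\"older argument on the sphere applied to the normalized radial density, so this extension is standard. A secondary technicality is verifying the star body structure of $\tilde{L}_r^{\alpha}$, $\tilde{L}_n^{\alpha}$, and $K_{r,n}^{\alpha}$ despite their defining expressions depending on $K$ itself; both positivity and continuity follow directly from the hypotheses on the distributions together with the fact that $K \in \Scal^d$ ensures $\rho_K$ is positive and continuous on $\mathbb{S}^{d-1}$.
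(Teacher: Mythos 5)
Your overall strategy is exactly the paper's: pass to spherical coordinates, package the two expectations as dual mixed volumes via suitably defined star bodies, and then invoke a Lutwak interpolation inequality for the $-\alpha$-dual mixed volume (in the paper this is Theorem~2 of \cite{Lutwak96}, applied with $(i,j,k) = (-\alpha, 0, d)$, rather than a generic extension of Theorem~\ref{thm:lutwak-dmv}). Your definitions of $\tilde{L}_r^\alpha$ and $\tilde{L}_n^\alpha$ agree with the paper's, and your spherical-coordinate identities are correct, as is the check of positive homogeneity of degree $-1$.

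However, there is a mismatch in how you set up $K_{r,n}^\alpha$ that does not carry through to the lower bound display. You chose
\[
\rho_{K_{r,n}^\alpha}(u)^{d+\alpha} := \rho_{L_r^\alpha}(u)^{d+2\alpha}\left[\alpha^{-1}\rho_K(u)^{-\alpha} + (1-\alpha)^{-1}\rho_K(u)^{1-\alpha}\frac{\rho_{L_n^\alpha}(u)^{d+\alpha-1}}{\rho_{L_r^\alpha}(u)^{d+\alpha}}\right],
\]
which is engineered so that the loss equals $d\,\tilde{V}_{-\alpha}(K_{r,n}^\alpha, L_r^\alpha)$, matching the first display in the theorem. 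But the lower bound the theorem asserts is on $\tilde{V}_{-\alpha}(L_r^\alpha, K_{r,n}^\alpha)$, and $\tilde{V}_{-\alpha}$ is \emph{not} symmetric in its arguments: $\tilde{V}_{-\alpha}(L,K) = \frac{1}{d}\int \rho_L^{d+\alpha}\rho_K^{-\alpha}$. With your $K_{r,n}^\alpha$, Lutwak applied to $\tilde{V}_{-\alpha}(K_{r,n}^\alpha, L_r^\alpha)$ gives the bound $\vol_d(K_{r,n}^\alpha)^{(d+\alpha)/d}\vol_d(L_r^\alpha)^{-\alpha/d}$, with the exponents landing on the wrong bodies; the ``stated lower bound'' $\vol_d(L_r^\alpha)^{(d+\alpha)/d}\vol_d(K_{r,n}^\alpha)^{-\alpha/d}$ is about a different quantity that, under your construction, does not equal the loss. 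The paper resolves this by defining $K_{r,n}^\alpha$ the other way: $\rho_{K_{r,n}^\alpha}^{-\alpha}$ is the bracket itself (i.e., $\rho_{K_{r,n}^\alpha}$ is the bracket raised to the power $-1/\alpha$), which makes the loss equal $d\,\tilde{V}_{-\alpha}(L_r^\alpha, K_{r,n}^\alpha)$, the orientation that appears in the lower bound. (The argument order in the theorem's first display, and the exponent $-\alpha$ in the paper's definition of $\rho_{K_{r,n}^\alpha}$, are themselves typos relative to the proof, which likely led you astray.)

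Two mitigating observations: first, the equality condition you derive, namely that the bracket be a constant multiple of $\rho_{L_r^\alpha}^{-\alpha}$, agrees with the paper's after relabeling the free constant, because ``$K_{r,n}^\alpha$ is a dilate of $L_r^\alpha$'' is a symmetric condition and both constructions become dilates of $L_r^\alpha$ under the same pointwise equation on $\rho_K$. Second, the fix is trivial: define $K_{r,n}^\alpha$ so that the bracket equals $\rho_{K_{r,n}^\alpha}^{-\alpha}$, at which point the loss is $d\,\tilde{V}_{-\alpha}(L_r^\alpha, K_{r,n}^\alpha)$ and Lutwak applied directly in this orientation gives exactly the stated bound with the correct exponents. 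You should also be explicit that the exponent in your $\rho_{K_{r,n}^\alpha}$ construction yields a degree $-1$ positively homogeneous, positive, continuous function on $\mathbb{S}^{d-1}$, which you assert but sketch only briefly.
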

\begin{proof}[Proof of Proposition \ref{prop:alpha-div-prop}]

Consider the loss $$\alpha^{-1}\E_{\Dcal_r}[\|x\|_K^{\alpha}] + (1-\alpha)^{-1}\E_{\Dcal_n}[\|x\|_K^{\alpha-1}].$$ We first focus on $\E_{\Dcal_r}[\|x\|_K^{\alpha}]$. Observe that \begin{align*}
    \E_{\Dcal_r}[\|x\|_K^{\alpha}] & = \int_{\R^d}\|x\|_K^{\alpha}p_r(x) \mathrm{d}x\\
    & = \int_{\mathbb{S}^{d-1}}\int_{0}^{\infty} t^{d+\alpha-1}p_r(tu)\mathrm{d}t \|u\|_K^{\alpha}\mathrm{d}u \\
    & = \int_{\mathbb{S}^{d-1}}\rho_{\alpha,\Dcal_r}(u)^{d+\alpha}\|u\|_K^{\alpha-1}\|u\|_K\mathrm{d}u \\
    & = \int_{\mathbb{S}^{d-1}}\rho_{\alpha,\Dcal_r}(u)^{d+\alpha}\rho_K(u)^{1-\alpha}\rho_K(u)^{-1}\mathrm{d}u.
\end{align*} Let $\tilde{L}^{\alpha}_r$ be the star body with radial function $$\rho_{\tilde{L}^{\alpha}_r}(u) := \left(\rho_{\alpha,\Dcal_r}(u)^{d+\alpha}\rho_K(u)^{1-\alpha}\right)^{\frac{1}{d+1}}.$$ Indeed, this is a valid radial function since $\rho_{\tilde{L}^{\alpha}_r}$ is positively homogenous of degree $-1$ and is positive and continuous over the unit sphere. This shows that $$\E_{\Dcal_r}[\|x\|_K^{\alpha}] = \int_{\mathbb{S}^{d-1}}\rho_{\tilde{L}^{\alpha}_r}(u)^{d+1}\rho_K(u)^{-1}\mathrm{d}u = d\tilde{V}_{-1}(\tilde{L}^{\alpha}_r, K).$$ Similarly, we see that 
    \begin{align*}
    \E_{\Dcal_n}[\|x\|_K^{\alpha-1}] & = \int_{\R^d}\|x\|_K^{\alpha-1}p_n(x) \mathrm{d}x\\
    & = \int_{\mathbb{S}^{d-1}}\int_{0}^{\infty} t^{d+\alpha-2}p_n(tu)\mathrm{d}t \|u\|_K^{\alpha-1}\mathrm{d}u \\
    & = \int_{\mathbb{S}^{d-1}}\rho_{\alpha-1,\Dcal_n}(u)^{d+\alpha-1}\|u\|_K^{\alpha-2}\|u\|_K\mathrm{d}u \\
    & = \int_{\mathbb{S}^{d-1}}\rho_{\alpha-1,\Dcal_n}(u)^{d+\alpha-1}\rho_K(u)^{2-\alpha}\rho_K(u)^{-1}\mathrm{d}u.
\end{align*} Let $\tilde{L}^{\alpha}_n$ be the star body with radial function $$\rho_{\tilde{L}^{\alpha}_n}(u) := \left(\rho_{\alpha-1,\Dcal_n}(u)^{d+\alpha-1}\rho_K(u)^{2-\alpha}\right)^{\frac{1}{d+1}}.$$ This shows that $$\E_{\Dcal_n}[\|x\|_K^{\alpha-1}] = \int_{\mathbb{S}^{d-1}}\rho_{\tilde{L}^{\alpha}_n}(u)^{d+1}\rho_K(u)^{-1}\mathrm{d}u = d\tilde{V}_{-1}(\tilde{L}^{\alpha}_n, K).$$ For the final result, note that if $\alpha \in (0,1)$, then \begin{align*}
    \alpha^{-1}\E_{\Dcal_r}[\|x\|_K^{\alpha}] & + (1-\alpha)^{-1}\E_{\Dcal_n}[\|x\|_K^{\alpha-1}] \\
    & = \int_{\mathbb{S}^{d-1}}\alpha^{-1} \rho_{\tilde{L}^{\alpha}_r}(u)^{d+1}\rho_K^{-1}(u) + (1-\alpha)^{-1}\rho_{\tilde{L}^{\alpha}_n}(u)^{d+1}\rho_K^{-1}(u) \mathrm{d}u \\
    & = \int_{\mathbb{S}^{d-1}}\left(\alpha^{-1}\rho_{\alpha,\Dcal_r}(u)^{d+\alpha}\rho_K(u)^{1-\alpha} + (1-\alpha)^{-1}\rho_{\alpha-1,\Dcal_n}(u)^{d+\alpha-1}\rho_K(u)^{2-\alpha}\right) \rho_K(u)^{-1} \mathrm{d}u \\
    & = \int_{\mathbb{S}^{d-1}}\rho_{\alpha,\Dcal_r}(u)^{d+\alpha}\left(\alpha^{-1}\rho_K(u)^{-\alpha} + (1-\alpha)^{-1}\frac{\rho_{\alpha-1,\Dcal_n}(u)^{d+\alpha-1}}{\rho_{\alpha,\Dcal_r}(u)^{d+\alpha}}\rho_K(u)^{1-\alpha}\right)\mathrm{d}u \\
    & =: d \tilde{V}_{-\alpha}(L_r^{\alpha}, K_{r,n}^{\alpha})
\end{align*} where we have defined the star body $K_{r,n}^{\alpha}$ via its radial function \begin{align*}
    u \mapsto \left(\alpha^{-1}\rho_K(u)^{-\alpha} + (1-\alpha)^{-1}\frac{\rho_{\alpha-1,\Dcal_n}(u)^{d+\alpha-1}}{\rho_{\alpha,\Dcal_r}(u)^{d+\alpha}}\rho_K(u)^{1-\alpha}\right)^{-\alpha}
\end{align*} and we also use the $-\alpha$-dual mixed volume, defined by \begin{align*}
    \tilde{V}_{-\alpha}(L,K) := \frac{1}{d}\int_{\mathbb{S}^{d-1}}\rho_L(u)^{d+\alpha}\rho_K(u)^{-\alpha} \mathrm{d}u.
\end{align*} The general dual mixed volume inequality of Lutwak (Theorem 2 in \cite{Lutwak96}) reads  $$\tilde{V}_{j}^{k-i}(L,K) \leqslant \tilde{V}_i^{k-j}(L,K) \tilde{V}_{k}^{j-i}(L,K),\ i < j < k, i,j,k \in \R,\ L,K \in \mathcal{S}^{d}.$$ Setting $i = -\alpha, j = 0, k = d$ gives $$\tilde{V}_{-\alpha}(L, K) \geqslant \vol_d(L)^{(d+\alpha)/d}\vol_d(K)^{-\alpha/d}$$ with equality if and only if $K$ is a dilate of $L.$ Applying this in our scenario gives $$\tilde{V}_{-\alpha}(L_r^{\alpha}, K_{r,n}^{\alpha}) \geqslant \vol_d(L_r^{\alpha})^{(d+\alpha)/d}\vol_d(K_{r,n}^{\alpha})^{-\alpha/d} \ \text{with equality}\ \Longleftrightarrow K_{r,n}^{\alpha} = \lambda L_r^{\alpha}.$$ This only holds if there exists a $K$ and a $\lambda > 0$ such that the radial function of $K$ satisfies \begin{align*}
    \left(\alpha^{-1}\rho_K(u)^{-\alpha} + (1-\alpha)^{-1}\frac{\rho_{L_n^{\alpha}}(u)^{d+\alpha-1}}{\rho_{L_r^{\alpha}}(u)^{d+\alpha}}\rho_K(u)^{1-\alpha}\right)^{-\alpha} = \lambda \rho_{L_r}(u)\ \text{for all}\ u \in \mathbb{S}^{d-1}.
\end{align*}
\end{proof}

\subsection{Proofs for Section \ref{sec:weak-convexity}} \label{appx:weak-conv-neural-net}

\begin{proof}[Proof of Proposition \ref{prop:weak-convexity}]
    Suppose $M_{2,\rho}$ is convex. Then its gauge $\|x\|_{M_{2,\rho}}$ is a convex function. One can show that for any non-decreasing convex function $\phi$ and non-negative convex function $f$, the composition $\phi \circ f$ is convex. Setting $f(x):= \|x\|_{M_{2,\rho}}$ and $\phi(t) := t^2$ shows that $\|x\|_{M_{2,\rho}}^2$ is convex. Recalling that $\|x\|_{M_{2,\rho}}^2 = \|x\|_K^2 + \frac{\rho}{2}\|x\|_{\ell_2}^2$ implies that $\|x\|_K^2$ is $\rho$-weakly convex.

    On the other hand, suppose $\|x\|_K^2$ is $\rho$-weakly convex. Then $\|x\|_{M_{2,\rho}}^2$ is convex. This implies that the level set $S_{\leqslant 1}:= \{x \in \R^d : \|x\|_{M_{2,\rho}}^2 \leqslant 1\}$ is convex. But notice that $\|x\|_{M_{2,\rho}}^2 \leqslant 1$ if and only if $\|x\|_{M_{2,\rho}} \leqslant 1$ so $$S_{\leqslant 1}=\{x \in \R^d : \|x\|_{M_{2,\rho}}^2 \leqslant 1\} = \{x \in \R^d : \|x\|_{M_{2,\rho}} \leqslant 1\} = M_{2,\rho}$$ which implies $M_{2,\rho}$ is convex.
    
\end{proof}


\begin{proof}[Proof of Proposition \ref{prop:neural-net-star}]
        
        Observe that $\Rcal(\cdot)$ is positively homogenous as the composition of positively homogenous functions. Hence $x \mapsto 1/\Rcal(x)$ is positively homogenous of degree $-1$. Then, we claim that $$\min_{u \in \mathbb{S}^{d-1}} \Rcal(u) > 0.$$ Note that since each $f_i$ is injective, positively homogenous, and continuous, it only vanishes at $0$ and is non-zero otherwise. Hence we have that for any $u \in \mathbb{S}^{d-1}$, $f_L \circ \dots \circ f_1(u) \neq 0$ so that $\mathcal{R}(u) \neq 0$. Moreover, since $\Rcal(\cdot)$ is continuous and $\mathbb{S}^{d-1}$ is compact, this minimum must be achieved by some $u_0 \in \mathbb{S}^{d-1}$. If $\Rcal(u_0) = 0$, then this would contradict injectivity of each layer and positivity of $\mathcal{G}(\cdot)$. Hence $\Rcal(u_0) > 0.$ We conclude by noting $u \mapsto 1/\mathcal{R}(u)$ is continuous over $\mathbb{S}^{d-1}$ since $\mathcal{R}(u)$ is continuous as the composition of continuous functions and it is positive over $\mathbb{S}^{d-1}$. This guarantees $K$ is a star body.

        Note that this proof also shows that $\Rcal(\cdot)$ is coercive, since for any sequence $(x_k) \subset \R^d$: $\|x_k\|_{\ell_2} \rightarrow \infty$ as $k \rightarrow \infty$, we have by positive homogenity of $\Rcal(\cdot)$, $$\Rcal(x_k) = \|x_k\|_{\ell_2}\Rcal(x_k/\|x_k\|_{\ell_2}) \geqslant \|x_k\|_{\ell_2}\min_{u \in \mathbb{S}^{d-1}} \Rcal(u) \rightarrow \infty\ \text{as}\ k \rightarrow \infty.$$
\end{proof}

\section{Experimental details}

\subsection{Hellinger and adversarial loss comparison on MNIST} \label{app:mnist-exps}

We provide additional implementation details for the experiments presented in Section \ref{sec:mnist-exps}. For our training data, we take $10000$ random samples from the MNIST training set, constituting our $\Dcal_r$ distribution. We then add Gaussian noise of variance $\sigma^2 = 0.05$ to each image, constituting our $\Dcal_n$ distribution. The regularizers were parameterized via a deep convolutional neural network. Specifically, the network has 8 convolutional layers with LeakyReLU activation functions and an additional 3-layer MLP with LeakyReLU activations and no bias terms. The final layer is the Euclidean 
 norm. This network implements a star-shaped regularizer, as outlined in the discussion of our Proposition \ref{prop:neural-net-star}. The regularizers were trained using the adversarial loss and Hellinger-based loss \eqref{eq:hellinger-functional}. We also used the gradient penalty term from \cite{Lunzetal18} for both losses. We used the Adam optimizer for $20000$ epochs and learning rate $10^{-3}$. 

After training the regularizer $\mathcal{R}_{\theta}$, we then use it to denoise noisy {\it test} samples $y = x_0 + z$ where $x_0$ is a random test sample from the MNIST training set and $z \sim \mathcal{N}(0,\sigma^2I)$ with $\sigma^2 = 0.05$. We do this by solving the following with gradient descent initialized at $y$:

$$\min_{x \in \R^d} \|x - y\|_{\ell_2}^2 + \lambda \mathcal{R}_{\theta}(x).$$

\noindent We ran gradient descent for $2000$ iterations with a learning rate of $10^{-3}$. For the choice of regularization parameter $\lambda$, we note that in \cite{Lunzetal18}, the authors fix this value to be $\lambda := 2 \tilde{\lambda}$ where $\tilde{\lambda} := \mathbb{E}_{\mathcal{N}(0,\sigma^2I)}\left[\|z\|_{\ell_2}\right]$ as the regularizer that achieves a small gradient penalty will be (approximately) $1$-Lipschitz. For the Hellinger-based network, we found that $\lambda = 5.1\tilde{\lambda}^2$ gave better performance, so we used this for recovery. This may potentially be due to the fact that the regularizer learned via the Hellinger-based loss is less Lipschitz than the one learned via adversarial regularization. We additionally hypothesize this may also result from the fact that the Hellinger loss and adversarial loss weight the distributions $\Dcal_r$ and $\Dcal_n$ differently, resulting in different regularity properties of the learned regularizer.  As such, we decided to additionally tune the regularization strength for the adversarially trained regularizer and found $\lambda = 0.75 \tilde{\lambda}$ performed better than the original fixed value. Further studying these hyperparameter choices and how they are influenced by the loss function is an interesting direction for future study.

\subsection{Analysis on positive homogeneity} \label{app:pos-hom-exps}

We conducted the same experiment as in Section \ref{sec:mnist-exps}. The main difference is that we changed the activation functions used in the network. Specifically, we use the same deep neural network as discussed in Section \ref{app:mnist-exps}, but considered the following four activation functions: LeakyReLU, Exponential Linear Unit (ELU), Tanh, and the Gaussian Error Linear Unit (GELU) activation. In Table \ref{tab:pos-hom-comparison}, we show the average PSNR and MSE of each regularizer over the same $100$ test images used in Section \ref{sec:mnist-exps}. We see that the Leaky ReLU-based regularizer outperforms regularizers using non-positively homogenous activation functions.

\begin{table}[h!]
\begin{center}
\begin{tabular}{lcc}
  \toprule
  & PSNR & MSE \\ 
  \midrule
  Leaky ReLU & 22.32 & 0.0065 \\ 
  \midrule
  ELU  & 20.74 & 0.0090 \\ 
  \midrule
  Tanh  & 13.66 & 0.0431 \\ 
  \midrule
  GELU &  18.90 & 0.0157 \\ 
  \bottomrule
\end{tabular}
\caption{We show the average PSNR and MSE in recovering $100$ test MNIST digits using regularizers trained via the adversarial loss. Each regularizer was parameterized by a convolutional neural network utilizing a single activation function from the following options: Leaky ReLU, ELU, Tanh, or GELU. The Leaky ReLU-based regularizer achieves the highest PSNR and lowest MSE.}
\label{tab:pos-hom-comparison}
\end{center}
\end{table}

\section{Additional results}

\subsection{Stability results} \label{app:stability}

We now illustrate stability results showcasing that if one learns star bodies over a finite amount of data, the sequence will exhibit a convergent subsequence whose limit will be a solution to the idealized population risk. We prove this result when the constraint set corresponds to star bodies of unit-volume with $\gamma B^d \subseteq \ker(K)$. Define $$\Scal^d(\gamma,1):=\{K \in \Scal^d : \gamma B^d \subseteq \ker(K),\ \vol_d(K) = 1\}.$$ A result due to \cite{Leongetal2022} shows that this subset of star bodies is uniformly bounded in the sense that there exists an $R > 0$ such that every $K \in \Scal^d(\gamma,1)$ satisfies $K \subseteq RB^d$. \begin{lemma}[Lemma 1 in \cite{Leongetal2022}]
    \label{lem:unif-bound}
   For any $\gamma > 0$, the collection $\Scal^d(\gamma,1)$ is a bounded subset of $\Scal^d(\gamma)$. In particular, for $R_{\gamma} := \frac{d+1}{\gamma^{d-1}\kappa_{d-1}}$ where $\kappa_{d-1} := \vol_{d-1}(B^{d-1})$, $$\Scal^d(\gamma,1)  \subseteq \{K \in \Scal^d(\gamma ) : K \subseteq R_{\gamma }B^d\}.$$
\end{lemma}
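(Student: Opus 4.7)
The plan is to prove the containment $\Scal^d(\gamma,1) \subseteq \{K \in \Scal^d(\gamma) : K \subseteq R_{\gamma}B^d\}$ by a direct geometric argument: the kernel condition $\gamma B^d \subseteq \ker(K)$ forces $K$ to contain a cone based at any boundary point, and the unit-volume constraint then caps how far that boundary point can sit from the origin. Since $K \subseteq R_\gamma B^d$ is equivalent to $\rho_K(u) \leqslant R_\gamma$ for every $u \in \mathbb{S}^{d-1}$, the target is a uniform pointwise upper bound on the radial function; no compactness or subsequence argument is needed.

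First, I would fix an arbitrary $K \in \Scal^d(\gamma,1)$ and direction $u \in \mathbb{S}^{d-1}$, and set $p := \rho_K(u)\, u \in \partial K$. By definition of $\ker(K)$, for every $q \in \gamma B^d$ the segment $[q,p]$ lies in $K$. Restricting $q$ to the equatorial disk $D_u := \gamma B^d \cap u^{\perp}$, which is a $(d-1)$-dimensional Euclidean ball of radius $\gamma$ with $\vol_{d-1}(D_u) = \gamma^{d-1}\kappa_{d-1}$, one obtains that the entire cone
\[
C_u := \{(1-t)q + t p : q \in D_u,\ t \in [0,1]\}
\]
is contained in $K$.

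Next, applying the standard formula for the volume of a $d$-dimensional cone with $(d-1)$-dimensional base $D_u \subset u^{\perp}$ and apex at distance $\rho_K(u)$ from $u^{\perp}$, we obtain $\vol_d(C_u) = \tfrac{1}{d}\,\gamma^{d-1}\kappa_{d-1}\,\rho_K(u)$. Combining with $C_u \subseteq K$ and $\vol_d(K) = 1$ yields
\[
\rho_K(u) \;\leqslant\; \frac{d}{\gamma^{d-1}\kappa_{d-1}} \;<\; \frac{d+1}{\gamma^{d-1}\kappa_{d-1}} \;=\; R_\gamma.
\]
Since $u \in \mathbb{S}^{d-1}$ was arbitrary, this gives $\rho_K \leqslant R_\gamma$ on the whole sphere, equivalently $K \subseteq R_\gamma B^d$, which is the desired conclusion.

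The main (and essentially only) obstacle is verifying rigorously that $C_u \subseteq K$; this follows immediately from the characterization of $\ker(K)$ as the set of points from which every segment to any point of $K$ remains in $K$, together with the fact that $D_u \subseteq \gamma B^d \subseteq \ker(K)$ and $p \in K$. The remaining step is a routine cone-volume computation, and the slack between the sharper bound $d/(\gamma^{d-1}\kappa_{d-1})$ produced by this argument and the stated bound $R_\gamma$ is just a convenient round-up in the lemma statement.
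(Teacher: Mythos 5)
Your argument is correct and self-contained. The key step is valid: since $D_u = \gamma B^d \cap u^\perp \subseteq \gamma B^d \subseteq \ker(K)$ and $p = \rho_K(u)\,u \in K$, the paper's definition $\ker(K) := \{x \in K : [x,y] \subseteq K,\ \forall y \in K\}$ directly gives $[q,p] \subseteq K$ for every $q \in D_u$, hence $C_u = \bigcup_{q\in D_u}[q,p] \subseteq K$. The cone is genuinely $d$-dimensional because $\gamma B^d \subseteq K$ forces $\rho_K(u) \geqslant \gamma > 0$, and the formula $\vol_d(C_u) = \tfrac{1}{d}\vol_{d-1}(D_u)\cdot\rho_K(u) = \tfrac{1}{d}\gamma^{d-1}\kappa_{d-1}\rho_K(u)$ is the standard cone-volume identity. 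Combining with $\vol_d(K)=1$ yields the pointwise bound on $\rho_K$, and the equivalence with $K \subseteq R_\gamma B^d$ via $\rho_K \leqslant \rho_{R_\gamma B^d}$ is exactly the monotonicity noted in the paper's preliminaries. Note the present paper only cites this lemma from \cite{Leongetal2022} without reproducing its proof, so there is no in-paper argument to compare against; your cone construction is the natural one, and you correctly observe that it actually yields the slightly sharper constant $d/(\gamma^{d-1}\kappa_{d-1}) < R_\gamma$, so the lemma as stated certainly follows. (For interest, one can sharpen further by adding the reflected cone $\mathrm{conv}(D_u \cup \{-\gamma u\}) \subseteq K$ on the other side of $u^\perp$, giving $\rho_K(u) \leqslant d/(\gamma^{d-1}\kappa_{d-1}) - \gamma$, but none of this is needed for the stated claim.)
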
 Now, to state the convergence result, for a distribution $\Dcal$, let $\Dcal^N$ denote the empirical distribution over $N$ i.i.d. draws from $\Dcal$. The proof of this result uses tools from $\Gamma$-convergence \cite{Braides-Gamma-Handbook} and local compactness properties of $\Scal^d(\gamma)$.
\begin{theorem} \label{thm:convergence-of-minimizers}
Let $\Dcal_r$ and $\Dcal_n$ be distributions over $\R^d$ such that  $\E_{\Dcal_r}\|x\|_{\ell_2}, \E_{\Dcal_n}\|x\|_{\ell_2} < \infty$ and fix $0 < \gamma < \infty$. Then the sequence of minimizers $(K_N^*) \subseteq \Scal^d(\gamma, 1)$ of $F(K; \Dcal^N_r, \Dcal^N_n)$ over $\Scal^d(\gamma,1)$ has the property that any convergent subsequence converges in the radial and Hausdorff metric to a minimizer of the population risk almost surely: $$\text{any convergent}\ (K_{N_{\ell}}^*)\ \text{satisfies}\ K_{N_{\ell}}^* \rightarrow K_* \in \argmin_{K \in \Scal^d(\gamma,1)} F(K; \Dcal_r,\Dcal_n).$$ Moreover, a convergent subsequence of $(K_N^*)$ exists.
\end{theorem}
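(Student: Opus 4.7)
The plan is to combine the local compactness result (Theorem \ref{thm:BST-radial}) with a $\Gamma$-convergence style argument based on the uniform Lipschitz bound from Lemma \ref{lem:cont-of-F} together with the strong law of large numbers.

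First, I would establish existence of a convergent subsequence of $(K_N^*)$. By Lemma \ref{lem:unif-bound}, every $K_N^*$ lies in the uniformly bounded set $\Scal^d(\gamma,1) \subseteq \{K \in \Scal^d(\gamma) : K \subseteq R_\gamma B^d\}$. The volume functional $\vol_d(K) = \tfrac{1}{d}\int_{\mathbb{S}^{d-1}} \rho_K^d \, du$ is continuous in the radial metric by dominated convergence, and the containment $\gamma B^d \subseteq \ker(K)$ is equivalent to $\|\cdot\|_K$ being $1/\gamma$-Lipschitz (by Rubinov's result cited in the paper), hence closed under radial convergence. Therefore $\Scal^d(\gamma,1)$ is a bounded closed subset of $\Scal^d(\gamma)$, and Theorem \ref{thm:BST-radial} yields a subsequence $(K_{N_\ell}^*)$ converging in the radial and Hausdorff metrics to some $K_* \in \Scal^d(\gamma,1)$.

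To show $K_*$ is a population minimizer, fix any competitor $K \in \Scal^d(\gamma,1)$. The minimizing property of $K_{N_\ell}^*$ gives
\[ F(K_{N_\ell}^*; \Dcal_r^{N_\ell}, \Dcal_n^{N_\ell}) \leqslant F(K; \Dcal_r^{N_\ell}, \Dcal_n^{N_\ell}). \]
Since $\|x\|_K \leqslant \|x\|_{\ell_2}/\gamma$ is $\Dcal_r$- and $\Dcal_n$-integrable under the hypothesis, the SLLN yields $F(K; \Dcal_r^{N_\ell}, \Dcal_n^{N_\ell}) \to F(K; \Dcal_r, \Dcal_n)$ almost surely. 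For the left-hand side, I would insert the intermediate term $F(K_*; \Dcal_r^{N_\ell}, \Dcal_n^{N_\ell})$ and bound the difference via Lemma \ref{lem:cont-of-F}: the first piece is at most $\gamma^{-2}(\E_{\Dcal_r^{N_\ell}}\|x\|_{\ell_2} + \E_{\Dcal_n^{N_\ell}}\|x\|_{\ell_2}) \cdot \delta(K_{N_\ell}^*, K_*)$, where the factor converges almost surely by a direct SLLN on $\|x\|_{\ell_2}$ and $\delta(K_{N_\ell}^*, K_*) \to 0$; the second piece vanishes almost surely by SLLN at $K_*$. Taking $\ell \to \infty$ then yields $F(K_*; \Dcal_r, \Dcal_n) \leqslant F(K; \Dcal_r, \Dcal_n)$.

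The main obstacle is that the above chain of almost-sure events depends on the choice of competitor $K$, and we need a single full-measure event that handles every $K \in \Scal^d(\gamma,1)$ simultaneously. To address this, I would fix a countable dense subset $\{K_m\} \subseteq \Scal^d(\gamma,1)$ (available since this space is separable in the radial metric), intersect the corresponding SLLN events to obtain a single event $\Omega_0$ of full measure on which convergence of $F(K_m; \Dcal_r^N, \Dcal_n^N)$ holds for every $m$, and additionally impose SLLN convergence of $\E_{\Dcal_r^N}\|x\|_{\ell_2}$ and $\E_{\Dcal_n^N}\|x\|_{\ell_2}$ on $\Omega_0$. For an arbitrary competitor $K$, approximating $K \approx K_m$ and applying Lemma \ref{lem:cont-of-F} twice (with Lipschitz constants simultaneously controlled on $\Omega_0$) transfers the convergence uniformly to all $K \in \Scal^d(\gamma,1)$, completing the argument on a single full-measure event.
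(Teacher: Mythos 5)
Your proof is correct and rests on the same two pillars as the paper's: compactness of $\Scal^d(\gamma,1)$ in the radial metric (Theorem \ref{thm:BST-radial} plus Lemma \ref{lem:unif-bound}), and uniform almost-sure convergence of the empirical risk $F(\cdot;\Dcal_r^N,\Dcal_n^N)$ to the population risk over $\Scal^d(\gamma,1)$, obtained by combining the Lipschitz estimate of Lemma \ref{lem:cont-of-F} with the pointwise SLLN on a countable dense (equivalently, a finite $\varepsilon$-net) family. Where you differ from the paper is in the packaging: the paper promotes the uniform SLLN to a standalone lemma (Theorem \ref{thm:strong-LLN}, proved via Pollard's bracketing condition) and then verifies it is enough to deduce $\Gamma$-convergence of the empirical risks, finishing with the Fundamental Theorem of $\Gamma$-convergence and an equi-coercivity observation; you instead run a direct minimizing-sequence argument, inserting the intermediate term $F(K_*;\Dcal^{N_\ell}_r,\Dcal^{N_\ell}_n)$ and passing to the limit. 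Your route is more elementary and avoids importing the $\Gamma$-convergence machinery, which is in some sense stronger than what is needed here given that you actually have \emph{uniform} convergence of the functionals; the paper's version is more modular (the uniform SLLN is reusable) and situates the result in a standard variational-analysis framework. One small point worth making explicit: the term $F(K_*;\Dcal_r^{N_\ell},\Dcal_n^{N_\ell})\to F(K_*;\Dcal_r,\Dcal_n)$ cannot be justified by a pointwise SLLN (since $K_*$ is data-dependent), so this step must be read as an application of the uniform estimate you establish on the event $\Omega_0$ in your final paragraph -- the uniform convergence handles the data-dependence of $K_*$ and $K_{N_\ell}^*$ as well as the arbitrariness of the competitor $K$, and your presentation would benefit from saying so. Finally, to go from the countable dense set to genuine uniformity you should pass through a finite $\varepsilon$-net at each scale (available since $\Scal^d(\gamma,1)$ is totally bounded, as you note), since a countable intersection of SLLN events gives pointwise but not automatically uniform convergence without that finiteness; this is exactly the structure in the paper's Theorem \ref{thm:strong-LLN}.
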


To prove this result, we first state the definition of $\Gamma$-convergence here and cite some useful results that will be needed in our proofs. 

\begin{definition}
Let $(F_i)$ be a sequence of functions $F_i : X \rightarrow \R$ on some topological space $X$. Then we say that $F_i$ $\Gamma$-converges to a limit $F$ and write $F_i \xrightarrow[]{\Gamma} F$ if  the following conditions hold:
\begin{itemize}
    \item For any $x \in X$ and any sequence $(x_i)$ such that $x_i \rightarrow x$, we have \begin{align*}
        F(x) \leqslant \liminf_{i \rightarrow \infty} F_i(x_i).
    \end{align*}
    \item For any $x \in X$, we can find a sequence $x_i \rightarrow x$ such that \begin{align*}
        F(x) \geqslant \limsup_{i \rightarrow \infty} F_i(x_i).
    \end{align*}
\end{itemize} In fact, if the first condition holds, then the second condition could be taken to be the following: for any $x \in X$, there exists a sequence $x_i\rightarrow x$ such that $\lim_{i\rightarrow\infty} F_i(x_i) = F(x).$
\end{definition}

In addition to $\Gamma$-convergence, we also require the notion of equi-coercivity, which states that minimizers of a sequence of functions are attained over a compact domain.
\begin{definition}
A family of functions $F_i : X \rightarrow \R$ is equi-coercive if for all $\alpha$, there exists a compact set $K_{\alpha} \subseteq X$ such that $\{x \in X : F_i(x) \leqslant \alpha\} \subseteq K_{\alpha}.$
\end{definition}

Finally, this notion combined with $\Gamma$-convergence guarantees convergence of minimizers, which is known as the Fundamental Theorem of $\Gamma$-convergence \cite{Braides-Gamma-Handbook}.

\begin{proposition}[Fundamental Theorem of $\Gamma$-Convergence] \label{prop:fund-thm-gamma-conv}
If $F_i \xrightarrow[]{\Gamma} F$ and the family $(F_i)$ is equi-coercive, then the every limit point of the sequence of minimizers $(x_i)$ where $x_i \in \argmin_{x \in X} F_i(x)$ converges to some $x \in \argmin_{x \in X}F(x)$.
\end{proposition}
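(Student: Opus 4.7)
The plan is to invoke the Fundamental Theorem of $\Gamma$-Convergence (Proposition \ref{prop:fund-thm-gamma-conv}) applied to the random functionals $F_N(K) := F(K; \Dcal_r^N, \Dcal_n^N)$ and the population functional $F_\infty(K) := F(K; \Dcal_r, \Dcal_n)$ on the ambient space $\Scal^d(\gamma, 1)$ equipped with the radial metric. This requires verifying (i) $F_N \xrightarrow{\Gamma} F_\infty$ almost surely, and (ii) equi-coercivity of $(F_N)$. Both will follow from two ingredients: sequential compactness of $\Scal^d(\gamma, 1)$ in the radial topology, and almost sure uniform convergence of $F_N$ to $F_\infty$ on $\Scal^d(\gamma, 1)$.

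\textbf{Compactness and existence of subsequence.} I would first show that $\Scal^d(\gamma, 1)$ is sequentially compact in the radial metric. By Lemma \ref{lem:unif-bound}, it is contained in the bounded collection $\{K \in \Scal^d(\gamma) : K \subseteq R_\gamma B^d\}$, and it is closed in $\Scal^d(\gamma)$: the kernel containment $\gamma B^d \subseteq \ker(K)$ passes to radial limits, and the volume functional $K \mapsto \frac{1}{d}\int_{\Sbb^{d-1}} \rho_K(u)^d \, du$ is continuous in the radial metric (radial convergence is uniform convergence of $\rho_K$), so $\vol_d(K) = 1$ is preserved. Theorem \ref{thm:BST-radial} then implies every sequence in $\Scal^d(\gamma, 1)$ admits a subsequence converging in both the radial and Hausdorff metrics to some limit in $\Scal^d(\gamma, 1)$. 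This both produces the desired convergent subsequence of $(K_N^*)$ and immediately yields equi-coercivity, since every sublevel set of $F_N$ lies inside the compact ambient space.

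\textbf{Almost sure uniform convergence.} The main technical step, and what I expect to be the principal (but relatively standard) obstacle, is the almost sure bound $\sup_{K \in \Scal^d(\gamma,1)} |F_N(K) - F_\infty(K)| \to 0$. I would proceed by an Arzelà–Ascoli-type argument. By separability of the compact metric space $(\Scal^d(\gamma,1), \delta)$, fix a countable dense subset $D$. For each $K \in D$, the bound $\|x\|_K \leqslant \gamma^{-1}\|x\|_{\ell_2}$ (from $\gamma B^d \subseteq K$) together with the finite first moments $\E_{\Dcal_r}\|x\|_{\ell_2}, \E_{\Dcal_n}\|x\|_{\ell_2} < \infty$ lets the SLLN give pointwise almost sure convergence $F_N(K) \to F_\infty(K)$ on an event $\Omega_K$ of probability one; intersecting over the countable set $D$ produces a single event $\Omega_0 := \bigcap_{K \in D} \Omega_K$ of probability one on which pointwise convergence holds throughout $D$. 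By Lemma \ref{lem:cont-of-F}, each $F_N$ is Lipschitz in the radial metric with constant $\gamma^{-2}(\E_{\Dcal_r^N}\|x\|_{\ell_2} + \E_{\Dcal_n^N}\|x\|_{\ell_2})$, and applying the SLLN to $\|x\|_{\ell_2}$ shows these constants stay uniformly bounded in $N$ on an event $\Omega_0'$ of probability one. On $\Omega_0 \cap \Omega_0'$, uniform equicontinuity combined with pointwise convergence on a dense subset of a compact space upgrades to uniform convergence via a classical $\epsilon$-net argument.

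\textbf{$\Gamma$-convergence and conclusion.} Uniform convergence on the compact set $\Scal^d(\gamma, 1)$, combined with the continuity of $F_\infty$ provided by Lemma \ref{lem:cont-of-F}, immediately implies $\Gamma$-convergence: for any radial-metric convergent sequence $K_N \to K$ one has $|F_N(K_N) - F_\infty(K)| \leqslant \sup_{L \in \Scal^d(\gamma,1)} |F_N(L) - F_\infty(L)| + |F_\infty(K_N) - F_\infty(K)| \to 0$, which supplies both the $\liminf$ inequality (for arbitrary $K_N \to K$) and the $\limsup$ inequality (via the constant recovery sequence $K_N \equiv K$). Proposition \ref{prop:fund-thm-gamma-conv} then asserts that every convergent subsequence of the empirical minimizers $(K_N^*)$ has its limit in $\argmin_{K \in \Scal^d(\gamma,1)} F(K; \Dcal_r, \Dcal_n)$, and the existence of such a subsequence was already secured from the compactness step, completing the proof on the probability-one event $\Omega_0 \cap \Omega_0'$.
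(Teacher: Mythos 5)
Your proposal does not prove the statement in question. The statement is Proposition \ref{prop:fund-thm-gamma-conv} itself --- the abstract Fundamental Theorem of $\Gamma$-Convergence, which the paper cites from the $\Gamma$-convergence literature and uses as a black box. Your argument instead proves Theorem \ref{thm:convergence-of-minimizers} (the stability of empirical minimizers of $F(\cdot;\Dcal_r^N,\Dcal_n^N)$), and it does so precisely by \emph{invoking} Proposition \ref{prop:fund-thm-gamma-conv} in its very first sentence. Read as a proof of the proposition, this is circular; read as a proof of the downstream theorem, it is answering a different question. (As an aside, your verification of $\Gamma$-convergence and equi-coercivity for the functionals $F_N$ closely mirrors what the paper does in Section \ref{sec:final-convergence-of-minimizers} via Theorems \ref{thm:strong-LLN} and \ref{thm:BST-radial}, so that material is sound --- it is just not what was asked.)

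What is actually needed is the short, standard abstract argument. Let $x_{i_k} \to x$ be a convergent subsequence of the minimizers (equi-coercivity guarantees the $x_i$ lie in a compact set, so limit points exist). The $\liminf$ inequality of $\Gamma$-convergence gives $F(x) \leqslant \liminf_k F_{i_k}(x_{i_k})$. For any competitor $y \in X$, the recovery-sequence property provides $y_i \to y$ with $\limsup_i F_i(y_i) \leqslant F(y)$; since $x_i$ minimizes $F_i$, we have $F_{i_k}(x_{i_k}) \leqslant F_{i_k}(y_{i_k})$, and chaining these yields $F(x) \leqslant \liminf_k F_{i_k}(x_{i_k}) \leqslant \limsup_i F_i(y_i) \leqslant F(y)$. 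As $y$ was arbitrary, $x \in \argmin_{X} F$. None of the measure-theoretic or star-geometric machinery in your proposal belongs in this proof.
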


In order to show that our empirical risk functional $\Gamma$-converges to the population risk, we need to show that the empirical risk uniformly converges to the population risk as the amount of samples $N \rightarrow \infty$. We exploit the following result, which shows that uniform convergence is possible when the hypothesis class of functions admits $\epsilon$-coverings.

\begin{theorem}[Theorem 3 in \cite{Pollard84}] \label{thm:USLLN}
Let $Q$ be a probability measure, and let $Q_m$ be the corresponding empirical measure. Let $\mathcal{G}$ be a collection of $Q$-integrable functions. Suppose that for every $\varepsilon > 0$
there exists a finite collection of functions $\mathcal{G}_{\varepsilon}$ such that for every $g \in \mathcal{G}$ there exists $\overline{g}, \underline{g} \in \mathcal{G}_{\varepsilon}$ satisfying (i) $\underline{g}  \leqslant g \leqslant  \overline{g}$, and (ii) $\mathbb{E}_Q[\overline{g} - \underline{g}]< \varepsilon$. Then $\sup_{g \in \mathcal{G}}|\mathbb{E}_{Q_m}[g] - \mathbb{E}_Q[g]| \rightarrow 0$ almost surely.
\end{theorem}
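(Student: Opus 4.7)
The plan is to prove the uniform strong law by a bracketing argument that converts the supremum over the (possibly uncountable) class $\mathcal{G}$ into a maximum over the finite bracketing set $\mathcal{G}_\varepsilon$, up to an $\varepsilon$ error. The main tool is the classical strong law of large numbers applied pointwise to each element of $\mathcal{G}_\varepsilon$, combined with the two bracketing inequalities (i) and (ii).

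First I would fix $\varepsilon > 0$ and invoke the hypothesis to obtain a finite bracketing family $\mathcal{G}_\varepsilon$. For an arbitrary $g \in \mathcal{G}$, pick $\underline{g}, \overline{g} \in \mathcal{G}_\varepsilon$ with $\underline{g} \leqslant g \leqslant \overline{g}$ and $\mathbb{E}_Q[\overline{g} - \underline{g}] < \varepsilon$. Since $Q_m$ is a positive measure, $\mathbb{E}_{Q_m}[g] \leqslant \mathbb{E}_{Q_m}[\overline{g}]$, and therefore
\begin{align*}
\mathbb{E}_{Q_m}[g] - \mathbb{E}_Q[g]
&\leqslant \bigl(\mathbb{E}_{Q_m}[\overline{g}] - \mathbb{E}_Q[\overline{g}]\bigr) + \mathbb{E}_Q[\overline{g} - g] \\
&\leqslant \bigl(\mathbb{E}_{Q_m}[\overline{g}] - \mathbb{E}_Q[\overline{g}]\bigr) + \varepsilon.
\end{align*}
The symmetric lower bound, using $\mathbb{E}_{Q_m}[g] \geqslant \mathbb{E}_{Q_m}[\underline{g}]$, yields
\begin{align*}
\mathbb{E}_{Q_m}[g] - \mathbb{E}_Q[g] \geqslant \bigl(\mathbb{E}_{Q_m}[\underline{g}] - \mathbb{E}_Q[\underline{g}]\bigr) - \varepsilon.
\end{align*}
Combining these with a supremum over $g \in \mathcal{G}$ gives the key reduction
\begin{align*}
\sup_{g \in \mathcal{G}} \bigl|\mathbb{E}_{Q_m}[g] - \mathbb{E}_Q[g]\bigr| \leqslant \max_{h \in \mathcal{G}_\varepsilon} \bigl|\mathbb{E}_{Q_m}[h] - \mathbb{E}_Q[h]\bigr| + \varepsilon.
\end{align*}

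Next I would apply Kolmogorov's strong law of large numbers to each $h \in \mathcal{G}_\varepsilon$, which is legitimate because every $h \in \mathcal{G}_\varepsilon$ is $Q$-integrable (it brackets an element of $\mathcal{G}$ and differs from it by a function of finite $Q$-mean). Since $\mathcal{G}_\varepsilon$ is finite, a finite union of null sets is null, so almost surely $\max_{h \in \mathcal{G}_\varepsilon} |\mathbb{E}_{Q_m}[h] - \mathbb{E}_Q[h]| \to 0$ as $m \to \infty$. Combining with the display above, almost surely
\begin{align*}
\limsup_{m \to \infty} \sup_{g \in \mathcal{G}} \bigl|\mathbb{E}_{Q_m}[g] - \mathbb{E}_Q[g]\bigr| \leqslant \varepsilon.
\end{align*}

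Finally, to push $\varepsilon \to 0$ without incurring an uncountable union of null sets, I would fix a countable sequence $\varepsilon_k \downarrow 0$ and repeat the argument for each $\varepsilon_k$, obtaining for each $k$ a null set $N_k$ outside of which the above lim sup is at most $\varepsilon_k$. Taking $N = \bigcup_k N_k$ (still null), the lim sup is zero on $N^c$, establishing the claimed almost sure uniform convergence. The main technical point to be careful about is exactly this measurability/countability step: one must avoid uncountably many exceptional null sets, which is why we commit to a fixed countable sequence $\varepsilon_k$ at the end rather than sending $\varepsilon$ to zero continuously; the bracketing reduction above is what makes this harmless.
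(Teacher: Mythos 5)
Your argument is correct: it is the standard bracketing proof of this Glivenko--Cantelli-type result (reduce the supremum over $\mathcal{G}$ to a maximum over the finite bracketing class at the cost of $\varepsilon$, apply the pointwise strong law to each bracket, and take a countable sequence $\varepsilon_k \downarrow 0$ to control the exceptional null sets), which is essentially the proof in Pollard's book. The paper itself does not prove this statement --- it imports it verbatim as Theorem 3 of \cite{Pollard84} and uses it as a black box in the proof of Theorem \ref{thm:strong-LLN} --- so there is nothing in the paper to compare against; your reconstruction fills that citation correctly.
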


We now state and prove the uniform convergence result. 

\begin{theorem} \label{thm:strong-LLN}
Fix $0 < \gamma < \infty$ and let $\Dcal_r$ and $\Dcal_n$ be distributions on $\R^d$ such that $\E_{\Dcal_i}\|x\|_{\ell_2} < \infty$ for $i = r,n$. Then, we have strong consistency in the sense that \begin{align*}
    \sup_{K \in \Scal^d(\gamma ,1)} | F(K; \Dcal^N_r, \Dcal_n^N) - F(K; \Dcal_r,\Dcal_n) | \rightarrow 0\ \text{as}\ N \rightarrow \infty\ \text{almost surely}.
\end{align*}
\end{theorem}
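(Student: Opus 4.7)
The strategy is to invoke Pollard's uniform strong law (Theorem \ref{thm:USLLN}) separately against $\Dcal_r$ and $\Dcal_n$ on the function class
\[
\mathcal{G} := \bigl\{ g_K : x \mapsto \|x\|_K \,\big|\, K \in \Scal^d(\gamma,1)\bigr\},
\]
and then combine the two statements by the triangle inequality
\[
\sup_{K \in \Scal^d(\gamma,1)} |F(K;\Dcal_r^N,\Dcal_n^N) - F(K;\Dcal_r,\Dcal_n)| \leqslant \sum_{i=r,n} \sup_{K \in \Scal^d(\gamma,1)} \bigl|\E_{\Dcal_i^N}[g_K] - \E_{\Dcal_i}[g_K]\bigr|.
\]
The two summands are handled symmetrically, so it suffices to verify the two hypotheses of Theorem \ref{thm:USLLN} for the class $\mathcal{G}$ against each $\Dcal_i$.

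Integrability of every $g_K \in \mathcal{G}$ is automatic: since $\gamma B^d \subseteq \ker(K) \subseteq K$, we have $\|x\|_K \leqslant \gamma^{-1}\|x\|_{\ell_2}$ pointwise, and $\E_{\Dcal_i}[\|x\|_{\ell_2}]<\infty$ by assumption. The bracketing condition is the real content. I would construct $\mathcal{G}_\varepsilon$ from a finite $\eta$-net in the radial metric together with a pointwise Lipschitz estimate. Specifically, from $\rho_K \geqslant \gamma$ on $\mathbb{S}^{d-1}$ and the identity $\|x\|_K = \|x\|_{\ell_2}/\rho_K(x/\|x\|_{\ell_2})$, one gets the pointwise bound
\[
|\|x\|_K - \|x\|_L| \leqslant \tfrac{\|x\|_{\ell_2}}{\gamma^2}\,\delta(K,L)\quad\text{for all}\ K,L \in \Scal^d(\gamma,1),
\]
which is the pointwise version of the inequality underlying Lemma \ref{lem:cont-of-F}.

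To produce a finite net in $\Scal^d(\gamma,1)$, I would argue that this set is totally bounded in the radial metric. It is bounded by Lemma \ref{lem:unif-bound}, and its closedness inside $\Scal^d(\gamma)$ follows from continuity of the volume functional $K \mapsto \vol_d(K) = d^{-1}\int_{\mathbb{S}^{d-1}}\rho_K^d\,\mathrm{d}u$ on $\Scal^d(\gamma)$ (apply bounded convergence using $\gamma \leqslant \rho_{K_n} \leqslant R_\gamma$) together with the fact that the kernel condition $\gamma B^d \subseteq \ker(K)$ is preserved under radial convergence. Theorem \ref{thm:BST-radial} then promotes $\Scal^d(\gamma,1)$ from bounded-closed to sequentially compact, hence totally bounded. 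For any $\eta>0$, choose a finite $\eta$-net $\{K_1,\ldots,K_M\}\subseteq \Scal^d(\gamma,1)$ and define
\[
\underline{g}_j(x) := \|x\|_{K_j} - \tfrac{\eta}{\gamma^2}\|x\|_{\ell_2},\qquad \overline{g}_j(x) := \|x\|_{K_j} + \tfrac{\eta}{\gamma^2}\|x\|_{\ell_2}, \qquad j=1,\ldots,M.
\]
For any $K$, picking $K_j$ with $\delta(K,K_j)\leqslant \eta$ gives $\underline{g}_j \leqslant g_K \leqslant \overline{g}_j$ by the pointwise Lipschitz bound, and $\E_{\Dcal_i}[\overline{g}_j - \underline{g}_j] = 2\gamma^{-2}\eta\,\E_{\Dcal_i}[\|x\|_{\ell_2}]$, which is $<\varepsilon$ once $\eta < \varepsilon\gamma^2/(2\E_{\Dcal_i}[\|x\|_{\ell_2}])$. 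Theorem \ref{thm:USLLN} then delivers uniform convergence for each summand, and the triangle inequality above completes the proof.

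The main obstacle I expect is the total-boundedness step, since Theorem \ref{thm:BST-radial} by itself yields only local compactness on $\Scal^d(\gamma)$; upgrading to compactness of the unit-volume slice requires justifying closedness under radial convergence, which is what the bounded-convergence continuity of $\rho_K \mapsto \int \rho_K^d$ is designed to give. Everything else — integrability, the pointwise Lipschitz estimate, and assembling the brackets — is bookkeeping once this compactness is in place.
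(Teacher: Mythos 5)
Your proof follows the same high-level route as the paper: decompose the supremum via the triangle inequality into the two distributions, apply Pollard's uniform strong law (Theorem~\ref{thm:USLLN}) to each, and verify the bracketing hypothesis by building a finite net from the sequential compactness of $\Scal^d(\gamma,1)$ in the radial metric (itself obtained from Lemma~\ref{lem:unif-bound} and Theorem~\ref{thm:BST-radial}). The one place where you genuinely depart from the paper is the construction of the brackets, and your version is actually the more careful one. The paper sets $\overline{f}(x) = \|x\|_{K_0} + CM_P\eta$ and $\underline{f}(x) = (\|x\|_{K_0} - CM_P\eta)_+$ with $M_P := \max_i \E_{\Dcal_i}\|x\|_{\ell_2}$, and asserts $\underline{f}\leqslant f\leqslant \overline{f}$; but the available pointwise estimate is $|\|x\|_K - \|x\|_{K_0}| \leqslant \gamma^{-2}\|x\|_{\ell_2}\,\delta(K,K_0)$, whose right-hand side exceeds $CM_P\eta$ whenever $\|x\|_{\ell_2} > M_P$, so those constant-width brackets do not actually bracket pointwise. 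Your choice $\overline{g}_j(x) = \|x\|_{K_j} + \eta\gamma^{-2}\|x\|_{\ell_2}$, $\underline{g}_j(x) = \|x\|_{K_j} - \eta\gamma^{-2}\|x\|_{\ell_2}$ does bracket pointwise, and the expected gap $2\eta\gamma^{-2}\E_{\Dcal_i}\|x\|_{\ell_2}$ is controlled identically by the choice $\eta < \varepsilon\gamma^2 / (2\E_{\Dcal_i}\|x\|_{\ell_2})$, so Pollard's theorem applies just as before. You also rightly flag that the closedness of $\Scal^d(\gamma,1)$ in $\Scal^d(\gamma)$ (which the paper simply asserts) requires a short argument via continuity of $K\mapsto\vol_d(K)$ under radial convergence with the uniform bounds $\gamma\leqslant\rho_K\leqslant R_\gamma$, plus stability of the kernel condition; this is a small but real gap in the paper's write-up that your outline fills. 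Everything else in your plan matches the paper.
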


\begin{proof}[Proof of Theorem \ref{thm:strong-LLN}]

By Lemma \ref{lem:cont-of-F}, we have that the map $K \mapsto F(K;\Dcal_r,\Dcal_n)$ is $C M_P$-Lipschitz over $\Scal^d(\gamma,1)$ where $C = C(\gamma) := 1/\gamma^2$ and $M_P := \max_{i=r,n}\E_{\Dcal_i}\|x\|_{\ell_2} < \infty$. Now, consider the set of functions $\mathcal{G} := \{\|\cdot\|_K : K \in \Scal^d(\gamma,1)\}$. We will show that we can construct a finite set of functions that approximate $\|\cdot\|_K$ for any $K \in \Scal^d(\gamma,1)$ via an $\varepsilon$-covering argument. This will allow us to apply Theorem \ref{thm:USLLN}. First, note that $\Scal^d(\gamma,1)$ is closed (with respect to both the radial and Hausdorff topology) and bounded as a subset of $\Scal^d(\gamma)$. Thus, by Theorem \ref{thm:BST-radial}, we have that the space $(\Scal^d(\gamma,1), \delta)$ is sequentially compact, as every sequence has a convergent subsequence with limit in $\Scal^d(\gamma,1)$. On metric spaces, sequential compactness is equivalent to compactness. Since the space is compact, it is totally bounded, thus guaranteeing the existence of a finite $\varepsilon$-net for every $\varepsilon > 0$ as desired. For fixed $\varepsilon > 0$, construct a $\eta$-cover $\Scal_{\eta}$ of $\Scal^d(\gamma,1)$ such that $\sup_{K \in \Scal^d(\gamma,1)}\min_{L \in \Scal_{\eta}}\delta(K,L) \leqslant \eta$ where $\eta \leqslant \varepsilon/(2CM_P)$. Define the following sets of functions: \begin{align*}
    \mathcal{G}_{\eta,-} & := \{(\|\cdot\|_K - CM_P \eta)_+ : K \in \Scal_{\eta}\}\ \text{and}\ \mathcal{G}_{\eta,+} := \{\|\cdot\|_K + CM_P \eta: K \in \Scal_{\eta}\}.
\end{align*} Let $\|\cdot\|_K \in \mathcal{G}$ be arbitrary. Let $K_0 \in \Scal_{\eta}$ be such that $\delta(K,K_0) \leqslant \eta$. Define $\overline{f} = \|\cdot\|_{K_0} + CM_P \eta \in \mathcal{G}_{\eta,+}$ and $\underline{f} = (\|\cdot\|_{K_0}- CM_P \eta)_+ \in \mathcal{G}_{\eta,-}$. It follows that $\underline{f} \leqslant f \leqslant \overline{f}$. Moreover, by our choice of $\eta$, we have that $$\mathbb{E}_{\Dcal_i}[\overline{f} - \underline{f}] \leqslant 2CM_P \eta \leqslant \varepsilon\ \text{for each}\ i = r,n.$$ Thus, the conditions of Theorem \ref{thm:USLLN} are met and we get \begin{align*}
    \sup_{K \in \Scal^d(\gamma,1)}|F(K; \Dcal_r^N,\Dcal_n^N) - F(K; \Dcal_r,\Dcal_n)| & \leqslant \sup_{K \in \Scal^d(\gamma,1)}|\mathbb{E}_{\Dcal_r^N}[\|x\|_K] - \mathbb{E}_{\Dcal_r}[\|x\|_K| \\
    & + \sup_{K \in \Scal^d(\gamma,1)}|\mathbb{E}_{\Dcal_n^N}[\|x\|_K] - \mathbb{E}_{\Dcal_n}[\|x\|_K| \\
    & \rightarrow 0\ \text{as}\ N\rightarrow \infty\ \text{a.s.}
\end{align*}
\end{proof}

\subsection{Proof of Theorem \ref{thm:convergence-of-minimizers}}
\label{sec:final-convergence-of-minimizers}

We first establish the two requirements of $\Gamma$-convergence. For the first, fix $K \in \Scal^d(\gamma,1)$ and consider a sequence $K_N \rightarrow K$. Then we have that \begin{align}
   F(K; \Dcal_r,\Dcal_n) & = F(K; \Dcal_r,\Dcal_n) - F(K; \Dcal_r^N,\Dcal_n^N) + F(K; \Dcal_r^N,\Dcal_n^N) \nonumber\\
   & - F(K_N; \Dcal_r^N,\Dcal_n^N) + F(K_N; \Dcal_r^N,\Dcal_n^N) \nonumber\\
   & \leqslant |F(K; \Dcal_r,\Dcal_n) - F(K; \Dcal_r^N,\Dcal_n^N)| \nonumber \\
   & + |F(K; \Dcal_r^N,\Dcal_n^N) - F(K_N; \Dcal_r^N,\Dcal_n^N)| + F(K_N; \Dcal_r^N,\Dcal_n^N) \nonumber \\
   & \leqslant \sup_{K \in \Scal^d(\gamma,1)} |F(K; \Dcal_r,\Dcal_n) - F(K; \Dcal_r^N,\Dcal_n^N)| \nonumber\\
   & + |F(K; \Dcal_r^N,\Dcal_n^N) - F(K_N; \Dcal_r^N,\Dcal_n^N)| + F(K_N; \Dcal_r^N,\Dcal_n^N) \label{eq:liminf-bound}.
\end{align} By Theorem \ref{thm:strong-LLN}, we have that the first term goes to $0$ as $N$ goes to $\infty$ almost surely. We now show that $|F(K; \Dcal_r^N,\Dcal_n^N) - F(K_N; \Dcal_r^N,\Dcal_n^N)| \rightarrow 0$ as $N \rightarrow \infty$ almost surely. To see this, observe that by Lemma \ref{lem:cont-of-F} we have \begin{align*}
    |F(K; \Dcal_r^N,\Dcal_n^N) - F(K_N; \Dcal_r^N,\Dcal_n^N)| & \leqslant  \frac{\max_{i =r,n}\E_{\Dcal_i^N}[\|x\|_{\ell_2}]}{r^2}\delta(K,K_N).
\end{align*} Since $\E_{\Dcal_i^N}[\|x\|_{\ell_2}] \rightarrow \E_{\Dcal_i}[\|x\|_{\ell_2}] < \infty$ for each $i = r,n$ and $\delta(K,K_N) \rightarrow 0$ as $N \rightarrow \infty$ almost surely, we attain $|F(K; \Dcal_r^N,\Dcal_n^N) - F(K_N; \Dcal_r^N,\Dcal_n^N)| \rightarrow 0$. Thus, taking the limit inferior of both sides in equation \eqref{eq:liminf-bound} yields \begin{align*}
    F(K; \Dcal_r,\Dcal_n) \leqslant \liminf_{m \rightarrow\infty} F(K_N; \Dcal_r^N,\Dcal_n^N).
\end{align*} For the second requirement, we exhibit a realizing sequence so let $K \in \Scal^d(\gamma,1)$ be arbitrary. By the proof of Theorem \ref{thm:strong-LLN}, for any $N \geqslant 1$, there exists a finite $\frac{1}{N}$-net $\Scal_{1/N}$ of $\Scal^d(\gamma,1)$ in the radial metric $\delta$. Construct a sequence $(K_N) \subset \Scal^d(\gamma,1)$ such that for each $N$, $K_N \in \Scal_{1/N}$ and satisfies $\delta(K_N, K) \leqslant 1/N$. Hence this sequence satisfies $K_N \rightarrow K$ in the radial metric and $K_N \in \Scal^d(\gamma,1)$ for all $N \geqslant 1$. Hence we can apply Theorem \ref{thm:strong-LLN} to get \begin{align*}
    |F(K_N; \Dcal_r^N,\Dcal_n^N) - F(K; P)| & \leqslant |F(K_N; \Dcal_r^N,\Dcal_n^N) - F(K; \Dcal_r^N,\Dcal_n^N)| \\
    & + |F(K; \Dcal_r^N,\Dcal_n^N) - F(K; P)| \\
    & \rightarrow 0\ \text{as}\ N \rightarrow \infty\ \text{a.s.}
\end{align*} so $\lim_{N\rightarrow\infty} F(K_N; \Dcal_r^N,\Dcal_n^N) = F(K; \Dcal_r,\Dcal_n)$.

Now, we show that $(F(\cdot ; \Dcal_r^N,\Dcal_n^N))$ is equi-coercive on $\Scal^d(\gamma,1)$. In fact, this follows directly from Theorem \ref{thm:BST-radial}, the variant of Blaschke's Selection Theorem we proved for the radial metric. Thus equi-coerciveness of the family $(F(\cdot ; \Dcal_r^N,\Dcal_n^N))$ trivially holds over $\Scal^d(\gamma,1)$. As a result, applying Proposition \ref{prop:fund-thm-gamma-conv} to the family $F(\cdot ; \Dcal_r^N,\Dcal_n^N) : \Scal^d(\gamma,1) \rightarrow \R$, if we define the sequence of minimizers \begin{align*}
    K_N^* \in \argmin_{K \in \Scal^d(\gamma,1)} F(K; \Dcal_r^N,\Dcal_n^N)
\end{align*} we have that any limit point of $K_N^*$ converges to some \begin{align*}
    K_* \in \argmin_{K \in \Scal^d(\gamma,1)} F(K; \Dcal_r,\Dcal_n)
\end{align*} almost surely, as desired. The existence of a convergent subsequence of $(K_N^*)$ follows from $\Scal^d(\gamma,1)$ being a closed and bounded subset of $\Scal^d(\gamma)$ and Theorem \ref{thm:BST-radial}.

\subsection{Further examples for Section \ref{sec:main-results}} \label{appx:more-examples}

\begin{ex}[$\ell_1$- and $\ell_2$-ball example]
We additionally plot the underlying sets $L_r$ and $L_n^{\alpha}$ from the example in the main body in Figure \ref{fig:ell1-ell2-ball-example}.
\end{ex}

 \begin{figure}
    \centering
    \includegraphics[width=\textwidth]{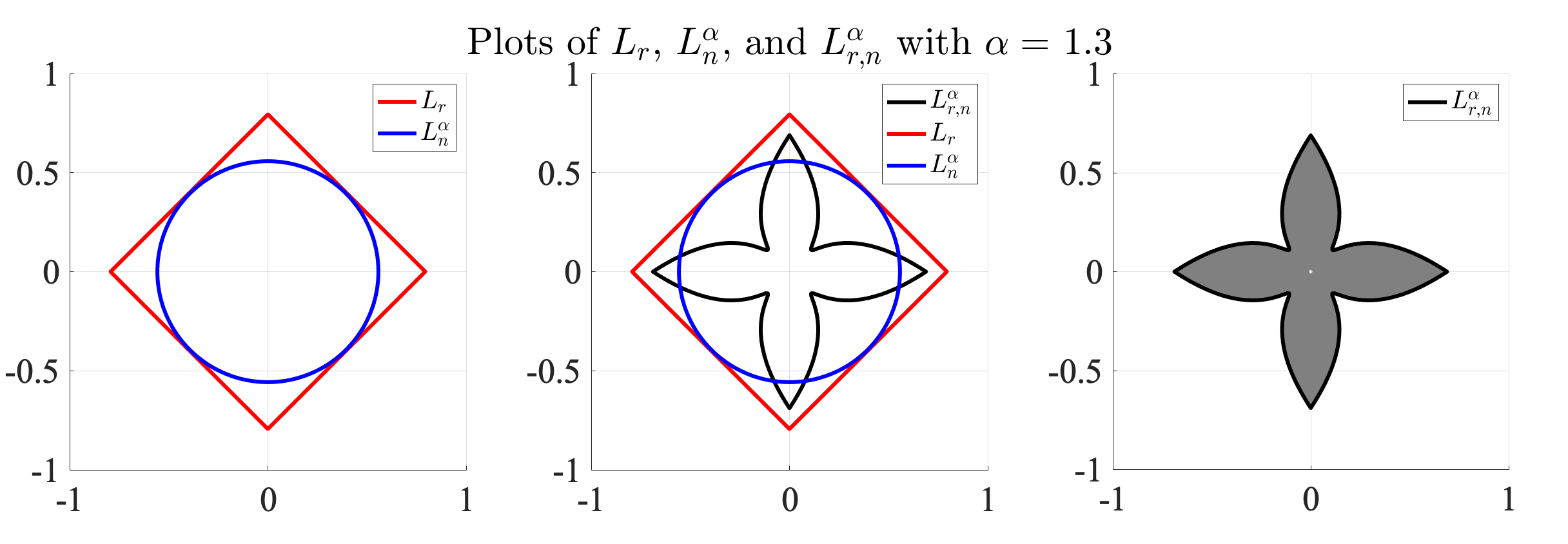}
    \includegraphics[width=\textwidth]{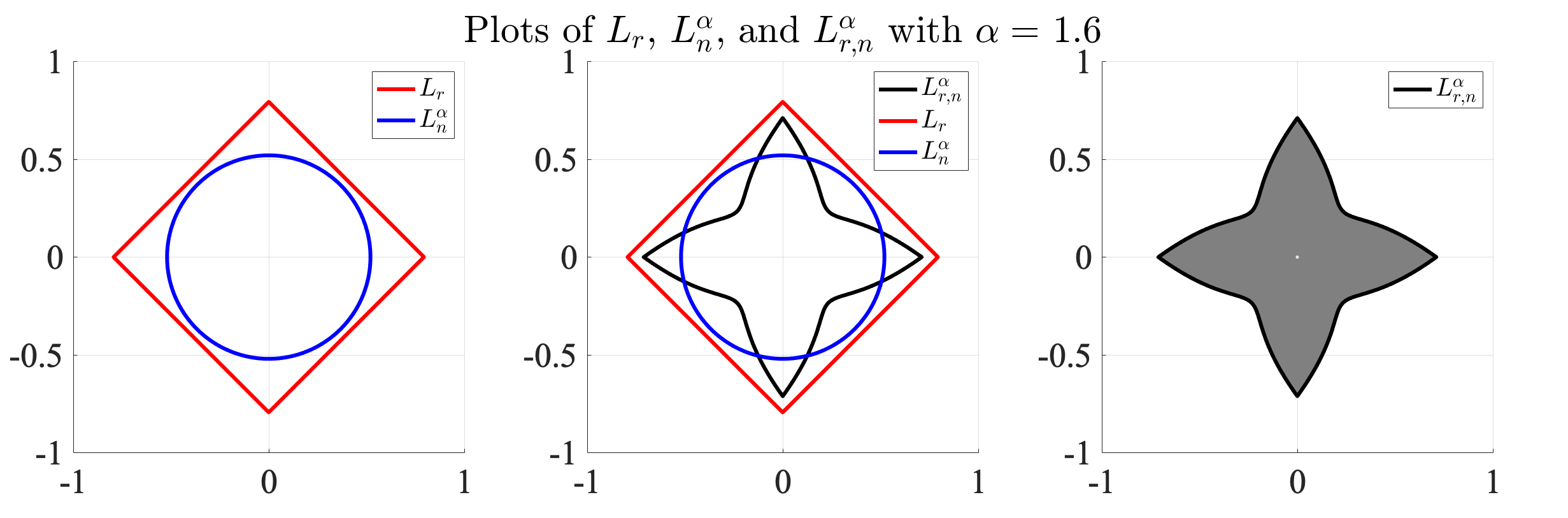}
    \includegraphics[width=\textwidth]{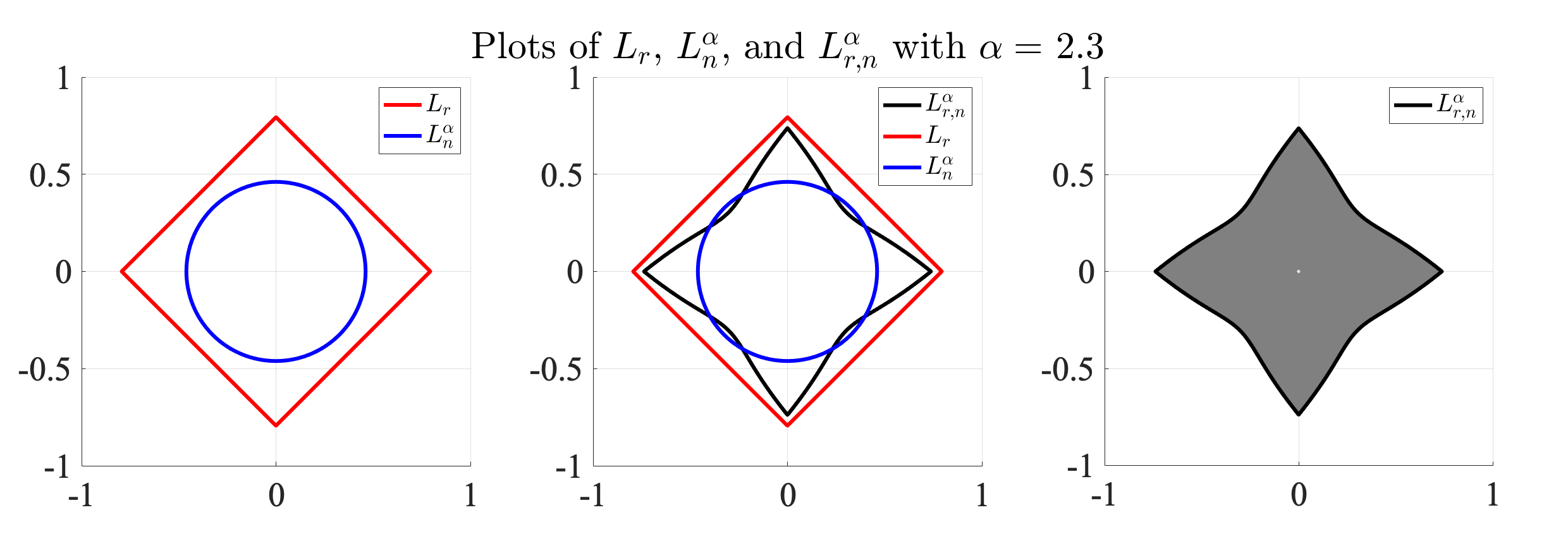}
    \caption{
   We plot the sets $L_r$, $L_n^{\alpha}$, and $L_{r,n}^{\alpha}$ for different values of $\alpha$: (Top) $\alpha = 1.3$, (Middle) $\alpha = 1.6$, and (Bottom) $\alpha = 2.3$. In each row, the left figure shows the boundaries of $L_r$ and $L_n^{\alpha}$, the middle figure additionally overlays the boundary of $L_{r,n}^{\alpha}$ and the right figure shows $L_{r,n}^{\alpha} := \{x \in \R^2: \|x\|_{L_{r,n}^{\alpha}} \leqslant 1\}$.}
    \label{fig:ell1-ell2-ball-example}
\end{figure}

\begin{ex}[$\ell_{\infty}$-ball and a Gaussian]

We consider the following case for $d = 2$, when our distributions $\Dcal_r$ and $\Dcal_n$ are given by a Gibbs density of the $\ell_{\infty}$-norm and a Gaussian with mean $0$ and covariance $\Sigma \in \R^{d \times d}:$ $$p_r(x) = \frac{1}{\Gamma(d+1)\vol_d(2B_{\ell_{\infty}})}e^{-\frac{1}{2}\|x\|_{\ell_{\infty}}}\ \text{and}\ p_n^{\alpha}(x) = \frac{1}{\sqrt{(2\pi)^d\det(\Sigma)}}e^{-\frac{1}{2}\langle x, \Sigma^{-1}x\rangle}.$$ Here, we will set $\Sigma = [0.1, 0.3; 0, 0.1] \in \R^{2 \times 2}$. The star bodies induced by $p_r$ and $p_n^{\alpha}$ are dilations of the $\ell_1$-ball and the ellipsoid induced by $\Sigma$, respectively. Denote these star bodies by $L_r$ and $L_n$, respectively. Then, the data-dependent star body $L_{r,n}$ is defined by $$\rho_{L_{r,n}^{\alpha}}(u) := \left(c_r\|x\|_{\ell_1}^{-(d+1)} - c_n\|\Sigma^{-1/2}x\|_{\ell_2}^{-(d+1)}\right)^{1/(d+1)}$$ where the constants $c_r := \int_0^{\infty} t^d \exp(-t)/(\Gamma(d+1)\vol_d(2B_{\ell_{\infty}}))\mathrm{d}t$ and $c_n := \frac{1}{\sqrt{(2\pi)^d\det(\Sigma)}}\int_0^{\infty} t^d \exp(-\frac{1}{2}t^2)\mathrm{d}t.$ We visualize the star bodies in Figure \ref{fig:ellInf-Gaussian-example}.

\end{ex}

\begin{ex}[Densities induced by star bodies]
    We give general examples of $L_{r,n}$ when the distributions $\Dcal_r$ and $\Dcal_n$ have densities induced by star bodies. If we assume the conditions of Theorem \ref{thm:main-adv-star-body-thm}, using the definition of $\rho_{p_r}$ and $\rho_{p_n}$, we have that radial function $\rho_{r,n}$ is of the form $$\rho_{r,n}(u)^{d+1} = \int_0^{\infty} t^d(p_r(tu) - p_n(tu))\mathrm{d}t.$$ Suppose for some functions $\psi_r,\psi_n$ such that $\int_0^{\infty} t^d \psi_i(t) \mathrm{d}t < \infty$ for $i = r,n$, we have $p_r(u) = \psi_r(\|x\|_{K_r})$ and $p_n(u) = \psi_n(\|x\|_{K_n})$ for two star bodies $K_r$ and $K_n$. Then we have that \begin{align*}
        \int_0^{\infty} t^d(p_r(tu) - p_n(tu))\mathrm{d}t = c(\psi_r)\|x\|_{K_r}^{-(d+1)} - c(\psi_n)\|x\|_{K_n}^{-(d+1)}
    \end{align*} where $c(\psi) := \int_0^{\infty} t^d \psi(t) \mathrm{d}t$. Note then we must have $$c(\psi_r) \|x\|_{K_r}^{-(d+1)} > c(\psi_n) \|x\|_{K_n}^{-(d+1)} \Leftrightarrow c(\psi_r)^{-1/(d+1)}\|x\|_{K_r} < c(\psi_n)^{-1/(d+1)} \|x\|_{K_n}.$$ This effectively requires the containment property \begin{align*}
       c(\psi_n)^{1/(d+1)}K_n \subset c(\psi_r)^{1/(d+1)}K_r.
    \end{align*}
\end{ex}
\begin{ex}[Exponential densities]
Suppose, for simplicity, that we have exponential densities. We need them to integrate to $1$, so, for example, letting $c_d := \Gamma(d+1)$, we know from \cite{Leongetal2022} that $\int_{\R^d}e^{-\|x\|_K}\mathrm{d}x = c_d\vol_d(K)$. Hence suppose our two distributions have densities $p_r(x) = e^{-\|x\|_{K_r}}/(c_d \vol_d(K_r))$ and $p_n(x) = e^{-\|x\|_{K_n}}/(c_d\vol_d(K_n)).$ Then $\int_{\R^d} p_r(x) \mathrm{d}x = \int_{\R^d} p_n(x) \mathrm{d}x = 1$. Here, $\psi_r(t) = e^{-t}/(c_d\vol_d(K_r))$ and $\psi_n(t) = e^{-t}/(c_d\vol_d(K_n))$.  Hence, in order for the distributions $\Dcal_r$ and $\Dcal_n$ to satisfy the assumptions of Theorem \ref{thm:main-adv-star-body-thm}, we require $$\frac{\rho_{K_r}(u)^{d+1}}{\vol_d(K_r)} > \frac{\rho_{K_n}(u)^{d+1}}{\vol_d(K_n)}\ \text{for all}\ u \in \mathbb{S}^{d-1}.$$ In the end, our star body will have radial function of the form $$\rho_{r,n}(u)^{d+1} = \tilde{c}_d \left(\frac{\rho_{K_r}(u)^{d+1}}{\vol_d(K_r)} - \frac{\rho_{K_n}(u)^{d+1}}{\vol_d(K_n)}\right)$$ for some constant $\tilde{c}_d$ that depends on the dimension.
\end{ex}

\begin{figure}
    \centering
    \includegraphics[width=\textwidth]{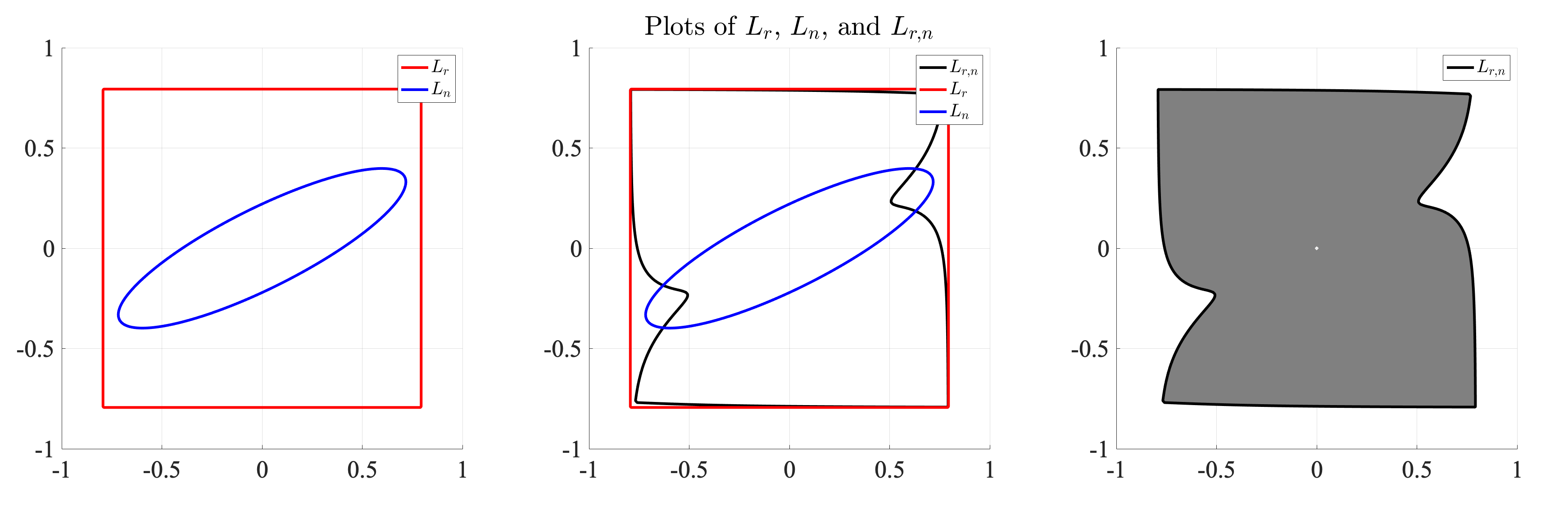}
    \caption{ We consider the example where $L_r$ is induced by a Gibbs density with the $\ell_{\infty}$-norm and $L_n$ is induced by a mean $0$ Gaussian distribution with covariance $\Sigma = [0.1, 0.3; 0, 0.1].$
   (Left) We show the boundaries of $L_r$ and $L_n$. (Middle) We additionally overlay the boundary of $L_{r,n}$. (Right) We show $L_{r,n} := \{x \in \R^2: \|x\|_{L_{r,n}} \leqslant 1\}$.}
    \label{fig:ellInf-Gaussian-example}
\end{figure}

\begin{ex}[Scaling star bodies]
Using the previous example, we can see how scaling a star body allows one to satisfy the conditions of the theorem. In particular, suppose we have exponential densities where $K_r = \alpha K_n$ where $\alpha > 1$. Then note that the above requirement for containment will be satisfied since for any $u \in \mathbb{S}^{d-1}$, \begin{align*}
    \rho_{K_r}(u)^{d+1}/\vol_d(K_r) & = \alpha^{d+1}\rho_{K_n}(u)^{d+1}/(\alpha^d \vol_d(K_n)) \\
    & = \alpha \rho_{K_n}(u)^{d+1}/\vol_d(K_n) \\
    & > \rho_{K_n}(u)^{d+1}/\vol_d(K_n).
\end{align*}
\end{ex}

The above example shows that for certain distributions, we are always able to scale them in such a way that they will always satisfy the assumptions of our Theorem. Below we give a more general result along these lines.
\begin{proposition} \label{prop:scaling-dists-sat-theorem}
    Suppose $K_r$ and $K_n$ are two star bodies in $\R^d$. Then, there exists constants $0 < m < M < \infty$ (depending on $K_r$ and $K_n$) such that if we set $$\alpha_* := \frac{1}{2} \cdot \frac{\vol_d(K_n)}{\vol_d(K_r)} \left(\frac{m}{M}\right)^{d+1} > 0$$ we have that $$\frac{\rho_{K_r}(u)^{d+1}}{\vol_d(K_r)} > \frac{\rho_{\alpha_* K_n}(u)^{d+1}}{\vol_d(\alpha_* K_n)},\ \forall u \in \mathbb{S}^{d-1}.$$ That is, for any two star bodies $K_r,K_n \in \Scal^d$, the assumptions of Theorem 2 will be satisfied with the distributions $\Dcal_r$ and $\Dcal_n$ with densities $$p_r(x) := \frac{e^{-\|x\|_{K_r}}}{c_d\vol_d(K_r)}\ \text{and}\ p_n(x) := \frac{e^{-\|x\|_{\alpha_* K_n}} }{c_d \vol_d(\alpha_* K_n)},$$ respectively.
\end{proposition}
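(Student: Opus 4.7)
The plan is to reduce the claim to the explicit containment condition computed in the preceding exponential-densities example, and then establish the required strict inequality by compactness. The distributions have density $p_r(x) = \psi_r(\|x\|_{K_r})$ and $p_n(x) = \psi_n(\|x\|_{\alpha_* K_n})$ with $\psi_r(t) = e^{-t}/(c_d \vol_d(K_r))$ and $\psi_n(t) = e^{-t}/(c_d \vol_d(\alpha_* K_n))$. As shown in the preceding example, the hypothesis $\rho_{p_r}(u) > \rho_{p_n}(u)$ of Theorem \ref{thm:main-adv-star-body-thm} is equivalent to
\[
\frac{\rho_{K_r}(u)^{d+1}}{\vol_d(K_r)} > \frac{\rho_{\alpha_* K_n}(u)^{d+1}}{\vol_d(\alpha_* K_n)} \quad \text{for all } u \in \mathbb{S}^{d-1},
\]
so the whole task is to pick $\alpha_*$ so this holds.

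Next, I would unwind the scaling identities $\rho_{\alpha_* K_n}(u) = \alpha_* \rho_{K_n}(u)$ and $\vol_d(\alpha_* K_n) = \alpha_*^{d}\vol_d(K_n)$, which transform the target inequality into the single scalar condition
\[
\alpha_* < \frac{\vol_d(K_n)}{\vol_d(K_r)} \left( \frac{\rho_{K_r}(u)}{\rho_{K_n}(u)} \right)^{d+1} \quad \text{for all } u \in \mathbb{S}^{d-1}.
\]
So it suffices to bound the right-hand side below by a positive constant uniformly in $u$, and then pick $\alpha_*$ strictly smaller than that bound.

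Now I would invoke compactness. Because $K_r$ and $K_n$ are star bodies, their radial functions $\rho_{K_r}, \rho_{K_n}$ are continuous and strictly positive on the compact sphere $\mathbb{S}^{d-1}$. Hence $m := \min_{u \in \mathbb{S}^{d-1}} \rho_{K_r}(u) > 0$ and $M := \max_{u \in \mathbb{S}^{d-1}} \rho_{K_n}(u) < \infty$ both exist, and $\rho_{K_r}(u)/\rho_{K_n}(u) \geqslant m/M$ pointwise. (If $m \geqslant M$ one can trivially enlarge $M$ so that the stated inequality $m < M$ from the proposition holds; the only thing that matters is $m/M > 0$.)

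Finally, with this choice of $m$ and $M$ and $\alpha_* := \tfrac{1}{2}\,\tfrac{\vol_d(K_n)}{\vol_d(K_r)}(m/M)^{d+1}$, the chain of inequalities
\[
\alpha_* = \tfrac{1}{2}\,\tfrac{\vol_d(K_n)}{\vol_d(K_r)}(m/M)^{d+1} < \tfrac{\vol_d(K_n)}{\vol_d(K_r)}(m/M)^{d+1} \leqslant \tfrac{\vol_d(K_n)}{\vol_d(K_r)}\left(\tfrac{\rho_{K_r}(u)}{\rho_{K_n}(u)}\right)^{d+1}
\]
holds uniformly in $u$, yielding the required strict inequality and hence the hypothesis of Theorem \ref{thm:main-adv-star-body-thm}. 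The proof is essentially routine; the only mild subtlety is making sure to invoke strict positivity of $\rho_{K_r}$ (which uses that $K_r$ is a star body, not merely star-shaped, so $0$ is in its interior), and inserting the factor $1/2$ to obtain strictness rather than merely $\leqslant$.
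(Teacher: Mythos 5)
Your proof is correct and follows essentially the same route as the paper: reduce to the scalar condition $\alpha_* < \frac{\vol_d(K_n)}{\vol_d(K_r)}(\rho_{K_r}(u)/\rho_{K_n}(u))^{d+1}$ via the scaling identities, then bound $\rho_{K_r}/\rho_{K_n}$ below by $m/M$ using the boundedness of star bodies with $0$ in the interior, and insert the factor $1/2$ for strictness. The only cosmetic difference is that the paper chooses $m,M$ so that $mB^d \subseteq K_r,K_n \subseteq MB^d$ simultaneously, whereas you take the sharper $m = \min_u \rho_{K_r}$ and $M = \max_u \rho_{K_n}$; both give the same conclusion.
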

\begin{proof}[Proof of Proposition \ref{prop:scaling-dists-sat-theorem}]

    Because $K_n$ and $K_r$ are star bodies, they are bounded with the origin in their interior. Hence, there exists constants $0 < m < M < \infty$ such that $m B^d \subseteq K_n,K_r \subseteq MB^d$. Note that in order for us to have $$\rho_{K_r}(u)^{d+1}/\vol_d(K_r) > \rho_{\alpha K_n}(u)^{d+1}/\vol_d(\alpha K_n) = \alpha \rho_{K_n}(u)^{d+1}/\vol_d(K_n)\ \forall\ u \in \mathbb{S}^{d-1}$$
 we require $$\alpha < \frac{\vol_d(K_n)}{\vol_d(K_r)} \left(\frac{\rho_{K_r}(u)}{\rho_{K_n}(u)}\right)^{d+1}\ \forall\ u \in \mathbb{S}^{d-1}.$$ But by our assumptions on $K_r$ and $K_n$, note that $\rho_{K_r} \geqslant \rho_{mB^d} = m\rho_{B_d} = m$ and also $\rho_{K_n} \leqslant \rho_{MB^d} = M\rho_{B^d}$ on the sphere. This follows by the monotonicity property of the radial function for star bodies: $K \subseteq L \Longleftrightarrow \rho_K \leqslant \rho_L$. Thus, using these bounds achieves the desired result.
    
\end{proof}

    

\subsection{A star body with weakly convex squared gauge} \label{sec:gauss-mix-weak-conv}

We will consider an example of a data-dependent, nonconvex star body such that its squared gauge is weakly convex for a sufficiently large parameter $\rho > 0$. For a parameter $\varepsilon \in (0,1)$, consider the following $2$-dimensional Gaussian mixture model $P_{\varepsilon} = \frac{1}{2} \Ncal(0, \Sigma_{\varepsilon,1}) + \frac{1}{2} \Ncal(0,\Sigma_{\varepsilon,2})$ where $\Sigma_{\varepsilon,1} := [1, 0 ; 0 ,\varepsilon] \in \R^{2 \times 2}$ and $\Sigma_{\varepsilon,2} := [\varepsilon, 0 ; 0 , 1] \in \R^{2 \times 2}$. Denote its density by $p_{\varepsilon}(x)$. One can show that $p_{\epsilon}$ induces a valid radial function via equation \eqref{eq:rho_P} given by \begin{align*}
    \rho_{P_{\varepsilon}}(u)  & = \left(\int_0^{\infty} r^2 p_{\varepsilon}(ru)\mathrm{d}u\right)^{1/3} = \left(c_{\varepsilon, 1}\|\Sigma_{\varepsilon,1}^{-1/2}u\|_{\ell_2}^{-3} + c_{\varepsilon, 2} \|\Sigma_{\varepsilon,2}^{-1/2}u\|_{\ell_2}^{-3}\right)^{1/3}
\end{align*} where $c_{\varepsilon, i} = \frac{1}{2}\det(2\pi\Sigma_{\varepsilon,i})^{-1/2}\left(\int_0^{\infty} t^2 e^{-t^2/2}\mathrm{d} t \right)$ for $i = 1,2$. Let $L_{P_{\epsilon}}$ denote the star body with radial function $\rho_{P_{\epsilon}}$. For a fixed $\epsilon \in (0,1)$, investigate the geometry of $M_{2,\rho} := L_{P_{\epsilon}} \hat{+}_2 \frac{\rho}{2} B^d$ for various values of $\rho$ in Figure \ref{fig:gauss-mix-weak-convex}. In this example, we set $\epsilon = 0.1$ As discussed in \cite{Leongetal2022}, $L_{P_{\epsilon}}$ can be described as the harmonic Blaschke linear combination between the star bodies induced by the distributions $\Ncal(0,\Sigma_{\epsilon,1})$ and $\Ncal(0,\Sigma_{\epsilon,2})$. These distributions induce ellipsoids that are concentrated on one of the axes, and their resulting harmonic Blaschke linear combination is the nonconvex star body shown in the left panel in Figure \ref{fig:gauss-mix-weak-convex}. We see that as $\rho$ increases, the resulting harmonic $2$-combination $M_{2,\rho}$ eventually becomes convex for a large enough parameter $\rho_*(\epsilon):= \rho_*$ that depends on $\epsilon$. By Proposition \ref{prop:weak-convexity}, this shows that the squared gauge $\|\cdot\|_{L_{P_{\epsilon}}}^2$ is $\rho_*$-weakly convex.

\begin{figure}
    \centering
    \includegraphics[width=\textwidth]{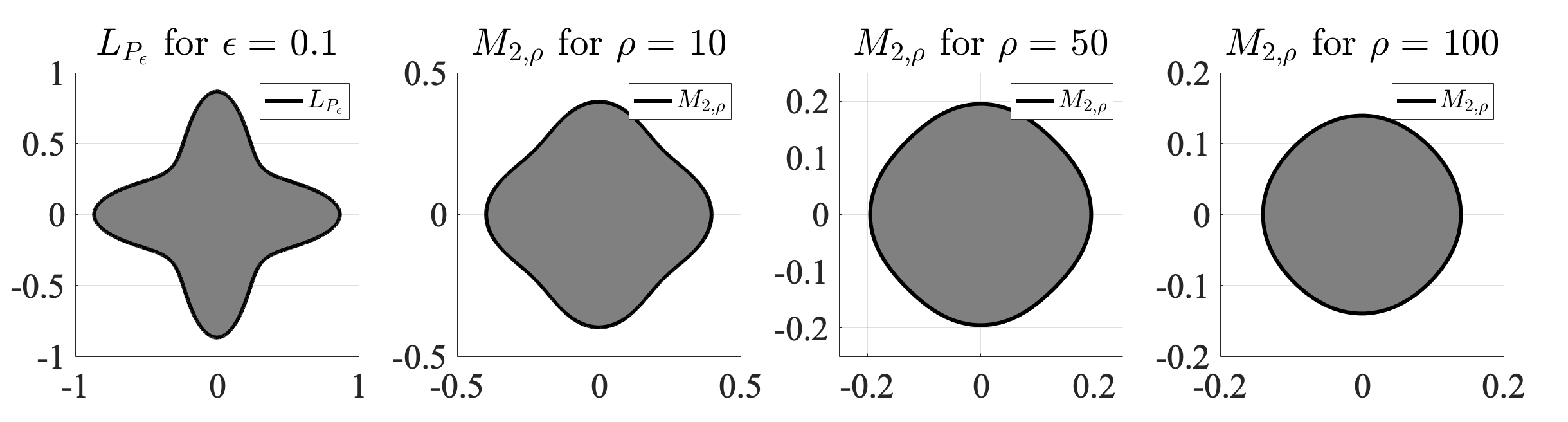}
    \caption{(First) The Gaussian mixture model in Section \ref{sec:gauss-mix-weak-conv} induces a star body $L_{P_{\epsilon}}$ Here, we set $\epsilon = 0.1$. Smaller values of $\epsilon$ induce a higher degree of nonconvexity in $L_{P_{\epsilon}}$. In the next three images, we show the resulting harmonic $2$-combination $M_{2,\rho} := L_{P_{\epsilon}} \hat{+}_2 \frac{\rho}{2}B^d$ for (Second) $\rho = 10$, (Third) $\rho = 50$, and (Fourth) $\rho = 100$. We see that for some value of $\rho_*(\epsilon) :=\rho_* > 10$, $M_{2,\rho_*}$ becomes convex. By Proposition \ref{prop:weak-convexity}, $x\mapsto \|x\|_{L_{P_{\epsilon}}}^2$ will be $\rho_*$-weakly convex.}
    \label{fig:gauss-mix-weak-convex}
\end{figure}

\end{document}